\theoremstyle{plain}
\newtheorem{theorem}{Theorem}[section]
\newtheorem{lemma}[theorem]{Lemma}
\newtheorem{corollary}[theorem]{Corollary}
\theoremstyle{definition}
\newtheorem{assumption}{Assumption}[section]
\newtheorem{definition}{Definition}[section]
\newtheorem{illustration}{Illustration}[section]
\theoremstyle{remark}
\newtheorem{remark}{Remark}[section]
\newcommand{\commentrg}[2][noinline]{{\todo[author=Rom, color=blue!40, fancyline, #1]{#2}}}
\newcommand{\commentcsg}[2][noinline]{{\todo[author=Cam, color=green!40, fancyline, #1]{#2}}}
\icmltitlerunning{UniRank: Unimodal Bandit Algorithm for Online Ranking}
\begin{document}

\twocolumn[
\icmltitle{UniRank: Unimodal Bandit Algorithm for Online Ranking}

% It is OKAY to include author information, even for blind
% submissions: the style file will automatically remove it for you
% unless you've provided the [accepted] option to the icml2022
% package.

% List of affiliations: The first argument should be a (short)
% identifier you will use later to specify author affiliations
% Academic affiliations should list Department, University, City, Region, Country
% Industry affiliations should list Company, City, Region, Country

% You can specify symbols, otherwise they are numbered in order.
% Ideally, you should not use this facility. Affiliations will be numbered
% in order of appearance and this is the preferred way.
\icmlsetsymbol{equal}{*}

\begin{icmlauthorlist}
\icmlauthor{Camille-Sovanneary Gauthier}{equal,Lv,irisa}
\icmlauthor{Romaric Gaudel}{equal,ecc}
\icmlauthor{Elisa Fromont}{univR,IUF,irisa}
\end{icmlauthorlist}

%\icmlaffiliation{ens}{ENS Rennes}
\icmlaffiliation{univR}{Univ. Rennes 1, F-35000 Rennes, France}
\icmlaffiliation{ecc}{Univ Rennes, Ensai, CNRS, CREST - UMR 9194, F-35000 Rennes, France}
\icmlaffiliation{Lv}{Louis Vuitton, F-75001 Paris, France}
\icmlaffiliation{irisa}{IRISA UMR 6074 / INRIA rba, F-35000 Rennes, France}
\icmlaffiliation{IUF}{ Institut Universitaire de France, M.E.S.R.I., F-75231 Paris}
\icmlcorrespondingauthor{Camille-Sovanneary Gauthier}{camille-sovanneary.gauthier@louisvuitton.com}

% You may provide any keywords that you
% find helpful for describing your paper; these are used to populate
% the "keywords" metadata in the PDF but will not be shown in the document
\icmlkeywords{Online Recommender System, Online Learning to Rank, Multiple-Play Bandit, Unimodal Bandit}

\vskip 0.3in
]

% this must go after the closing bracket ] following \twocolumn[ ...

% This command actually creates the footnote in the first column
% listing the affiliations and the copyright notice.
% The command takes one argument, which is text to display at the start of the footnote.
% The \icmlEqualContribution command is standard text for equal contribution.
% Remove it (just {}) if you do not need this facility.

%\printAffiliationsAndNotice{}  % leave blank if no need to mention equal contribution
\printAffiliationsAndNotice{\icmlEqualContribution} % otherwise use the standard text.

%You can have as much text here as you want. The main body must be at most $8$ pages long.
%For the final version, one more page can be added.

\begin{abstract}
We tackle, in the multiple-play bandit setting, the online ranking problem of assigning $L$ items to $K$ predefined positions on a web page in order to maximize the number of user clicks. We propose a generic algorithm, UniRank, that tackles state-of-the-art click models. The regret bound of this algorithm is a direct consequence of the unimodality-like property of the bandit setting with respect to
a graph where nodes are ordered sets of indistinguishable items.
The main contribution of UniRank is its $\OO\left(L/\Delta \log T\right)$ regret for $T$ consecutive assignments, where $\Delta$ relates to the reward-gap between two items.
This regret bound is based on the usually implicit condition that two items may not have the same attractiveness.
Experiments against state-of-the-art learning algorithms specialized or not for different click models, show that our method has better regret performance than other generic algorithms on real life and synthetic datasets.
\end{abstract}

\section{Introduction} 

We consider \emph{Online Recommendation Systems} (ORS) which choose $K$ relevant items among $L$ potential ones ($L \geq K$), such as songs, ads or movies to be displayed on a website. The user feedbacks, such as listening time, clicks, rates, etc., reflecting the user's appreciation with respect to each displayed item, are collected after each recommendation. As these feedbacks are only available for the items which were actually presented to the user, this setting corresponds to an instance of the \emph{multi-armed bandit problem} with \emph{semi-bandit feedback} \citep{Gai2012, Chen2013}.
Besides, some  displayed items are not looked at and lead to a negative feedback while they would be appreciated by the user.
%Besides, some displayed items are never looked at and thus do not get any feedback.
It raises a specific challenge related to ranking: the attention toward a displayed item is impacted by its position. Numerous approaches have been proposed to handle this partial attention \citep{Radlinski2008, Combes2015, Lagree2016} referred to as \emph{multiple-play bandit} or \emph{online learning to rank}. Several models of partial attention, a.k.a. click models, are considered in the state of the art \citep{Richardson2007,Craswell2008} and have been transposed to the bandit framework \citep{Kveton2015a,Komiyama2017}.
In current paper, in the same line as
%\citep{Zoghi2017, TopRank, BubbleRank}
\citep{Zoghi2017, TopRank}
we propose an algorithm which handles multiple state-of-the-art click models. 

The main contribution of our work is a new bandit algorithm, \ouralgo{}, dedicated to a generic online learning to rank setting.
\ouralgo{} takes inspiration from unimodal bandit algorithms \citep{Combes2014, GRAB}:
we implicitly consider a graph $\Gc$ on the partitions of the item-set such that the considered bandit setting is unimodal w.r.t. $\Gc$, and  \ouralgo{} chooses each recommendation in the $\Gc$-neighborhood of an elicited partition. Thanks to this restricted exploration, \ouralgo{} is the first algorithm dedicated to a generic setting with a $O(L/\Delta\log T)$  regret upper-bound, while previous state-of-the-art algorithms were suffering
a $O(LK/\Delta\log T)$ regret.
Note that this $O(L/\Delta\log T)$ upper-bound requires all items' attractiveness to be different, which is a usual assumption satisfied by real world applications.
%\commentelisa{To significantly ease the presentation of \ouralgo{}, we present, in this a paper, a slightly modified algorithm and theoretical proof which requires all items' attractiveness to be different. Note that this is a usual assumption satisfied by real world applications. We come back to the most general setting towards the end and provide details in the Appendix.}
Otherwise, \ouralgo{} recovers the $O(LK/\Delta\log T)$ bound.
From an application point of view, \ouralgo{} has several interesting features: it handles multiple state-of-the-art click models altogether; it is simple to implement and efficient in terms of computation time; it does not require the knowledge of the time horizon $T$; and it exhibits a smaller empirical regret than other generic algorithms by leaning on the \emph{different attractiveness} property when this property is satisfied.

\begin{table*}[t!]
 \caption{Required click model and upper-bound on cumulative regret for $T$ consecutive recommendations for some well-known recommender algorithms that chose $K$ items among $L$. The exact definition of $\Delta$ is specific to each algorithm. The symbol ${}^*$ means that Assumption 3.1$^*$, defined in \cref{sec:model}, is satisfied. $\kappav$ denotes the vector of observation-probabilities of PBM, and $\gamma$ is the degree of the graph explored by the unimodal bandit algorithm.}
    \label{tab:compare_reg}
    \vskip 0.15in  % obligatoire dans le template ICML
    \centering
    \begin{tabular}{lll}%p{3.4cm}}%{cccc}
    \toprule
        \textbf{Algorithm} &\textbf{Click model}& \textbf{Regret}\\
        \midrule
     
        \ouralgo{} (our algorithm)  &CM$^*$ & $\OO\left((L-K)/\Delta\log T\right)$\\
                                    &PBM$^*$, \dots & $\OO\left(L/\Delta\log T\right)$\\
                                    
                                    &PBM, CM, \dots & $\OO\left(LK/\Delta\log T\right)$\\

        TopRank \citep{TopRank} &PBM, CM, \dots & $\OO\left(LK/\Delta\log T\right)$ \\

        \midrule

        CascadeKL-UCB \citep{Kveton2015a} &CM & $\OO\left((L-K)/\Delta\log T\right)$ \\

        GRAB \citep{GRAB} &PBM$^*$& $\OO\left(L/\Delta\log T\right)$\\
        PB-MHB \citep{PBMHB} &PBM ($\kappav_1=0$)& unknown\\
        PBM-PIE \citep{Lagree2016} &PBM ($\kappav$ known)& $\OO\left((L-K)/\Delta\log T\right)$ \\
        
        SAM \citep{Sentenac2021} &Matching$^*$& $\OO\left(L\log L/\Delta\log T\right)$ \\

        OSUB \citep{Combes2014} &Unimodal& $\OO\left(\gamma/\Delta\log T\right)$\\
\bottomrule
\end{tabular}
\vskip -0.1in  % obligatoire dans le template ICML
 \end{table*}

As an indirect contribution, \ouralgo{} demonstrates that unimodality is a key tool to analyze the intrinsic complexity  of some combinatorial semi-bandit problems. We also demonstrate the flexibility of unimodal bandit algorithms and of the  proof of their regret upper-bound.
In particular, we extend \citep{Combes2014}'s analysis to a graph which is unimodal in a weaker sense: (i) \ouralgo{} takes its decisions given an optimistic index which is not based on the expected reward but on the probability for an item to be more attractive than another one and (ii) some sub-optimal nodes in the graph have no better node in their neighborhood.

The paper is organized as follows: \cref{sec:related} presents the related work and \cref{sec:olr} defines our target setting. We then introduce \ouralgo{} in \cref{sec:unirank}, and theoretical guarantees and empirical performance are presented respectively in \cref{sec:unirank-th} and \cref{sec:exp}. We conclude in \cref{sec:conclusion}.

\section{Related Work}\label{sec:related}

\Cref{tab:compare_reg} shows a comparison of the assumptions and the regret upper-bounds of the most related algorithms.

Several bandit algorithms are designed to handle the online learning to rank setting while the user follows one of the currently defined click models, namely the \emph{position based model} (PBM) \citep{Komiyama2015, Lagree2016, Komiyama2017, PBMHB, GRAB} or the \emph{cascading model} (CM) \citep{Kveton2015a,Kveton2015b,Combes2015,Zong2016,Katariya2016,Li2016,Cheung2019}. To the best of our knowledge, only the algorithms BatchRank \citep{Zoghi2017}, TopRank \citep{TopRank}, and BubbleRank \citep{BubbleRank} handle users following a general model covering both behaviors. These three algorithms exhibit a regret upper-bound for $T$ consecutive recommendations of at least $\OO(LK/\Delta\log T)$, where $\Delta$ depends on the \emph{attraction probability} $\thetav$ of items.

One ingredient of TopRank and BubbleRank is a statistic to compare two items independently of the position at which they are displayed. The algorithm we propose also makes use of this statistic.
However, we define an exploration strategy which does not require the knowledge of the time-horizon $T$ and which induces a $\OO(L/\Delta\log T)$ regret upper-bound
%of $\OO(LK/\Delta\log T)$
 when items have strictly different attractiveness.
%Note that a similar hypothesis has been used in \citep{Sentenac2021} to prove a $\OO(L\log L/\Delta\log T)$ regret-bound for a matching problem with an algorithm similar to TopRank.
%It's an open question whether TopRank's regret-bound could be reduced to $\OO(L\log L/\Delta\log T)$ or not.
%Overall, the unimodalty framework removes either a term $K$  or a  term $\log L$ from the regret upper-bound.

\ouralgo{} also builds upon an extension of the \emph{unimodal bandit} setting \citep{Combes2014,GRAB}. This setting assumes the knowledge of a graph $\Gc$ on the set $\Ac$ of bandit arms, such that the expected reward $\mu_\av$ associated to each arm $\av$ satisfies the following assumption:

\begin{assumption}[Unimodality\footnote{Usually, the unimodality is defined as the existence of a strictly increasing path from any sub-optimal arm to $\av^*$. \Cref{asp:unimodality} is equivalent and we use it in this paper as it directly relates to the shape of the theoretical analysis.}]\label{asp:unimodality}
There exists a unique arm $\av^*\in\Ac$ with highest expected reward, and
for any arm $\av\in\Ac$,
%
%\begin{itemize}
%    \item
    either (i) $\av=\av^*$,
%    \item
    or (ii) there exists $\av^+$ in the neighborhood $\Nc_\Gc\left(\av\right)$ of $\av$ given $\Gc$ such that $\mu_{\av^+} > \mu_\av$.
%\end{itemize}
\end{assumption}
The unimodal bandit algorithms are aware of $\Gc$, but ignore the weak order induced by the edges of $\Gc$. However, they rely on $\Gc$ to efficiently browse the arms up to the best one. Typically, the algorithm OSUB \citep{Combes2014} selects at each iteration $t$, an arm $\av(t)$ in the neighborhood $\Nc_\Gc\left(\tilde\av(t)\right)$ given $\Gc$ of the current best arm $\tilde\av(t)$ (a.k.a. the \emph{leader}). By restricting the exploration to this neighborhood, the regret suffered by OSUB scales in $\OO(\gamma/\Delta\log T)$, where $\gamma$ is the maximum degree of $\Gc$, to be compared with $\OO(|\Ac|/\Delta\log T)$ if the arms were independent. OSUB is designed for the standard bandit setting and makes use of estimators of the expected reward of arms to select the leader and chose the arm to play. In comparison, \ouralgo{} extends OSUB's idea to the semi-bandit setting, relies on a new variant of the unimodality property (see \cref{lem:unimodality}), selects the leader and the recommended arm based on other statistics, and does not require the ‘forced exploitation’  step which consists in recommending the leader each $\gamma$-th iteration.

Finally, \citep{GRAB} also builds upon the unimodality framework to solve a learning to rank problem in the bandit setting. However, the corresponding algorithm (GRAB) is dedicated to the PBM click model. % which is easier to handle. 
In this model, there is a natural statistic to look at to measure the quality of an item and a position: the probability of click when presenting item $i$ in position $k$. This statistic is independent of the items at other positions. Within a CM model, such a statistic does not exist. Instead, we refer to a statistic related to the relative attractiveness of items $i$ and $j$ (which we denote $\hat{s}_{i,j}$).
Secondly, in the PBM model, only weak assumptions are needed to guarantee a unique optimal recommendation, which is required to get the unimodality property. With a CM model, any recommendation including the $K$ best items leads to the optimal reward. To recover the unicity of the best arm, our algorithm is not targeting the best reward, but the best ranking of items (which implies the best reward).
However, when facing PBM model, our algorithm requires an assumption which is omitted by GRAB: the position are indexed from the most look-at position to the least one.

\section{Learning to Rank in a Semi-Bandit Setting}\label{sec:olr}
We consider the following \emph{online learning to rank (OLR) problem with clicks feedback}. 
For any integer $n$, let $[n]$ denote the set $\{1,\dots, n\}$. A recommendation $\av=(a_{1},\dots, a_{K})$ is a permutation  of $K$ distinct items among $L$, where $a_{k}$ is the item displayed at position $k$ and $\av([K]) \defeq \left\{a_k : k \in [K]\right\}$ is the set of all displayed items. We denote $\perm_K^L$ the set of such permutations. Throughout the paper, we will use the terms \emph{permutation} and \emph{recommendation} interchangeably to denote an element of $\perm_K^L$.

An instance of our OLR problem is a tuple $(L, K, \rho)$, where $L$ is the number of available items, $K\leqslant L$ is the number of positions to display the items, and $\rho$ is a function from $\perm_K^L\times[K]$ to $(0,1]$ such that 
for any recommendation $\av$ and position $k$, $\rho(\av,k)$ is the probability for a user to click on the item displayed at position $k$ when recommending $\av$.

A recommendation algorithm is only aware of $L$ and $K$ and has to deliver $T$ consecutive recommendations. At each iteration $t\in [T]$, the algorithm recommends a permutation $\av(t)$ and observes the values $c_{a_1(t)}(t), \dots, c_{a_K(t)}(t)$, where for any position $k$, $c_{a_k(t)}(t)$ equals $1$ whenever the user clicks on the item $a_k(t)$, and $0$ otherwise.
To keep notations simple, we also define $c_i(t)=0$ for each undisplayed item \scalebox{0.9}{$i\in[L]\setminus\av(t)([K])$}.
Note that the recommendation at time $t$ is only based on previous recommendations and observations.

While the individual clicks are observed, the reward of the algorithm is their sum $r(t)\defeq\sum_{k=1}^Kc_{a_k(t)}(t)=\sum_{i=1}^Lc_i(t)$. Let $\mu_\av$ denote the expectation of $r(t)$ when the recommendation is $\av(t)=\av$, and $\mu^*\defeq\max_{\av\in \perm_K^L}\mu_\av$ the highest expected reward. The aim of the algorithm is to minimize the \emph{cumulative regret}

\begin{equation}\label{eq:regret}
R(T) = \EE\left[T\mu^* - \sum_{t = 1}^T \mu_{\av(t)}\right],
\end{equation}
where the expectation is taken w.r.t. the recommendations from the algorithm and the clicks.
%Note that for any recommendation $\av\in\perm_K^L$, $\mu_\av=\sum_{k=1}^K \rho(\av,k)$.

\forlater{revoir formule de regret. pour être plus clair sur le fait qu'ici on parle du pseudo-regret (comme dans preuve et algo).}

\begin{illustration}[Click model PBM]
With the click model PBM, at each iteration $t$, the user looks at the position $k$ with probability $\kappa_k$, independently of the displayed items $\av(t)$. Moreover, whenever she observes the position $k$, she clicks on the corresponding item $a_k(t)$ with probability $\theta_{a_k(t)}$, independently of her other actions. Overall, the clicks $c_{a_k(t)}(t)$ are  independent and $\rho(\av(t),k)=\EE\left[c_{a_k(t)}(t)\right] = \kappa_k\theta_{a_k(t)}$.
Therefore, the optimal recommendation consists in displaying the item $i$ with the $\ell$-th highest value $\theta_i$ at the position $k$ with the $\ell$-th highest value $\theta_k$. Hence, if $\theta_1>\theta_2>\dots>\theta_K>\max_{K<k\leqslant L}\theta_k$ and $\kappa_1>\kappa_2>\dots>\kappa_K$, $\mu^*=\sum_{k=1}^K\kappa_k\theta_k$.
\end{illustration}

\subsection{Modeling Assumption}\label{sec:model}
Up to now, an OLR problem assumes two main properties: (i) a click at a position is a random variable only conditioned by the recommendation and the position, and (ii) the expectation of the corresponding distribution is fixed. We now introduce the three assumptions required by \ouralgo{}, which are fulfilled by PBM and CM click models.

\forlater{changer articulation du discours:\\
pref att. est def.\\
rem: existe avec PBM, CM et DCM.\\
hypothes 1 : existe et relier à optimaliter\\
hypothes 1* : hypothèse 1 avec strict\\
}

We first assume an order on items. 
Note that the existence of an order on item is a weak assumption by itself (we may chose any random order). The strength of this assumption derives from Assumptions \ref{asp:acyclicity} and \ref{asp:identifiability} which enforces this order to relate with expected reward.

\begin{assumption}[Strict weak order]\label{asp:strict_weak_order}
There exists a \emph{preferential attachment} function $g:[L] \to \RR$ on items, and for any pair of items $(i,j)$,
\begin{itemize}
    \item if $g(i) > g(j)$, item $i$ is said \emph{more attractive} than item $j$, which we denote $i\succ j$;
    \item if $g(i) = g(j)$, item $i$ is said \emph{equivalent} to item $j$, which we denote $i\sim j$.
\end{itemize}
\end{assumption}

\begin{illustration}[Strict weak order with PBM]
With PBM, a typical choice for the function $g$ is $g:i\mapsto\theta_i$.
%$g$ is $g(i)=\theta_i$ for each item $i$.
\end{illustration}

%This assumption is an implicit assumption of state-of-the-art click models (PBM, CM, \emph{Dependent Click Model}).
\Cref{asp:strict_weak_order} ensures the existence of a strict weak order $\succ$ on items: the items may be ranked by attractiveness, some items being equivalent. A typical example with $L=4$ would be $1 \succ 2 \sim 3 \succ 4$, meaning item 1 is more attractive than any other item, and items 2 and 3 are equivalent and more attractive than item 4. Such situation may also be represented with an ordered partition: $\left(\{1\}, \{2,3\}, \{4\}\right)$, where if the subset $E$ is listed before the subset $F$, then for any item $i\in E$ and any item $j \in F$, $i\succ j$. In the rest of the paper we will use either the preferential attachment function, or its associated strict weak order, or the corresponding ordered partition depending on the most appropriate representation.

The strongest results of the theoretical analysis require the slightly stronger assumption which ensures that the $K$ best items are uniquely defined. This assumption is equivalent to any of both hypothesis: (i) the order $\succ$ is total on the $K$ best items and the $K$-ith item is strictly more attractive than remaining $L-K$ items, and (ii) each of the $K$ first subsets of the ordered partition is composed of only one item.

\begin{assumption0s}[Strict total order on top-$K$ items]\label{asp:strict_total_order}
There exists a \emph{preferential attachment} function $g:[L] \to \RR$ and a permutation $\av\in\perm_K^L$ s.t. $g(a_1) > g(a_2) > \dots > g(a_K)$ and for any item $j\in[L]\setminus \av([K])$, $g(a_K) > g(j)$.
\end{assumption0s}

Our next assumption states that recommending the items according to the order $\succ$ associated to the preferential attachment leads to an optimal recommendation. 

\begin{definition}[Compatibility with a strict weak order]\label{def:compatibility}
Let $\succ$ be a strict weak order on items, and $\av$ be a recommendation. 
The recommendation $\av$ is \emph{compatible} with $\succ$ if
\begin{enumerate}
    \item for any position $k\in[K-1]$, either $a_k \succ a_{k+1}$ or $a_k \sim a_{k+1}$;
    \item for any item \scalebox{0.9}{$j\in[L]\setminus \av([K])$}, either $a_K \succ j$ or $a_K \sim j$.
\end{enumerate}
\end{definition}

\begin{assumption}[Optimal reward]\label{asp:acyclicity}
Any recommendation $\av$ compatible with $\succ$ is optimal, meaning $\mu_\av = \mu^*$.
\end{assumption}

\begin{illustration}[Optimal reward with PBM]
%In the context of CM (respectively PBM) click model, this assumption means that the $k$-th most attractive item has to be placed at the $k$-th looked-at position (resp. the $k$-th most observed position), which obviously leads to the highest expected number of clicks.
With PBM, if the positions are ranked by decreasing observation probabilities and $g(i)=\theta_i$, this assumption means that the recommendation placing the $k$-th most attractive item at the $k$-th most observed position is optimal, which indeed is true.
\end{illustration}

\Cref{asp:acyclicity} is of utmost importance for \ouralgo{} as it means that identifying a partition of the items coherent with $\succ$ is sufficient to ensure optimal recommendations.

\commentrg{version "variable aléatoire Bernoulli" $S_{i,j}(t) = \frac{1+c_i(t)-c_j(t)}{2}$ pour simplifier discours et notations par la suite ?}
Let us now consider the last assumption which regards the expectation of the random variable $c_i(t)-c_j(t)$.

\begin{definition}[Expected click difference]\label{def:statistic}
Let $i$ and $j$ be two items, and $\av$ a recommendation.
The \emph{probability of difference} and the \emph{expected click difference} between items $i$ and $j$ w.r.t. the recommendation $\av$ are respectively:
\begin{align*}
\tilde\delta_{i,j}(\av) &=
\PP_{\av(t)\sim \Uc(\{\av, (i,j)\circ\av\}) }\left[c_i(t) \neq c_j(t)\right] \text{ and}
\\
\tilde\Delta_{i,j}(\av) &=
\EE_{\av(t)\sim \Uc(\{\av, (i,j)\circ\av\}) }\left[c_i(t)-c_j(t) \mid c_i(t) \neq c_j(t)\right],
\end{align*}
where $(i,j)\circ\av$ is the permutation $\av$ such that items $i$ and $j$ have been swapped, and $\Uc(S)$ is the uniform distribution on the set $S$. If only $i$ (respectively $j$) belongs to $\av$, $(i,j)\circ\av$ is the permutation $\av$ where item $i$ is replaced by item $j$ (resp. $j$ by $i$). If neither $i$ nor $j$ belongs to $\av$, $(i,j)\circ\av$ is $\av$.
\end{definition}

\begin{assumption}[Order identifiability]\label{asp:identifiability}
The strict weak order $\succ$ on items is \emph{identifiable}, meaning that for any couple of items $(i, j)$ in $[L]^2$ s.t. $i\succ j$,
and for any recommendation $\av\in\perm_K^L$ s.t. at least one of both items is displayed,
$
\tilde\delta_{i,j}(\av) \neq 0
$
and
$\tilde\Delta_{i,j}(\av) > 0$ . 
\end{assumption}

\begin{illustration}[Expected click difference with PBM]
With the click model PBM, if the positions are ranked by decreasing observation probabilities, for any recommendation $\av$,
any position $k\in[K]$ and any position $\ell\in[L]\setminus\{k\}$,
denoting $i$ and $j$ the items at respective positions $k$ and $\ell$,  
$\tilde\delta_{i,j}(\av) = \frac{1}{2}\left(\theta_{i}+\theta_{j}\right)\left(\kappa_{k}+\kappa_{\ell}\right) - 2 \theta_i\theta_j\kappa_{k}\kappa_\ell$
and $\tilde\Delta_{i,j}(\av) =\frac{\theta_{i}-\theta_j}{\theta_i+\theta_j}d_{i,j}(\av)$, where $d_{i,j}(\av)>1$.
Therefore, if $g(i)=\theta_i$, \cref{asp:identifiability} is fulfilled.
\end{illustration}

The expected click difference reflects the fact that an item leads to more clicks than another independently of the position of both items (other items being unchanged).
Hence, \cref{asp:identifiability} points out that when an item is more attractive than another one, it has a higher probability to be clicked upon, all other things being equal. This assumption is natural and ensures that the order on items may be recovered from the expected click difference, which is observed.

Finally, the following lemma, proven in \cref{app:pbm_and_cm_are_unimodal}, states that both  CM and  PBM models fulfill our assumptions. 

\begin{lemma}\label{lem:pbm_and_cm_are_unimodal}
Let $(L, K, \rho)$ be an online learning to rank problem with users following CM or PBM model with positions ranked by decreasing observation probabilities. Then
 Assumptions~\ref{asp:strict_weak_order}, \ref{asp:acyclicity}, and \ref{asp:identifiability} are fulfilled. Furthermore, Assumption  3.1$^*$ is fulfilled if, for any top-$K$ item $i$ and any item $j$ in $[L]\setminus\{i\}$, either $i\succ j$ or $j \succ i$. 
\end{lemma}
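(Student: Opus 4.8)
The plan is to treat both models with the common preferential attachment function $g(i)=\theta_i$, so that \cref{asp:strict_weak_order} holds immediately and $i\succ j$ is equivalent to $\theta_i>\theta_j$. For PBM the three Illustrations already supply the content of Assumptions \ref{asp:acyclicity} and \ref{asp:identifiability}: with $\rho(\av,k)=\kappa_k\theta_{a_k}$ the expected reward is $\sum_k\kappa_k\theta_{a_k}$, which the rearrangement inequality maximizes exactly when the most attractive items are paired with the most-observed positions, i.e. by a recommendation compatible with $\succ$; and the displayed closed form $\tilde\Delta_{i,j}(\av)=\frac{\theta_i-\theta_j}{\theta_i+\theta_j}d_{i,j}(\av)$ with $d_{i,j}(\av)>1$, together with $\tilde\delta_{i,j}(\av)>0$, gives identifiability whenever $\theta_i>\theta_j$. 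The substance of the proof is therefore the CM case, which I would treat from scratch.

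For CM and \cref{asp:acyclicity}, the key observation is that the user clicks at most once, so the expected reward equals the probability of at least one click, $\mu_\av=1-\prod_{k=1}^K(1-\theta_{a_k})$, which depends only on the displayed set $\av([K])$ and not on its order. This quantity is maximized exactly by displaying the $K$ items of largest $\theta$. Since any recommendation compatible with $\succ$ displays the top-$K$ items up to swaps of equivalent items (which leave the $\theta$-values unchanged), it attains $\mu^*$, establishing \cref{asp:acyclicity}.

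For CM and \cref{asp:identifiability}, I would compute $\tilde\delta_{i,j}(\av)$ and $\tilde\Delta_{i,j}(\av)$ by conditioning on how far the cascade proceeds. Writing $P$ for the probability that the scan reaches the first of the two relevant positions without an earlier click and $Q$ for the probability of passing the positions strictly between them without a click, and using that $c_i=1$ and $c_j=1$ are mutually exclusive (so $\PP[c_i\neq c_j]=\PP[c_i=1]+\PP[c_j=1]$ and $\EE[(c_i-c_j)\ind(c_i\neq c_j)]=\EE[c_i-c_j]$), a short computation over the uniform mixture of $\av$ and $(i,j)\circ\av$ gives, when both items are displayed, $\tilde\delta_{i,j}(\av)=\frac{P}{2}\left(\theta_i+\theta_j+Q(\theta_i+\theta_j-2\theta_i\theta_j)\right)>0$ and
\[
\tilde\Delta_{i,j}(\av)=\frac{(\theta_i-\theta_j)(1+Q)}{\theta_i+\theta_j+Q(\theta_i+\theta_j-2\theta_i\theta_j)},
\]
which is positive whenever $\theta_i>\theta_j$. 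The case where only one of $i,j$ is displayed reduces to a single replacement and yields $\tilde\Delta_{i,j}(\av)=\frac{\theta_i-\theta_j}{\theta_i+\theta_j}>0$, while the case where neither is displayed is excluded by the hypothesis that at least one item is shown. This establishes \cref{asp:identifiability}.

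Finally, for Assumption 3.1$^*$, the added hypothesis forbids ties involving any top-$K$ item, so the $K$ most attractive items receive pairwise-distinct values of $g$ and each strictly exceeds the value of every remaining item; ordering them by decreasing $g$ produces the permutation required by Assumption 3.1$^*$. The main obstacle is the CM identifiability computation: because the cascade couples all positions, the click probabilities must be tracked through the conditional reach probabilities $P$ and $Q$, and the swap defining $(i,j)\circ\av$ must be handled so that the shared factor $Q$ cancels cleanly in the ratio defining $\tilde\Delta_{i,j}(\av)$.
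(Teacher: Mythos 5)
Your proof is correct and takes essentially the same route as the paper's: the choice $g(i)=\theta_i$, the rearrangement argument for \cref{asp:acyclicity}, and the evaluation of $\tilde\Delta_{i,j}(\av)$ as the ratio $\frac{\PP(c_i=1,c_j=0)-\PP(c_i=0,c_j=1)}{\PP(c_i=1,c_j=0)+\PP(c_i=0,c_j=1)}$ over the uniform mixture of $\av$ and $(i,j)\circ\av$, with the same case split on which of $i,j$ is displayed; your CM expression in terms of the reach probabilities $P$ and $Q$ agrees exactly with the paper's computation (and tracks the swapped item at the earlier position more carefully than the paper's prefix-product shorthand). The only shortcut is that you cite the PBM closed forms from the main-text Illustration where the paper's appendix proof re-derives them via the two-line independence computation, which is harmless since that computation is immediate.
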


\begin{algorithm}[t!]
\caption{\ouralgo{}: \ouralgolong{}}\label{alg:unicrank}
\begin{algorithmic}[1]
%\Require 
\REQUIRE number of items $L$, number of positions $K$
\FOR{$t =  1, 2, \dots$}
    \STATE compute the leader partition $\tilde{\Pm}(t)$
    \STATE $\Pm(t) \gets \argmax_{\Pm\in \left\{\tilde{\Pm}(t)\right\} \cup \Nc\left(\tilde{\Pm}(t)\right)} b_{\Pm}(t)$
    \STATE draw the recommendation $\av(t)$ uniformly at random in $\Ac\left(\Pm(t)\right)$  
    \STATE observe the clicks vector $\cv(t)$
\ENDFOR
\end{algorithmic}
\end{algorithm}

\section{\ouralgo{} Algorithm}\label{sec:unirank}

Our algorithm, \ouralgo{}, is detailed in \cref{alg:unicrank}, and  \cref{fig:one_iteration} unfolds one of its iterations.
% which uses the notations introduced in current section.
 This algorithm takes inspiration from the unimodal bandit algorithm OSUB \citep{Combes2014} by selecting at each iteration $t$ an \emph{arm to play} $\Pm(t)$ in the neighborhood of the current best one $\tilde\Pm(t)$ (a.k.a. the \emph{leader}).
However, \ouralgo{}'s arms are not recommendations but sets of recommendations represented by ordered partitions. Hence, the recommendation $\av(t)$ is drawn uniformly at random in the subset $\Ac(\Pm(t))$ of recommendations compatible with $\Pm(t)$.
% (Line \cref{line:reco}).

%\ouralgo{} also differs from OSUB in two aspects. First,  the expected reward $\mu_\av$ is not used by \ouralgo{} to chose the leader and the played partition.
%Secondly, \ouralgo{} does not require the usual `forced exploitation' step which consists in recommending the leader each $\gamma$-th iteration, where $\gamma$ is the maximum degree of a graph.

Let us now first define the notations used by \ouralgo{} and then present its concrete behaviour.
%The reader may also refer to Appendix \cref{app:example_iteration} for an example of an iteration of \ouralgo{}.

\begin{figure}[t]
\centering
%\vskip 0.2in
\pgfmathsetmacro{\nodeDistance}{0.2}
\pgfmathsetmacro{\nodeDistancex}{1}
\begin{tikzpicture}[align=center,node distance=\nodeDistance and \nodeDistancex, scale=0.8]
    \node (1) {\footnotesize 1};
    \node (2) [below=of 1] {\footnotesize 2};
    \node (3) [right=of 1] {\footnotesize 3};
    \node (4) [right=of 3] {\footnotesize 4};
    \node (5) [below=of 4] {\footnotesize 5};
    \node (6) [right=of 4] {\footnotesize 6};
    \node (7) [below=of 6] {\footnotesize 7};
    
    \draw[thick, ->] (1) -- (3);
    \draw[thick, ->] (3) -- (4);
    \draw[thick, ->] (3) -- (5);
    \draw[thick, ->] (4) -- (6);
    \draw[thick, ->] (5) -- (7);
    \draw[thick, ->] (7) -- (6);
    
    \draw[dashed] ($(1)!0.5!(2)$) ellipse (1*\nodeDistance cm and 3.2*\nodeDistance cm);
    \draw[dashed] (3) ellipse (1*\nodeDistance cm and 1.2*\nodeDistance cm);
    \draw[dashed] ($(4)!0.5!(5)$) ellipse (1*\nodeDistance cm and 3.2*\nodeDistance cm);
    \draw[dashed] ($(6)!0.5!(7)$) ellipse (1*\nodeDistance cm and 3.2*\nodeDistance cm);
    
    %\node (L1) [below=of 2] {$\tilde{P}_1 = \{1,2\}$};
    %\node (L3) [below=of 5] {$\tilde{P}_3 = \{4,5\}$};
    %\node (L2) at ($(L1)!0.5!(L3)$) {$\tilde{P}_2 = \{3\}$};
    %\node (L4) [below=of 7] {$\tilde{P}_4 = \{6,7\}$};

    %\draw[dotted, rotate=15] ($(2)!0.5!(3)+(-0.2,+0.2)$) ellipse (\nodeDistancex cm and 2.2*\nodeDistance cm);
    %\draw[dotted, rotate=-15] ($(5)!0.5!(6)+(-0.2,-0.2)$) ellipse (\nodeDistancex cm and 2.2*\nodeDistance cm);
    
    %\node (P1root) at ($(L1)!0.5!(L2)+(0,-0.2)$) {};
    %\node (P1) [below=of P1root] {$P_1 = \{1,2,3\}$};
    %\node (P2root) at ($(L3)!0.5!(L4)+(0,-0.2)$) {};
    %\node (P2) [below=of P2root] {$P_2 = \{4,5,7\}$};

    %\draw[thick, ->] (L1) -- (P1);
    %\draw[thick, ->] (L2) -- (P1);
    %\draw[thick, ->] (L3) -- (P2);
    %\draw[thick, dotted, ->] (L4) -- (P2);

    %\node at ($(L1)!0.5!(P1)+(-0.7,0)$) {merge};  
    %\node at ($(L4)!0.5!(P2)+(0.7,0)$) {pick 1 or less};  

    %\node (recroot) at ($(P1)!0.5!(P2)$) {};
    %\node (rec) [below=of recroot] {$\av = (1,3,2,7)$};
\end{tikzpicture}

    {%
    %\small
    %\footnotesize
    \scriptsize
    \begin{align*}
    \tilde{\Pm} &= \left(\tilde{P}_1, \tilde{P}_2, \tilde{P}_3, \tilde{P}_4\right) =
    \left(\{1,2\}, \{3\}, \{4,5\}, \{6,7\}\right)\\
        \Nc(\tilde{\Pm}) &
        = \{\left(\{1,2, 3\}, \{4,5\}, \{6,7\}\right), \qquad\text{\% merge of }\tilde{P}_1 \text{ and } \tilde{P}_2\\
        &\qquad\left(\{1,2\}, \{3, 4,5\}, \{6,7\}\right),\qquad\text{\% merge of }\tilde{P}_2 \text{ and } \tilde{P}_3\\
        &\qquad\left(\{1,2\}, \{3\}, \{4,5, 6\}, \{7\}\right), \quad\text{\% try } 6\\
        &\qquad\left(\{1,2\}, \{3\}, \{4,5, 7\}, \{6\}\right)\} \quad\text{\% try } 7\\
    \Pm &= \left(\{1,2\}, \{3,4,5\}, \{6,7\}\right)\qquad\text{\% the 2$^{nd}$ neighbor wins}\\
    \av &= (2,1,3,5)
    \end{align*}%
    }%
    \caption{One iteration of \ouralgo{} with $L=7$ items and $K=4$ positions ($t$ is omitted for clarity). Each arrow $i\to j$ in the top graph on items means the statistic $\hat{s}_{i,j}$ is non-negative. With these values, the leader partition (represented with dashed ellipses) is $\tilde{\Pm} = \left(\{1,2\}, \{3\}, \{4,5\}, \{6,7\}\right)$, where $\tilde{P}_4=\{6,7\}$ gathers remaining items as the 3 first partitions contain more than $K$ items. 
    Then, we assume that $\max(\bar{\bar s}_{3,1},\bar{\bar s}_{3,2}) > \max(
\max(\bar{\bar s}_{4,3},\bar{\bar s}_{5,3}),\quad
\max(\bar{\bar s}_{6,4},\bar{\bar s}_{6,5}),\quad
\max(\bar{\bar s}_{7,4},\bar{\bar s}_{7,5}),\quad
    0)$.
    Therefore, \ouralgo{} plays the optimistic partition $\Pm=\left(\{1,2\}, \{3,4,5\}, \{6,7\}\right)$. Finally the recommendation $\av$ is obtained by concatenating a random permutation of $P_1=\{1,2\}$ with a random permutation of 2 items from $P_2=\{3,4,5\}$.
    }
    \label{fig:one_iteration}
    %\vskip -0.2in
\end{figure}
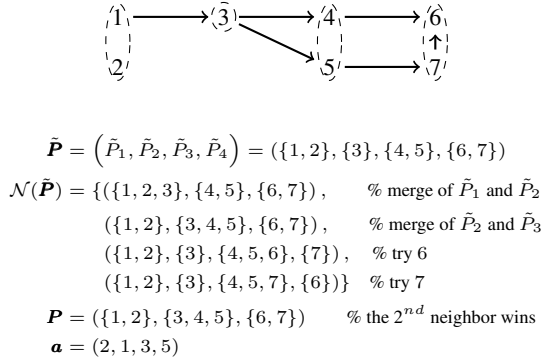

\paragraph{Statistic $\hat{s}_{i,j}(t)$}
\ouralgo{}'s choices are based on the statistic $\hat{s}_{i,j}(t)$ and the optimistic estimator of its expected value: the Kullback-Leibler-based one denoted $\bar{\bar s}_{i,j}(t)$. $\hat{s}_{i,j}(t)$ is the average value of $c_i(s)-c_j(s)$ for $s$ in $[t-1]$, where we restrict ourselves to iterations at which items $i$ and $j$ are in the same subset of the played partition \scalebox{0.9}{$\Pm(s)= \left(P_1(s), \dots, P_{d(s)}(s)\right)$}, and \scalebox{0.9}{$c_i(s)\neq c_j(s)$}. Specifically, 
$$\hat{s}_{i,j}(t) \defeq \frac{1}{T_{i,j}(t)}\sum_{s = 1}^{t-1} O_{i,j}(s)\left(c_i(s) - c_j(s)\right),$$
where \scalebox{0.9}{$O_{i,j}(s)\defeq\ind\left\{\exists c, (i,j)\in P_c(s)^2\right\}\ind\{c_i(s) \neq c_j(s)\}$} denotes that the difference between items $i$ and $j$ is observable at iteration $s$, $T_{i,j}(t) \defeq \sum_{s = 1}^{t-1} O_{i,j}(s)$,
and $\hat{s}_{i,j}(t)\defeq0$  when $T_{i,j}(t)=0$.
Note that $\hat{s}_{i,j}(t)$ is antisymmetric ($\hat{s}_{i,j}(t)=-\hat{s}_{j,i}(t)$) and $\hat{s}_{i,j}(t)>0$ (equivalent to $\hat{s}_{j,i}(t)<0$) indicates that $i$ is probably more attractive than $j$.

The statistics $\hat{s}_{i,j}(t)$ are paired with their respective optimistic \emph{indices} $$\bar{\bar s}_{i,j}(t) \defeq 2*f\left(\frac{1+\hat{s}_{i,j}(t)}{2}, T_{i,j}(t), \tilde{t}_{\tilde{\Pm}(t)}(t)\right)-1,$$
where
$f$ is a function from $[0,1]\times\NN\times\NN$ to $[0,1]$ and 
$f(\hat\mu, N, t) \defeq \sup \{\mu\in[\hat\mu, 1]: N\times\KL(\hat\mu, \mu) \leq \log (t) + 3 \log(\log (t))\},$
with 
$\KL(p,q) \defeq p \log \frac{p}{q} + (1-p)\log \frac{1 - p}{1 - q}$
the \emph{Kullback-Leibler divergence} (KL) from a Bernoulli distribution of mean $p$ to a Bernoulli distribution of mean $q$;
$f(\hat\mu, N, t) \defeq 0$ when $\hat\mu=1$, $N=0$, or $t=0$;
and $\tilde{t}_{\tilde{\Pm}(t)}(t)$ is the number of iterations the partition at which $\tilde{\Pm}$ has previously been the leader.
This optimistic index is the one used for KL-based bandit algorithms, after a rescaling of $\hat{s}_{i,j}(t)$ to the interval $[0,1]$.
Note that, unlike $\hat{s}_{i,j}(t)$, $\bar{\bar s}_{i,j}(t)$ is not antisymmetric, and $\bar{\bar s}_{j,i}(t)\geqslant0$ while $\hat{s}_{i,j}(t)>0$ indicates that it is unclear whether $i$ is more attractive than $j$ or not.

\paragraph{Leader Elicitation}
At each iteration, \ouralgo{} first builds a partition $\tilde{\Pm}(t) = (\tilde{P}_1(t), \dots, \tilde{P}_{\tilde{d}}(t))$ using \cref{alg:leader_elicitation} (see. \cref{app:algo_elicitation}). This partition is \emph{coherent} with $\hat{s}_{i,j}(t)$, meaning that for any couple of items $(i, j)$ in $[L]^2$, if $\hat{s}_{i,j}(t) > 0$ then either $i$ belongs to a subset $\tilde{P}_c(t)$ ranked before the subset of $j$, or there exists a cycle $(i_1, i_2, \dots, i_N)$ such that $i_1=i_N=i$, $i_2=j$, and for any $n\in[N-1]$, $\hat{s}_{i_n,i_{n+1}}(t) > 0$.
We also ensure that the $\tilde{d}-1$ first subsets of $\tilde{\Pm}(t)$ gather at least $K$ items:  
$\sum_{c=1}^{\tilde{d}-2} |\tilde{P}_c(t)| < K \leqslant \sum_{c=1}^{\tilde{d}-1}|\tilde{P}_c(t)|$.
This means that the items in $\tilde{P}_{\tilde{d}}(t)$ are the ones which are never displayed by the recommendations in $\Ac (\tilde{\Pm}(t))$. Note that the subset $\tilde{P}_{\tilde{d}}(t)$ may be empty.

The partition $\tilde{\Pm}(t)$ is built by repeating the process of (i) identifying the smallest subset of items dominating all other items (meaning the items $i$ for which $\hat{s}_{i,j}(t) > 0$ for any remaining item $j$), and (ii) removing this subset. A special care is taken to  gather in the same subset remaining items as soon as the first subsets contain more than $K$ items.

\paragraph{Optimistic Partition Elicitation}
The partition $\tilde{\Pm}(t)$ plays the role of leader, meaning that at each iteration, \ouralgo{} solves an exploration-exploitation dilemma and picks either $\tilde{\Pm}(t)$ or a permutation $\Pm(t)$ in the neighborhood $\Nc(\tilde{\Pm}(t))$ of $\tilde{\Pm}(t)$, where $\Nc(\tilde{\Pm})\defeq$
{\scriptsize
\begin{align*}
&\left\{\left(\tilde{P}_1, \dots,\tilde{P}_{c-1}, \tilde{P}_c \cup \tilde{P}_{c+1},\tilde{P}_{c+2},\dots \tilde{P}_{\tilde{d}}\right) : c \in [\tilde{d}-2]\right\}\\
&\cup
\left\{\left(\tilde{P}_1, \dots,\tilde{P}_{\tilde{d}-2},\tilde{P}_{\tilde{d}-1}\cup\{j\}, \tilde{P}_{\tilde{d}}\setminus\{j\}\right) : j \in \tilde{P}_{\tilde{d}}\right\}.
\end{align*}%
}%
This neighborhood results either (i) from the merge of two consecutive subsets $\tilde{P}_c(t)$ and $\tilde{P}_{c+1}(t)$ of the partition $\tilde{\Pm}(t)$, or (ii) from the addition to $\tilde{P}_{\tilde{d}-1}(t)$ of an item $j$ from the last subset.
For each neighbor $\Pm$ of type (i), the optimistic index $b_{\Pm}(t)$ is $\max_{(i,j)\in\tilde{P}_c(t) \times \tilde{P}_{c+1}(t)}\bar{\bar s}_{j,i}(t)$ to reflect whether or not at least one of the items in $\tilde{P}_{c+1}(t)$ may potentially be more attractive than one of the items in $\tilde{P}_{c}(t)$.
Similarly, for each neighbor $\Pm$ of type (ii), the optimistic index $b_{\Pm}(t)$ is $\max_{i\in\tilde{P}_{\tilde{d}-1}(t)}\bar{\bar s}_{j,i}(t)$. \commentcsg{comme j est utiliser entre temps pour la definition de $b_P$ pourn une partition (i) ca se comprend mais je ne sais s'il ne vaut pas mieux séparer directement en (i) def du voisin par permutation et def de $\bar{\bar{s}}$ puis idem avec le voisin de type ii}
%In the end, the recommendation is chosen uniformly at random in $\Ac\left(\Pm(t)\right)$ (Line \cref{line:reco}).

\begin{remark}[Recommendation chosen at random]
Taking a random permutation is required to control the statistic $\hat{s}_{i,j}(t)$. Indeed, the theoretical analysis requires the probability for $i$ to be ranked before $j$ in the recommendation to be even. Overall, the aim is to identify a partition $\Pm^*$ such that any permutation in $\Ac\left(\Pm^*\right)$ is compatible with the unknown strict weak order on items.
\end{remark}

\section{Theoretical Analysis}\label{sec:unirank-th}
The proof of the upper-bound on the regret of \ouralgo{} follows a similar path as the proof of OSUB \citep{Combes2014}: (i) apply a standard bandit analysis to control the regret under the condition that the leader $\tilde\Pm(t)$ is an optimal partition, and (ii) upper-bound by $\OO(\log\log T)$ the expected number of iterations such that $\tilde\Pm(t)$ is not an optimal partition.
However, both steps differ from \citep{Combes2014}.
First, \ouralgo{} handles partitions instead of recommendations.
Secondly, it builds upon $\hat{s}_{i,j}(t)$ instead of estimators of the expected reward.
While $\hat{s}_{i,j}(t)$ is the average of dependent random variables with different expected values, these expected values are greater than some non-negative constant $\tilde\Delta_{i,j}$ when $i\succ j$, which is sufficient to lower-bound $\hat{s}_{i,j}(t)$ away from $0$ as required by the proof of the regret upper-bound (see Appendices \ref{app:min_s}, \ref{app:concentration}, and \ref{app:sufficient_optimism} for details). 
Finally, the proof is adapted to handle the fact that $T_{i,j}(t)$ randomly increases when we play items $i$ and $j$ due to the exploration-exploitation rule, which is unusual in the  bandit literature.
%Finally, while at each iteration \ouralgo{} is allowed to apply several simultaneous merges of two partitions, we prove that the regret is the same as when at most one merge is done per iteration.
Up to our knowledge, this exploration-exploitation strategy and its analysis are new in the bandit community. We believe that it opens new perspectives for other semi-bandit settings.

Note that, as in \citep{Sentenac2021} and \citep{GRAB}, we restrict the theoretical analysis to the setting where the order on top-items is total, meaning we use Assumption 3.1$^*$.
Without loss of generality, we also assume that $1 \succ 2 \succ \dots \succ K \succ [L]\setminus[K]$ to shorten the notations. 
Hence the only partition $\Pm^*$ which is such that, any permutation $\av$ in $\Ac\left(\Pm^*\right)$ is compatible with the unknown strict order on items, is $\left(\{1\}, \dots,\{K\}, [L]\setminus[K]\right)$.

We now propose the main theorem that upper-bounds the regret of \ouralgo{}. 

\begin{theorem}[Upper-bound on the regret of \ouralgo{} assuming a total order on top-$K$ items]\label{theo:unirank}
Let $(L, K, \rho)$ be an OLR problem satisfying Assumptions 3.1$^*$,  \ref{asp:acyclicity}, and \ref{asp:identifiability} and such that $1 \succ 2 \succ \dots \succ K \succ [L]\setminus[K]$. Denoting $\Pm^* = (\{1\},\dots,\{K\}, [L]\setminus[K])$ the optimal partition associated to this order, 
when facing this problem, \ouralgo{} fulfills
\begin{multline}
\forall k\in[L]\setminus\{1\},~\label{eq:good_leader}
\EE\left[ \sum_{t=1}^T\ind\left\{\substack{\tilde\Pm(t)=\Pm^*,\\~\exists c, P_{c}(t)=\{{\min(k-1, K)}, k\}}\right\}\right]
    \\\leqslant \frac{16}{\tilde\delta_k^*\tilde\Delta_k^2}\log T + \OO\left(\log\log T\right)
\end{multline}
\begin{equation}
\label{eq:wrong_leader}
\text{and }
\EE\left[\sum_{t=1}^T\ind\{\tilde\Pm(t)\neq\Pm^*\}\right]
    = \OO\left(\log\log T\right),
\end{equation}
and hence
$$R(T)
\leqslant \sum_{k=2}^{L}\frac{8\Delta_k}{\tilde\delta_k^*\tilde\Delta_k^2}\log T
+ \OO\left(\log\log T\right)
%\\&
=\OO\left(\frac{L}{\Delta}\log T\right),
$$
where
%$P_K^L\defeq \frac{L!}{(L-K)!}$ is the number of $K$-permutations of $L$ items,
%$P_{max}\defeq\max_{\av\in\perm_K^L}|\Pi_\rho(\av)|$,
% $\tilde\Delta_{min} \defeq \min_{c\in\{2, \dots, K\}} \tilde\Delta_c$;
for any position $k>1$, denoting $\ell\defeq\min(k-1, K)$,\\
\scalebox{0.95}{$
\tilde{\delta}_k^* \defeq
\min_{\substack{\Pm\in\Nc(\Pm^*): \exists c, (\ell,k)\in P_c^2}}
\PP_{\av(t) \sim \Uc(\Ac\left(\Pm\right)) }\left[c_{\ell}(t) \neq c_k(t)\right],
$}\\
$
\tilde\Delta_k \defeq \min_{\av\in\perm_K^L:\{\ell,k\}\cap\av([K])\neq\varnothing}\tilde\Delta_{\ell,k}(\av),
$\\
$\Delta_k \defeq \mu_{(1,\dots,K)} - \mu_{(\ell,k)\circ(1,\dots,K)},
$\\
and $\Delta \defeq \min_{k\in\{2, \dots, L\}} \tilde\delta_k^*\tilde\Delta^2_k / \Delta_k$.
%and $\Nc(\Pm^*)$ the set of partitions in the neighbor of $\Pm^*$.
\end{theorem}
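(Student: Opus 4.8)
The plan is to follow the two-phase structure announced above: bound the regret accrued while the leader equals $\Pm^*$, and separately bound the number of iterations on which the leader is wrong. I would first introduce a regret decomposition keyed on the played partition. Since $\av(t)$ is drawn from $\Uc(\Ac(\Pm(t)))$ independently of the past given $\Pm(t)$, writing $r_{\Pm}\defeq\mu^*-\EE_{\av\sim\Uc(\Ac(\Pm))}[\mu_{\av}]$ for the per-play regret of a partition and conditioning yields
$$R(T) = \EE\Bigl[\sum_{t=1}^{T} r_{\Pm(t)}\Bigr] = \sum_{\Pm} r_{\Pm}\,\EE\Bigl[\sum_{t=1}^{T}\ind\{\Pm(t)=\Pm\}\Bigr].$$
I would then split each count according to whether $\tilde\Pm(t)=\Pm^*$. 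When the leader is optimal, the played partition is either $\Pm^*$ (for which $r_{\Pm^*}=0$) or a neighbor in $\Nc(\Pm^*)$, and the neighbor whose merged or augmented block equals $\{\ell,k\}$ with $\ell=\min(k-1,K)$ is exactly the one counted by \eqref{eq:good_leader}. For that neighbor the uniform draw places $\ell$ and $k$ in the optimal relative order with probability $1/2$ and swapped with probability $1/2$, so \cref{asp:acyclicity} makes its per-play regret exactly $\Delta_k/2$.

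Next I would establish \eqref{eq:good_leader}. On $\{\tilde\Pm(t)=\Pm^*\}$ this neighbor is chosen only when its index $b_{\Pm}(t)=\bar{\bar s}_{k,\ell}(t)$ is both nonnegative, so as to beat the leader's reference index $0$, and maximal. Since $\ell\succ k$ under the total-order assumption, \cref{asp:identifiability} lower-bounds the conditional expectation of each observed increment $c_\ell-c_k$ by $\tilde\Delta_k>0$, which is what pins $\hat{s}_{\ell,k}(t)$ away from $0$ (the content of \cref{app:min_s}). I would then apply the KL-UCB deviation control of \cref{app:concentration} and \cref{app:sufficient_optimism}: after rescaling $\hat{s}$ to $[0,1]$ and invoking Pinsker's inequality against the gap $\tilde\Delta_k$, the event $\{\bar{\bar s}_{k,\ell}(t)\ge0\}$ can occur at only $\approx 2\log T/\tilde\Delta_k^2$ iterations at which $(\ell,k)$ is actually observed, up to an $\OO(\log\log T)$ remainder. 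Passing from the observation count to the play count costs a factor $1/\tilde\delta_k^*$, because each play of this neighbor reveals $c_\ell\neq c_k$ with probability at least $\tilde\delta_k^*$; tracking the constants through this conversion produces the factor $16/(\tilde\delta_k^*\tilde\Delta_k^2)$.

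The main obstacle is \eqref{eq:wrong_leader}. The leader-elicitation procedure returns $\Pm^*$ exactly when the statistics $\hat{s}_{i,j}(t)$ are coherent with the true order on the relevant items, so it is enough to bound the number of iterations on which some pair $(i,j)$ with $i\succ j$ has $\hat{s}_{i,j}(t)\le 0$. The difficulty, and the genuine departure from \citep{Combes2014}, is that \ouralgo{} has no forced-exploration step guaranteeing a fixed sampling rate for the leader's pairs, and $T_{i,j}(t)$ grows only on the random subset of iterations where $i$ and $j$ share a block and receive differing clicks. I would handle this coupling by a peeling / self-normalized concentration argument over the random counts $T_{i,j}(t)$, which is what yields the $\log\log T$ rate rather than $\log T$, together with the structural fact that whenever a decisive pair is under-sampled the selection rule drives the algorithm to play a neighbor that samples precisely that pair, so under-sampling cannot persist.

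Finally I would reassemble the two contributions. The wrong-leader iterations each incur a per-iteration regret of at most $\mu^*\le K$, so by \eqref{eq:wrong_leader} they contribute only $\OO(\log\log T)$. The good-leader iterations contribute, by the first paragraph and \eqref{eq:good_leader},
$$\sum_{k=2}^{L}\frac{\Delta_k}{2}\cdot\frac{16}{\tilde\delta_k^*\tilde\Delta_k^2}\log T + \OO(\log\log T) = \sum_{k=2}^{L}\frac{8\Delta_k}{\tilde\delta_k^*\tilde\Delta_k^2}\log T + \OO(\log\log T),$$
and bounding each summand using $\Delta=\min_{k}\tilde\delta_k^*\tilde\Delta_k^2/\Delta_k$ collapses the $L-1$ terms into the advertised $\OO(L\log T/\Delta)$.
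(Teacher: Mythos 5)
Your proposal is correct and follows essentially the same route as the paper's proof: the same leader-conditioned regret decomposition with per-play regret exactly $\Delta_k/2$ for the neighbor gathering $\{\min(k-1,K),k\}$ (\cref{app:regret_of_unirank_end}), the same Pinsker/KL-index argument for \eqref{eq:good_leader} with the two halved events ($\hat s$-deviation and observation-rate) and the $2/\tilde\delta_k^*$ observation-to-play conversion yielding the constant $16$ (\cref{theo:good_leader}), and the same Garivier--Capp\'e-style peeling over random counts as the source of the $\OO(\log\log T)$ term in \eqref{eq:wrong_leader} (\cref{theo:10GravierCappe,theo:leader_not_optimal}). One caveat on your wrong-leader sketch: the paper's mechanism is not that the selection rule "drives play toward the starved pair," but rather a pigeonhole over the at most $L$ arms available under a fixed wrong leader combined with a stopping-time argument --- if every superior neighbor $\Pm^+$ is played less than an $\varepsilon=1/L$ fraction, some inferior arm is played often, and for its index to beat the optimistic index $\bar{\bar s}_{j(\Pm^+),i(\Pm^+)}$ a rare event must occur at an earlier stopping time $\psi(t)$, namely either a large deviation of a well-sampled statistic ($\OO(1)$ occurrences, \cref{lem:dev_s}) or a lower-deviation of the KL index (the $\OO(\log\log T)$ sets $F_{\Pm^+}$) --- together with a case split from \cref{lem:unimodality} (a subset to split versus a misordered consecutive pair, the former being automatically sampled whenever that leader holds), details your sketch would still need to supply.
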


The first upper-bound (Equation \eqref{eq:good_leader}) controls the expected number of iterations at which \ouralgo{} explores while the leader is the optimal partition. Both types of exploration are covered: the merging of two consecutive subsets of $\tilde\Pm(t)$, and the addition of a sub-optimal arm to the last subset of the chosen partition $\Pm(t)$.
The second upper-bound (Equation \eqref{eq:wrong_leader}) deals with the expected number of iterations at which the leader is not the optimal partition.
Let us now express the same bounds while assuming one of the state-of-the-art click models. 

\begin{corollary}[Facing CM$^*$]\label{theo:unirank_CM}
Under the hypotheses of  \cref{theo:unirank}, with the clik-model CM with  probability $\theta_i$ to click on item $i$ when it is observed, \ouralgo{} fulfills
%\begin{align}
%\forall j\in [L]\setminus[K],~
%&&
%\EE\left[ \sum_{t=1}^T\ind\left\{\substack{\tilde\Pm(t)=\Pm^*,~j \in P_{d(t)}(t)}\right\}\right]
%    &\leqslant \frac{8}{\left((1-\prod_{i=1}^{K-1}(1-\theta_i))(\theta_K-\theta_j)\right)^2}\log T + \OO\left(\log\log T\right),
%\\
%&&
%\EE\left[\sum_{t=1}^T\ind\{\tilde\Pm(t)\neq\Pm^*\}\right]
%    &= \OO\left(\log\log T\right),
%\end{align}
%and hence
\begin{align*}
R(T)
&\leqslant \sum_{k=K+1}^L16\frac{\theta_K+\theta_k}{\theta_K-\theta_k}\log T
+ \OO\left(\log\log T\right)
\\&
= \OO\left((L-K)\frac{\theta_K+\theta_{K+1}}{\theta_K-\theta_{K+1}}\log T\right).
\end{align*}
%%$$R(T) = \OO\left(\frac{L-K}{\Delta}\log T\right)$$
%with
%$\Delta \defeq\min_{k\in\{2, \dots, L\}}\frac{\tilde\delta_k^*}{\prod_{\ell=1}^{K-1}1-\theta_\ell}\frac{\theta_K-\theta_k}{(\theta_K+\theta_k)^2}.$
\end{corollary}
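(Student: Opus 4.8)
The plan is to specialize the generic bound $R(T)\leqslant\sum_{k=2}^{L}\frac{8\Delta_k}{\tilde\delta_k^*\tilde\Delta_k^2}\log T+\OO(\log\log T)$ of \cref{theo:unirank} by evaluating the three quantities $\Delta_k$, $\tilde\delta_k^*$ and $\tilde\Delta_k$ under the cascade model. The starting observation is that under CM the reward of a recommendation $\av$ equals $\mu_\av = 1-\prod_{k=1}^{K}(1-\theta_{a_k})$, which depends only on the \emph{set} $\av([K])$ of displayed items and not on their order. I would first use this to discard most terms of the sum: for $2\leqslant k\leqslant K$ we have $\ell=\min(k-1,K)=k-1$, so $(\ell,k)\circ(1,\dots,K)$ merely transposes two displayed items, leaves $\av([K])$ unchanged, and hence $\Delta_k=\mu_{(1,\dots,K)}-\mu_{(\ell,k)\circ(1,\dots,K)}=0$. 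Thus only the indices $k>K$ survive, and for those $\ell=K$.

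Next, for $k>K$ I would compute the three quantities explicitly. Writing $\pi\defeq\prod_{j=1}^{K-1}(1-\theta_j)$ and using the cascade click probabilities, replacing item $K$ by the strictly less attractive item $k$ in $(1,\dots,K)$ gives $\Delta_k=\pi\,(\theta_K-\theta_k)>0$. For $\tilde\delta_k^*$ I would note that among all neighbours of $\Pm^*=(\{1\},\dots,\{K\},[L]\setminus[K])$ only the type-(ii) neighbour that moves $k$ into the block $\{K\}$ places both $K$ and $k$ in a common block — a merge reaching the last block is excluded since $c\leqslant\tilde d-2$ — and its compatible set $\Ac(\Pm)$ consists of exactly the two recommendations $(1,\dots,K-1,K)$ and $(1,\dots,K-1,k)$, which yields $\tilde\delta_k^*=\tfrac12\pi\,(\theta_K+\theta_k)$.

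The technical heart is $\tilde\Delta_k=\min_{\av}\tilde\Delta_{K,k}(\av)$. I would compute $\tilde\Delta_{K,k}(\av)$ from \cref{def:statistic} for a generic $\av$ by tracking, through the ``no earlier click'' products of the cascade, the probabilities that the click lands on item $K$ or on item $k$ in $\av$ and in $(K,k)\circ\av$; after the antisymmetric cancellation one obtains $\tilde\Delta_{K,k}(\av)=\frac{\theta_K-\theta_k}{\theta_K+\theta_k}\,d_{K,k}(\av)$ with a factor $d_{K,k}(\av)\geqslant1$ that equals $1$ exactly when at most one of the two items is displayed, i.e. when no item sits strictly between them. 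Consequently the minimiser is any such configuration and $\tilde\Delta_k=\frac{\theta_K-\theta_k}{\theta_K+\theta_k}$. This monotonicity argument — that inserting attractive items \emph{between} $K$ and $k$ can only enlarge the conditional click gap — is the step I expect to be the most delicate, since it requires care with the conditioning on $c_K\neq c_k$ and with the position-dependent cascade weights.

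Finally I would substitute. The common factor $\pi$ cancels, and $\frac{8\Delta_k}{\tilde\delta_k^*\tilde\Delta_k^2}=\frac{8\,\pi(\theta_K-\theta_k)}{\tfrac12\pi(\theta_K+\theta_k)\big(\frac{\theta_K-\theta_k}{\theta_K+\theta_k}\big)^2}=16\,\frac{\theta_K+\theta_k}{\theta_K-\theta_k}$, giving the stated bound $\sum_{k=K+1}^{L}16\frac{\theta_K+\theta_k}{\theta_K-\theta_k}\log T+\OO(\log\log T)$. To collapse this into the $\OO$-expression I would observe that $\theta\mapsto\frac{\theta_K+\theta}{\theta_K-\theta}$ is increasing on $[0,\theta_K)$ and that $\theta_k\leqslant\theta_{K+1}$ for every $k\geqslant K+1$, so each of the $L-K$ summands is at most $\frac{\theta_K+\theta_{K+1}}{\theta_K-\theta_{K+1}}$, which delivers the $\OO\!\big((L-K)\frac{\theta_K+\theta_{K+1}}{\theta_K-\theta_{K+1}}\log T\big)$ rate.
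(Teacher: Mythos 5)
Your proposal is correct and follows essentially the same route as the paper's own proof (Corollary \ref{theo:unirank_CM_detailed} in the appendix): kill the terms $k\leqslant K$ via $\Delta_k=0$ (reward under CM depends only on the displayed set), compute $\Delta_k=\pi(\theta_K-\theta_k)$ and $\tilde\delta_k^*=\tfrac{1}{2}\pi(\theta_K+\theta_k)$ for $k>K$ with $\pi=\prod_{j=1}^{K-1}(1-\theta_j)$, lower-bound $\tilde\Delta_k$ by $\frac{\theta_K-\theta_k}{\theta_K+\theta_k}$ from the cascade formula, cancel $\pi$ to get $16\frac{\theta_K+\theta_k}{\theta_K-\theta_k}$, and bound each of the $L-K$ summands by its value at $k=K+1$ by monotonicity. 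One immaterial slip in a side remark: writing $d_{K,k}(\av)=\bigl(1-\frac{2B}{1+B}\frac{\theta_K\theta_k}{\theta_K+\theta_k}\bigr)^{-1}$ with $B$ the product of $(1-\theta_{a_c})$ over items strictly between the two positions, one has $d=1$ iff at most one of the two items is displayed (adjacent display gives $B=1$ and $d>1$, so your ``i.e.\ no item sits strictly between them'' is not the right characterization), and inserting items between them \emph{decreases} $d$ toward $1$ rather than enlarging the gap --- but since your argument only uses $d\geqslant 1$ together with the exact value at the one-displayed configurations, the conclusion $\tilde\Delta_k=\frac{\theta_K-\theta_k}{\theta_K+\theta_k}$ and the stated regret bound stand.
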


\begin{corollary}[Facing PBM$^*$]\label{theo:unirank_PBM}
Under the hypotheses of  \cref{theo:unirank}, if the user follows PBM with  the probability $\theta_i$ of clicking on item $i$ when it is observed and  the probability $\kappa_k$ of observing the position $k$, then \ouralgo{} fulfills
%\begin{align}
%\forall i\in [K-1],~
%&&
%\EE\left[ \sum_{t=1}^T\sum_{c=1}^{d(t)}\ind\left\{\substack{\tilde\Pm(t)=\Pm^*,~P_c(t)=\{i, i+1\}}\right\}\right]
%    &\leqslant \frac{8}{\left((\kappa_i-\kappa_{i+1})(\theta_i-\theta_{i+1})\right)^2}\log T + \OO\left(\log\log T\right),
%\\
%\forall j\in [L]\setminus[K],~
%&&
%\EE\left[ \sum_{t=1}^T\ind\left\{\substack{\tilde\Pm(t)=\Pm^*,~j \in P_{d(t)}(t)}\right\}\right]
%    &\leqslant \frac{8}{\left(\kappa_K(\theta_K-\theta_j)\right)^2}\log T + \OO\left(\log\log T\right),
%\\
%&&
%\EE\left[\sum_{t=1}^T\ind\{\tilde\Pm(t)\neq\Pm^*\}\right]
%    &= \OO\left(\log\log T\right),
%\end{align}
%and hence
\begin{align*}
R(T)
%&\leqslant \sum_{c=2}^{K}\frac{(\kappa_{c-1}-\kappa_c)(\theta_{{c-1}}+\theta_{c})^2}{\theta_{{c-1}}-\theta_{c}}\log T
%\\&\quad+ \sum_{c=K+1}^L\frac{\kappa_{K}(\theta_{K}+\theta_{c})^2}{\theta_{K}-\theta_{c}}\log T
%+ \OO\left(\log\log T\right)
%\\&
= \OO\left(\frac{L}{\Delta}\log T\right),
\end{align*}
where
$\Delta \defeq \min \{
\frac{\theta_K-\theta_{K+1}}{\theta_K+\theta_{K+1}},$\\
\scalebox{0.95}{$
\min_{k \in\{2,\dots,K\}} \frac{((\kappa_{k-1}+\kappa_k)(\theta_{k-1}+\theta_k)-4\kappa_{k-1}\kappa_k\theta_{k-1}\theta_k)(\theta_{k-1}-\theta_k)}{(\kappa_{k-1}-\kappa_k)(\theta_{k-1}+\theta_k)^2}
\}$}.
\end{corollary}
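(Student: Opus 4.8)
The corollary is an instance of \cref{theo:unirank}, so the plan is to start from the theorem's conclusion $R(T)\le\sum_{k=2}^{L}\frac{8\Delta_k}{\tilde\delta_k^*\tilde\Delta_k^2}\log T+\OO(\log\log T)=\OO(L/\Delta'\log T)$ with $\Delta'=\min_{k}\tilde\delta_k^*\tilde\Delta_k^2/\Delta_k$, and then merely to evaluate the three problem-dependent constants $\tilde\Delta_k$, $\tilde\delta_k^*$ and $\Delta_k$ for the PBM model. Since the hypotheses of \cref{theo:unirank} are assumed, I may take $1\succ2\succ\dots\succ K\succ[L]\setminus[K]$ with $g(i)=\theta_i$, and the closed forms for $\tilde\Delta_{i,j}(\av)$ and $\tilde\delta_{i,j}(\av)$ are exactly those of the PBM illustrations. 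The whole argument is therefore a reduction followed by a lower bound on $\Delta'$; establishing $\Delta'\ge c\,\Delta$ for an absolute constant $c$ yields $R(T)=\OO(L/\Delta\log T)$, the constant $c$ being irrelevant inside the $\OO$.

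I would treat the two regimes of $k$ separately, since $\ell=\min(k-1,K)$ behaves differently. For $k\in\{2,\dots,K\}$ the relevant neighbour of $\Pm^*$ merges the consecutive singletons $\{k-1\}$ and $\{k\}$, so items $k-1,k$ occupy positions $k-1,k$ and are placed in either order with equal probability by the uniform draw of step~4. Substituting $(\kappa_{k-1},\kappa_k)$ and $(\theta_{k-1},\theta_k)$ into the illustration formula gives $\tilde\delta_k^*=\tfrac12\big[(\theta_{k-1}+\theta_k)(\kappa_{k-1}+\kappa_k)-4\theta_{k-1}\theta_k\kappa_{k-1}\kappa_k\big]$, while $\tilde\Delta_{k-1,k}(\av)=\tfrac{\theta_{k-1}-\theta_k}{\theta_{k-1}+\theta_k}d_{k-1,k}(\av)$ with $d_{k-1,k}(\av)>1$ yields the lower bound $\tilde\Delta_k\ge\tfrac{\theta_{k-1}-\theta_k}{\theta_{k-1}+\theta_k}$; a direct PBM computation of the reward gap of a single adjacent transposition gives $\Delta_k=(\kappa_{k-1}-\kappa_k)(\theta_{k-1}-\theta_k)$. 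Assembling these produces $\tilde\delta_k^*\tilde\Delta_k^2/\Delta_k\ge\tfrac12 B_k$, where $B_k$ is the second term inside the minimum defining $\Delta$.

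For $k>K$, item $k$ is undisplayed in $\Pm^*$ and $\ell=K$; the relevant neighbour adds $k$ to the penultimate block $\{K\}$, so at position $K$ the algorithm shows either $K$ or $k$, each with probability one half. Here I would compute $\tilde\delta_k^*=\tfrac12\kappa_K(\theta_K+\theta_k)$ from the PBM click probabilities, use the same lower bound $\tilde\Delta_k\ge\tfrac{\theta_K-\theta_k}{\theta_K+\theta_k}$, and obtain the reward gap $\Delta_k=\kappa_K(\theta_K-\theta_k)$ of replacing item $K$ by $k$ at position $K$. These combine into $\tilde\delta_k^*\tilde\Delta_k^2/\Delta_k\ge\tfrac12\,\tfrac{\theta_K-\theta_k}{\theta_K+\theta_k}$, which is smallest at $k=K+1$ because $\theta$ is decreasing, recovering the first term $\tfrac{\theta_K-\theta_{K+1}}{\theta_K+\theta_{K+1}}$ of the minimum defining $\Delta$. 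Taking the overall minimum over both regimes gives $\Delta'\ge\tfrac12\Delta$, hence $R(T)=\OO(L/\Delta\log T)$.

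The algebra above is routine, but the step I expect to require the most care is the evaluation of $\tilde\delta_k^*$: unlike $\tilde\Delta_k$ and $\Delta_k$, it is a probability of a click \emph{difference} averaged over the uniform recommendation in $\Ac(\Pm)$, so one must correctly enumerate the two equiprobable placements of the pair $(\ell,k)$ — and, for $k>K$, account for the fact that exactly one of the two items is displayed at a time — before invoking the PBM independence of clicks across positions. Getting the factor attached to $\kappa_K$ and the cross term $\kappa_{k-1}\kappa_k\theta_{k-1}\theta_k$ right is what pins down the exact closed form of $\Delta$; any arithmetic slip there only changes an absolute constant and is harmless for the $\OO$-statement, which is why I would not track constants beyond checking they are positive and independent of $T$, $L$ and $K$.
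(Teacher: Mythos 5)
Your proposal is correct and takes essentially the same route as the paper: the proof of Corollary~\ref{theo:unirank_PBM_detailed} in Appendix~\ref{app:regret_of_unirank_vs_PBM} likewise instantiates Theorem~\ref{theo:unirank} and computes exactly your gap values — $\tilde\delta_k^*=\frac{1}{2}(\theta_{k-1}+\theta_k)(\kappa_{k-1}+\kappa_k)-2\theta_{k-1}\theta_k\kappa_{k-1}\kappa_k$ and $\Delta_k=(\theta_{k-1}-\theta_k)(\kappa_{k-1}-\kappa_k)$ for $k\leqslant K$, $\tilde\delta_k^*=\frac{1}{2}\kappa_K(\theta_K+\theta_k)$ and $\Delta_k=\kappa_K(\theta_K-\theta_k)$ for $k\geqslant K+1$ — and establishes $\tilde\Delta_k\geqslant\frac{\theta_{\min(K,k-1)}-\theta_k}{\theta_{\min(K,k-1)}+\theta_k}$ by the same cancellation you invoke (with equality when only one of the two items is displayed). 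The only cosmetic difference is that the paper carries $\tilde\delta_k^*$ symbolically into the final regret sum rather than folding it into a bound $\Delta'\geqslant\Delta/2$; this is immaterial inside the $\OO$.
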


Note that the regret upper-bound reduces to $\OO((L-K)/\Delta\log T)$ with CM since, with this model, the recommendation is optimal as soon as optimal items are displayed. 

A more detailed version of these corollaries is given in the appendix, together with their proofs and  \cref{theo:unirank}'s  proof. These proofs builds upon the following pseudo-unimodality property.% which we also prove in the appendix.

\begin{lemma}[Pseudo-unimodality assuming a total order on top-$K$ items]\label{lem:unimodality}
Under the hypotheses of  \cref{theo:unirank}, for any ordered partition of the items $\tilde\Pm = \left(\tilde{P}_1,\dots, \tilde{P}_{\tilde{d}}\right) \neq \Pm^*$,
\begin{itemize}
    \item either  $\exists c \in [\tilde{d}]$, such that $|P_c|>1$ and  $i^*\succ \argmax_{j\in P_c\setminus\{i^*\}} g(j)$, where $i^*=\argmax_{i\in P_c} g(i)$;
    \item or $\exists c \in [\tilde{d}-1]$, $\exists (i,j)\in\tilde{P}_c\times\tilde{P}_{c+1}$, such that $j \succ i$.
\end{itemize}
\end{lemma}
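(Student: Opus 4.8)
The plan is to prove the contrapositive: assuming that neither of the two listed conditions holds, I will deduce that $\tilde\Pm=\Pm^*$, which contradicts the hypothesis $\tilde\Pm\neq\Pm^*$. So I start by supposing that the first condition fails for every block and the second condition fails for every pair of consecutive blocks.

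First I would exploit the negation of the first condition together with Assumption 3.1$^*$. Writing $i^*_c=\argmax_{i\in\tilde{P}_c} g(i)$, the failure of the first condition means that whenever $|\tilde{P}_c|>1$ the maximiser $i^*_c$ is only weakly above the rest of the block, i.e. it is equivalent (not strictly superior) to the second-largest item of $\tilde{P}_c$. The key observation is that, under Assumption 3.1$^*$, the top-$K$ items $1\succ 2\succ\dots\succ K$ carry pairwise distinct preferential-attachment values, each strictly larger than $g(j)$ for every $j\in[L]\setminus[K]$. Hence any block that contains a top-$K$ item has a \emph{unique} strict maximiser, so such a block, if of size $>1$, would trigger the first condition. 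I would therefore conclude that every top-$K$ item lies in its own singleton block (equivalently, every non-singleton block is contained in $[L]\setminus[K]$).

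Next I would use the negation of the second condition, which states that for each $c\in[\tilde{d}-1]$ no item of $\tilde{P}_{c+1}$ is strictly more attractive than an item of $\tilde{P}_c$; equivalently $\min_{i\in\tilde{P}_c} g(i)\geqslant \max_{j\in\tilde{P}_{c+1}} g(j)$, so the blocks are weakly sorted by decreasing $g$. Combining this sortedness with the singleton property of the top-$K$ items, I would run an induction on $c$ from $1$ to $K$: since $g(1)$ is the unique global maximum and the first block carries the largest values, item $1$ must sit alone in $\tilde{P}_1$; and once $\tilde{P}_1=\{1\},\dots,\tilde{P}_{c-1}=\{c-1\}$ are fixed, item $c$ is the strict maximiser of the remaining items and, by sortedness, must form the next block, giving $\tilde{P}_c=\{c\}$. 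This yields $\tilde{P}_c=\{c\}$ for all $c\in[K]$.

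Finally I would close the argument using the defining structure of the partitions handled by \ouralgo{}, namely that all never-displayed items are gathered into the single last block $\tilde{P}_{\tilde{d}}$ while the first $\tilde{d}-1$ blocks fill the $K$ displayed slots. Since the first $K$ blocks $\{1\},\dots,\{K\}$ already occupy all $K$ positions, the remaining items $[L]\setminus[K]$ can only form the unique last block, so $\tilde\Pm=(\{1\},\dots,\{K\},[L]\setminus[K])=\Pm^*$, the desired contradiction. I expect this last step to be the delicate one, as it is exactly where the special role of the last subset is needed: without it, a correctly sorted refinement of the tail $[L]\setminus[K]$ into several blocks (which is possible when the non-top-$K$ items are themselves strictly ordered) would satisfy neither condition while still differing from $\Pm^*$. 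The remaining steps are routine once the distinctness of the top-$K$ values and the block-sortedness are in place.
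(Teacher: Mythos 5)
Your proof is correct, and it is in fact more complete than what the paper provides: in the source, the authors' proof of this lemma was drafted and then disabled, the draft merely restating the dichotomy as immediate from the form of $\Pm^*=\left(\{1\},\dots,\{K\},[L]\setminus[K]\right)$, so the compiled paper contains no argument at all. Your contrapositive route is the natural fleshing-out: negating the first bullet forces every block containing a top-$K$ item to be a singleton (under Assumption 3.1$^*$ such a block has a unique strict maximiser), negating the second bullet forces the blocks to be weakly sorted by decreasing $g$, and the induction then pins $\tilde{P}_c=\{c\}$ for $c\in[K]$. Most importantly, you correctly isolated the one genuine subtlety: as literally stated (``for any ordered partition''), the lemma is false. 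With $K=2$, $L=4$ and $1\succ 2\succ 3\succ 4$ (allowed, since Assumption 3.1$^*$ constrains only the top-$K$ items relative to the rest), the partition $\left(\{1\},\{2\},\{3\},\{4\}\right)$ differs from $\Pm^*=\left(\{1\},\{2\},\{3,4\}\right)$ yet triggers neither bullet. The statement holds only for partitions satisfying the structural constraint enforced by \cref{alg:leader_elicitation}, namely $\sum_{c=1}^{\tilde{d}-2}|\tilde{P}_c| < K \leqslant \sum_{c=1}^{\tilde{d}-1}|\tilde{P}_c|$, which is exactly what your last step uses to force $\tilde{d}=K+1$ and $\tilde{P}_{K+1}=[L]\setminus[K]$. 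This restriction is harmless where the lemma is invoked (the proof of \cref{theo:leader_not_optimal} applies it only to leader partitions), but it is a hypothesis the paper leaves implicit, and you were right to flag it as the delicate point.
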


The first alternative implies that the subset $\tilde{P}_c$ should be split, which will be discovered by recommending permutations compatible with either $\tilde{\Pm}$ or one of its neighbors. The second alternative implies that $j$ should be in a subset ranked before the subset containing $i$, which will be discovered by recommending the permutation in the neighborhood of $\tilde{\Pm}$ which puts $i$ and $j$ in the same subset.

\subsection{Discussion}\label{sec:discussion}

We gather here some remarks regarding the optimality of the theoretical results and their extension to a weak order.

\begin{remark}[Optimality of UniRank's upper-bound]
While deriving the exact lower-bound on the expected regret in this setting is out of the scope of our paper, we believe that this bound takes the form $\OO\left(\sum_{k=2}^K \frac{\mu^*-\mu_k}{kl(\nu_k^*,\nu_k)}\log(T) + \sum_{k=K+1}^L \frac{\mu^*-\mu_k}{kl(\nu_k^*,\nu_k)}\log(T)\right)$, where for $k\in\{2,\dots,K\}$ (respectively $k\in\{K+1,\dots,L\}$), $\mu_k$, $\nu_k^*$, and $\nu_k$ result from the best partition and the best random variables to compare item $k$ to item $k-1$ (resp. to item $K$). 

In \cite{Combes2015} (Propositions 1 and 2) and in \cite{Lagree2016} (Theorem 6) a bound with only the second sum is proven. The first sum is missing as both papers consider more restricting settings where the comparison between items $k\in\{2,\dots,K\}$ and $k-1$ is free in terms of regret: CM for \cite{Combes2015}, and PBM with $\kappav $ known for \cite{Lagree2016}.

We also believe that TopRank's upper-bound on the regret with our additional hypothesis either remains $\OO(KL/\Delta\log(T))$ or reduces to $\OO(L\log L/\Delta\log(T))$.\footnote{This second bound is proven in \citep{Sentenac2021} for a matching problem handled with an algorithm similar to TopRank.}
Indeed, while UniRank reduces the exploration by only comparing each item $k$ to the item $\min(k-1,L)$, in the worst case scenario TopRank compares each item $k$ to each item $k'\in\{1,\dots, \min(k-1,L)\}$ in order to conclude that $k'$ should not be at one of the top-$\min(k-1,L)$ positions. 
\end{remark}

\begin{remark}[Exploration not at the top]
With CM model, exploring at the top reduces the regret: it leads to less exploration, while the instantaneous regret remains unchanged.
However, this results does not hold with PBM (see Theorem 6 in \cite{Lagree2016} for details): while exploring at the top decreases the number of explorations, it also increases the regret per exploration; and the best trade-off depends on the values $\thetav$ and $\kappav$.

Therefore, as in \cite{Lagree2016}, we only explore through local changes in the recommendation. Note that these local changes are also more "user-friendly": as soon as the right leader has been identified, a sub-optimal item is always tried at the bottom of the recommendation, which is less surprising for users than a sub-optimal item displayed as the top recommendation.
\end{remark}

\begin{remark}[Upper-bound on the regret of UniRank assuming a weak order on items]
If the order on the best items is not total, the proof of  \cref{theo:unirank} may be adapted to get a $\OO\left(LK/\Delta\log T\right)$ bound. Indeed, under the strict weak order assumption, there exists a set
%$\{\Pm^{(1)}, \dots, \Pm^{(N)}\}$
of optimal partitions, and therefore, any permutation compatible with a neighbor of any of these partitions may be recommended $\OO\left(1/\Delta\log T\right)$ times. In the worst case scenario, $K$ items are equivalent and strictly more attractive than the $L-K$ remaining items, and the set of the permutations compatible with a neighbor partition is composed of $K(L-K)$ permutations, which translates into a  $\OO\left(LK/\Delta\log T\right)$ regret bound.
Note that \cite{TopRank} proves a $\Omega\left(LK/\Delta\log T\right)$ lower-bound on the regret assuming that the best items have the same attractiveness which means that the upper-bound of \ouralgo{} for this specific setting is optimal.
\end{remark}

\forlater{(pour plus tard) ajouter une remarque sur $\tilde\Delta$ vs. on pense/"sait" que ça devrait être $\tilde\Delta^*$ qui vérifie notamment $\Delta=\tilde\delta^*\Delta^*$, mais ça demande une preuve compliquée que bras sous-optimaux sont tirés $\Omega(\log T)$ fois.
\\ 
En CM ça ne change rien, en PBM ça donnerait ...\\
Et mini-remarque : si on suppose $\Delta^*$, on voit l'intrêt de conditionnement $c_i\neq c_j$
}

\forlater{remarque: Unirank s'intéresse à la probabilité qu'un item soit meilleur qu'un autre, et pas à la perte en récompense lorsque l'un met deux items dnas le mauvais ordre. Du coup le même algo fonctionne pour toute fonction de récompense telle que le maximum soit atteint quand l'ordre est le bon, y compris par exemple le NDCG.}

\section{Experiments}\label{sec:exp}
In this section, we compare \ouralgo{} to TopRank \citep{TopRank}, PB-MHB \citep{PBMHB}, GRAB \citep{GRAB}, and CascadeKL-UCB \citep{Kveton2015a}. The experiments are conducted on the KDD Cup 2012 track 2 dataset, on the Yandex dataset \citep{Yandex}, and on a model with artificial parameters. We use the cumulative regret to evaluate the performance of each algorithm.

%  \begin{figure*}
%\pgfmathsetmacro{\myfigsize}{0.21}
%    \centering%
%\begin{subfigure}
%        \centering
%        \includegraphics[width=0.22\textwidth]{Graph/PBM_K_5_opponent.pdf}
%        \caption{Yandex PBM, $K=5$}
%        \label{fig:PBM_Yandex}
%    \end{subfigure}\hfill%
%\begin{subfigure}[b]{\myfigsize\textwidth}
%        \centering
%        \includegraphics[width=\textwidth]{Graph/PBM_K_10_opponent.pdf}
%        \caption{Yandex PBM, $K=10$}
%        \label{fig:PBM_K10}
%    \end{subfigure}\hfill%
% \vspace{1em}
%\begin{subfigure}
%        \centering
%        \includegraphics[width=0.22\textwidth]{Graph/CM_K_5_opponent.pdf}
%        \caption{Yandex CM, $K=5$}
%        \label{fig:CM_Yandex}
%    \end{subfigure}\hfill%
%\begin{subfigure}[b]{\myfigsize\textwidth}
%        \centering
%        \includegraphics[width=\textwidth]{Graph/CM_K_10_opponent.pdf}
%        \caption{Yandex CM, $K=10$}
%        \label{fig:CM_K10}
%    \end{subfigure}\\
 %\fbox{
% \includegraphics[height=2em, trim=3.7 8.7 3.7 8.7,  clip]{Graph/Neurips_legend.pdf}
% %}
%\caption{Cumulative regret on the 5 first positions w.r.t. iterations on Yandex dataset %with $K=5$  and $K=10$ for PBM and CM models. The plotted curves correspond to the %average over 200 independent sequences of recommendations (20 sequences per query). The %(small) shaded areas depict the standard error of our regret estimates.}
%\label{fig:Exp}
%\end{figure*}

\subsection{Experimental Settings}\label{sec:exp_set}

In order to evaluate our algorithm, we design six experiments inspired by the ones conducted in \citep{TopRank}.
The standard metric used is the expected cumulative regret (see Equation \eqref{eq:regret}), denoted as \emph{regret}, which is the sum, over  $T$ consecutive recommendations, of the difference between the expected reward of the best answer and of the answer of a given ORS. The best algorithm is the one with the lowest regret.  
We use two click models for our experiments: the Position Based Model (PBM) and the Cascading Model (CM).
To play according to those models, we extract the parameters of the chosen model from the KDD Cup 2012 track 2 (KDD for short) database and the Yandex database \citep{Yandex}, and we experiment with a set of parameters (denoted \texttt{Simul}) chosen to highlight the $\OO\left(L/\Delta\log T\right)$ regret of \ouralgo{}: $L=10$, $K=5$, $\thetav=[0.1, 0.08, 0.06, 0.04, 0.02, 10^{-4}, 10^{-4}, 10^{-4}, 10^{-4}, 10^{-4}]$, and $\kappav=[1, 0.9, 0.83, 0.78, 0.75]$.

Yandex database comes from fully anonymized real-life logs of actions toward the Yandex search engine. It contains 703 million items displayed among 65 million search queries  and sharing 167 million hits (clicks). 
We consider the 10 most frequent queries in our experiments.
We use the GPL3 Pyclick library \citep{Chuklin2015} to infer the CM and PBM parameters of each query with the \emph{expectation maximization} algorithm.
Depending on the query, this leads to $\theta_i$ values ranging from 0.51 to 0.94, and $\kappa_i$ values ranging from 0.71 to 1.00  when considering PBM and $\theta_i$ values ranging from 0.03 to 0.50 for CM.

We also extract parameters from the KDD dataset. Due to the type of data contained in this dataset, we can only extract parameters for the PBM model. This dataset consists of session logs of \emph{soso.com}, a Tencent's search engine. It tracks clicks and displays of advertisements on a search engine result web-page, w.r.t. the user query. For each query, 3 positions are available for a various number of ads to display. Each of the 150M lines contains information about the search (UserId, QueryId\dots) and the ads displayed (AdId, Position, Click, Impression). We are looking for the best ads per query, namely the ones with a higher probability to be clicked. To follow previous works, instead of looking for the probability to be clicked per display, we target the probability to be clicked per session. This amounts to discarding the information \emph{Impression}.  
We also filter the logs to restrict the analysis to (query, ad) couples with enough information: for each query, ads are excluded if they were displayed less than 1,000 times at any of the 3 possible positions. Then, we filter queries that have less than 5 ads satisfying the previous condition. We end up with 8 queries and from 5 to 11 ads per query.
The overall process leads to $\theta_i$ values ranging from 0.004 to 0.149, and $\kappa_k$ values ranging from 0.10 to 1.00, depending on the query.

Then we simulate the users' interactions given these parameters as it is commonly done in bandits settings.
Similarly to \citep{TopRank}, we look at the results averaged on the queries, while displaying $K$ items among the $L$ most attractive ones selected among all items possible for each query. With Yandex dataset, $K=5$ and $L=10$, while with KDD dataset $K=3$ and $L$ varies from 5 to 11.
We run our experiments on an internal cluster to compute 20 independent sets of $10^7$ consecutive recommendations for each of the 10 most frequent Yandex queries and each of the 8 KDD queries. It leads respectively to 200 games per setting and  algorithm for Yandex and 160 games for KDD.
%: TopRank \citep{TopRank}, PB-MHB \citep{PBMHB}, GRAB \citep{GRAB}, and CascadeKL-UCB \citep{pmlr-v37-kveton15}
% presented in \cref{sec:related}.
As TopRank requires the knowledge of the horizon $T$, we test the impact of this parameter by setting it to the right value ($10^7$), to a too high value ($10^{12}$), and to a too small value ($10^5$) with doubling trick.
To tune PB-MHB, we use the values recommended by \citep{PBMHB} for these datasets.

  \begin{figure*}[t]
    \vskip 0.2in % demandé par ICML
    \centering%
    \begin{minipage}[t]{0.74\linewidth}
    \vspace{0pt}
    \centering
    \begin{subfigure}[b]{0.31\linewidth}
        \centering
        \includegraphics[width=\linewidth]{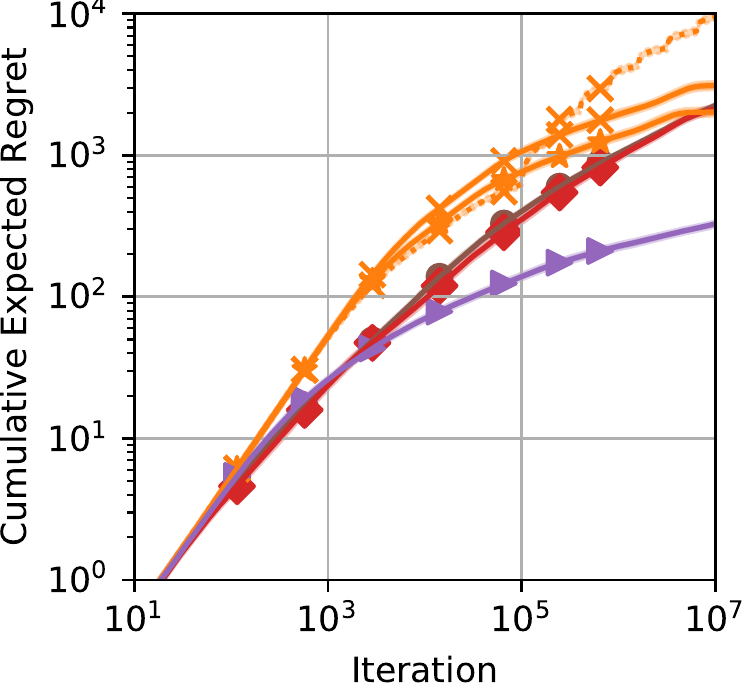}
        \caption{KDD PBM}
         \label{fig:PBM_KDD}
    \end{subfigure}%
    \hfill%
    \begin{subfigure}[b]{0.31\linewidth}
        \centering
        \includegraphics[width=\linewidth]{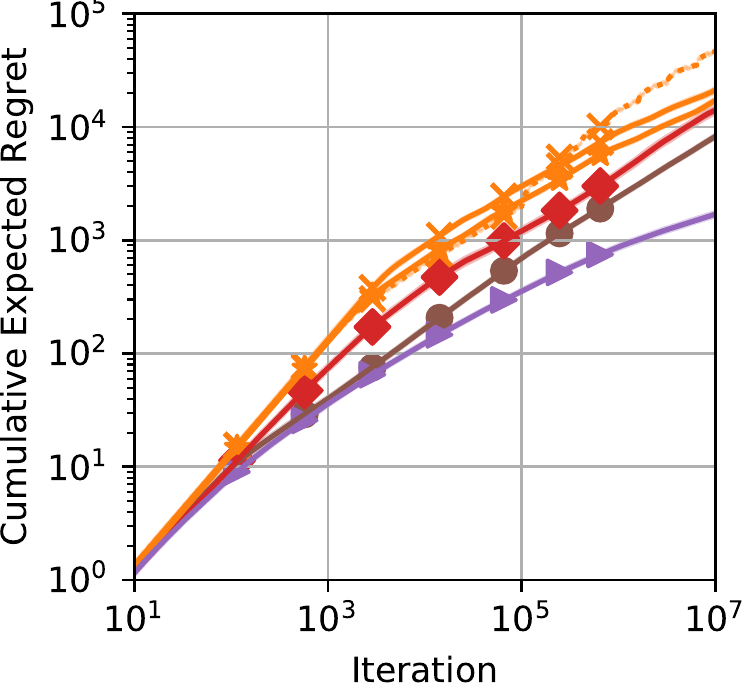}
        \caption{Yandex PBM}
         \label{fig:PBM_Yandex}
    \end{subfigure}%
    \hfill%
    \begin{subfigure}[b]{0.31\linewidth}
        \centering
        \includegraphics[width=\linewidth]{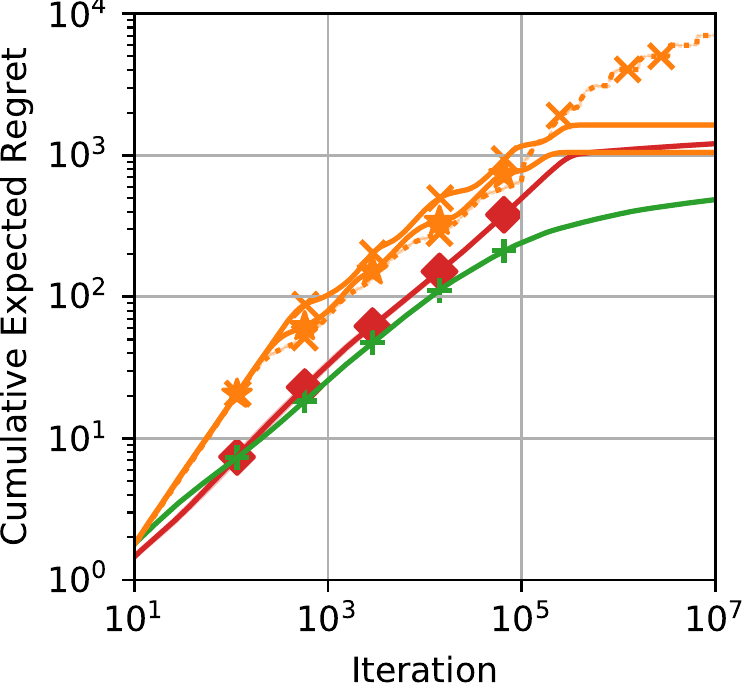}
        \caption{Yandex CM}
        \label{fig:CM_Yandex}
    \end{subfigure}
    \vskip 0.15in
    \begin{subfigure}[t]{0.31\linewidth}
        \centering
        \includegraphics[width=\linewidth]{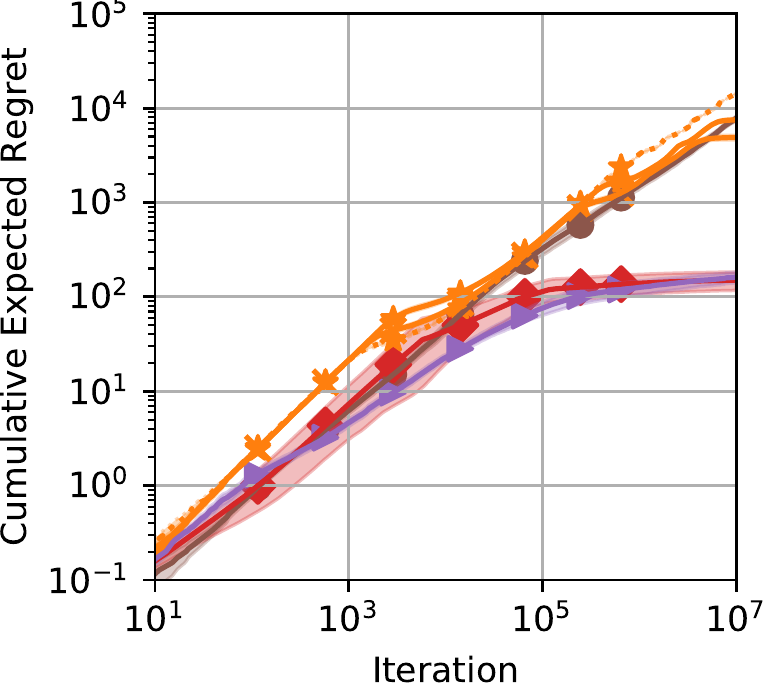}
        \caption{Yandex 8107157 PBM}
        \label{fig:PBM_Yandex9}
    \end{subfigure}%
     \hfill%
    \begin{subfigure}[t]{0.31\linewidth}
        \centering
        \includegraphics[width=\linewidth]{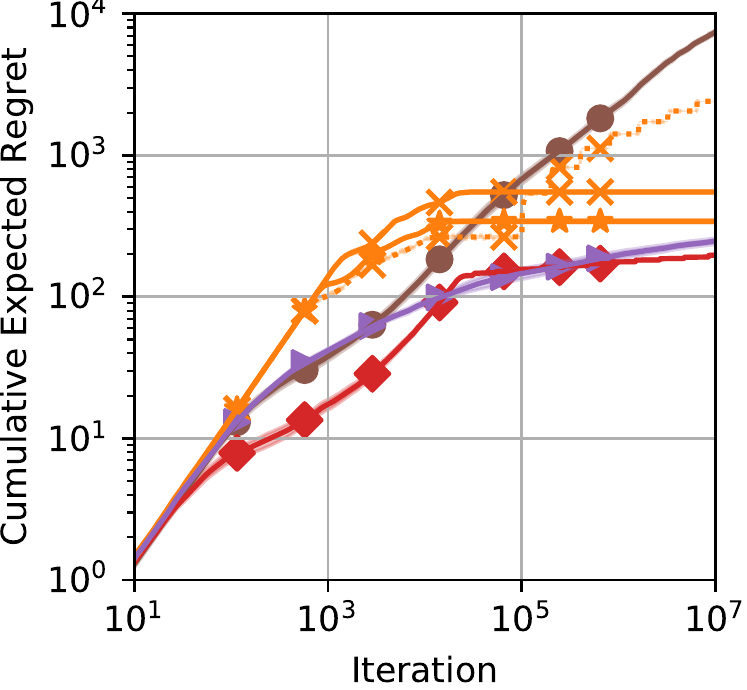}
        \caption{Simul PBM}
        \label{fig:PBM_simul}
    \end{subfigure}%
     \hfill%
    \begin{subfigure}[t]{0.31\linewidth}
        \centering
        \includegraphics[width=\linewidth]{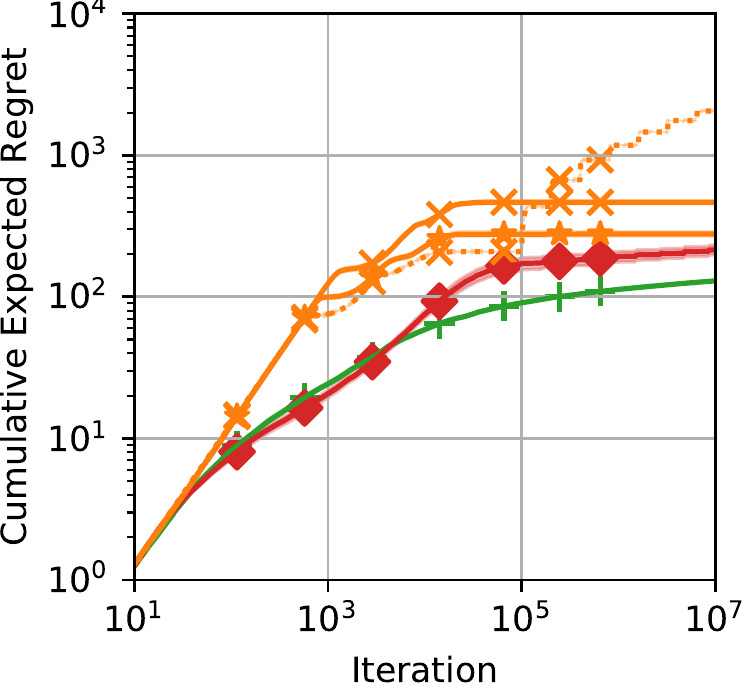}
        \caption{Simul CM}
        \label{fig:CM_simul}
    \end{subfigure}%
    \end{minipage}
    \quad%
    \fbox{
    \begin{minipage}[t]{0.21\linewidth}
    \vspace{0pt}
        %\centering
        \vskip 0.1in
        \text{\textbf{Generic Algorithms}}
        \includegraphics[height=4em]{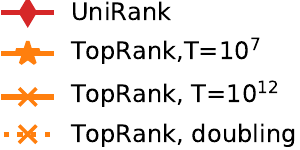}

        \vskip 0.15in
        \text{\textbf{Alg. Dedicated to PBM}}
        \includegraphics[height=2em]{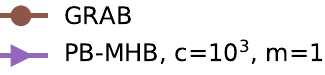}

        \vskip 0.15in
        \text{\textbf{Alg.  Dedicated to CM}}
        \includegraphics[height=1em]{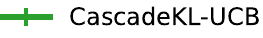}
    \end{minipage}}%

\caption{Cumulative regret w.r.t. iterations.
$K=5$ and $L=10$ for Yandex and Simul models (\subref{fig:PBM_Yandex},\subref{fig:CM_Yandex},\subref{fig:PBM_simul},\subref{fig:CM_simul});
$K=3$ and $L\in\{5,\dots,11\}$ for KDD model (\subref{fig:PBM_KDD});
$K=5$ and $L=6$ for Yandex 8107157 (\subref{fig:PBM_Yandex9}) which corresponds to the parameters of the query 8107157 of Yandex.
The plotted curves correspond to the average over 200, 160, or 20 independent sequences of recommendations (20 sequences per query). The (small) shaded areas depict the standard error of our regret estimates.}
\label{fig:Exp_Yandex}
\label{fig:Exp_KDD}
\vskip -0.2in
\end{figure*}

\commentrg{légende en PNG}

\begin{table}[t!]
 \caption{Average computation time (in ms) per recommendation. For each top 10 query of Yandex dataset, 20 runs are performed assuming CM model and $L=10$.}
  \label{tab:computation}
% PBM values to take in submission to NeuRIPS_2021
\vskip 0.15in
\begin{center}
  \begin{tabular}{cc}
    \toprule
    {Algorithm} &{Computation Time (ms)}\\
   % \cmidrule(r){2-3}
    % &  K=5&  K=10\\
    \hline
    \ouralgo{} & $1.0\pm0.2$  \\%  &  $1.2\pm0.3$     \\
    TopRank    & $0.7\pm0.3$   \\%  &  $0.4 \pm0.1$    \\
    PB-MHB     & $13.9 \pm 4.9$  \\%   &  $20.8 \pm7.4$\\
    GRAB     
    & $0.9 \pm 0.3$   \\%  &  $1.1 \pm 0.4$\\
    CascadeKL-UCB
    & $0.9 \pm0.0$  \\%  &  $1.2   \pm0.3$\\
    \bottomrule
  \end{tabular}
\end{center}
 \vskip -0.1in
\end{table}

\subsection{Results}
Our results are shown in  \cref{fig:Exp_KDD}. 
%We average 200 games for each algorithm on each of our four settings (PBM K=5 and K=10; CM K=5 and K=10). 
%Due to its heavy computation-time, we stop PB-MHB at iteration $t=10^6$. %In all cases, with $K=5$, less information is given, per iteration, to the algorithms so they all suffer a higher regret than for $K=10$ except PB-MHB which has a similar regret in both cases.
As expected, CascadeKL-UCB (respectively PB-MHB) outperforms other algorithms in the CM (resp. PBM) model for which it is designed. 
However, PB-MHB is computationally expensive (see  \cref{tab:computation}) and lacks a theoretical analysis.
Surprisingly, although GRAB is designed for PBM model, it suffers a high regret when confronted to the query 8107157 of Yandex and to Simul with PBM model.

\forlater{
As expected, CascadeKL-UCB outperforms other algorithms in the CM model for which it is designed and suffers a linear regret (i.e. very high) in the PBM model. Surprisingly, although PB-MHB and GRAB are designed for PBM model, (i) PB-MHB is the best algorithm in all models and even outperforms CascadeKL-UCB in CM models, and (ii) GRAB is ranked second or third in all models (with \ouralgo{} ahead or behind, depending on the setting).
We conjecture that the good results of PB-MHB and GRAB in the CM model results from CM model being equivalent to a PBM model in the neighborhood of the optimal recommendation.
However, PB-MHB is computationally expensive (see  \cref{tab:computation}) and lacks a theoretical analysis.
Similarly, GRAB lacks a theoretical analysis on CM setting. 
}

Secondly, \ouralgo{} and TopRank enjoy a logarithmic regret in all settings and our algorithm \ouralgo{} outperforms TopRank for the models such that the $K$ best items do not have the same attractiveness $\theta_i$: query 8107157 of Yandex, simul PBM, and simul CM.
When confronted to other models, \ouralgo{} has a regret strictly smaller than TopRank before the iteration $t=10^6$, and smaller or equal to TopRank at the horizon.
Moreover, as already explained, TopRank is aware of the horizon $T$ and may stop (over)exploring early, as can be observed in the CM model after iteration $10^5$.
If TopRank targets a horizon $T=10^{12}$ or uses the doubling trick it suffers a higher regret than \ouralgo{}.

Regarding the computational complexity, as shown in  \cref{tab:computation}, PB-MHB is significantly slower with a computation time per recommendation ten times higher than any other algorithm. These other algorithms have a similar computation time of approximately 1 ms per recommendation.

Overall, as TopRank, \ouralgo{} is consistent over all settings, and require a reasonable computation time. Moreover, contrary to TopRank, (i) \ouralgo{} drastically decreases its regret by taking advantage of the differences of attractiveness between items, and (ii) \ouralgo{} does not require the knowledge of the horizon $T$.

\section{Conclusion} \label{sec:conclusion}

We have presented \ouralgo{}, a unimodal bandit algorithm for online ranking. The regret bound in $\OO\left(L/\Delta \log T\right)$
%($L$ being the number of items)
of our algorithm, is a direct consequence of the unimodality-like property of the bandit setting with respect to a graph where nodes are ordered partitions of items. Even though the proof is inspired by OSUB \citep{Combes2014}, the fact that \ouralgo{} handles partitions instead of recommendations, uses different estimators and builds upon an unusual exploration-exploitation strategy makes it original, and we believe that our theoretical analysis opens new perspectives for other semi-bandit settings.
Experiments against state-of-the-art learning algorithms show that our method is consistent in all settings, enjoys a smaller regret than TopRank and GRAB on specific settings, and has a much smaller computation time than PB-MHB.
%yields a smaller regret than its closest competitor, TopRank\citep{TopRank}, and is consistent on all settings. %while algorithms dedicated to specific settings fail on some other setting except for PB-MHB which requires 10 times more computation time and does not have any theoretical analysis.

While in industrial applications, contextual information is also used to build recommendations \citep{Li2019, Chen2019, Ermis2020, Gampa2021},
in this paper we restricted ourselves to independent arms to simplify the presentation of the approach. However, the integration of unimodal bandit algorithms working on parametric spaces \citep{Combes2020} should bridge the gap between both approaches.

\section*{Ethical Statement}
Regarding the societal impact of the proposed approach, it is worth mentioning that the approach aims at identifying and recommending the most popular items. Therefore, the approach may increase the monopoly effects: the most attractive items are displayed more often, so their reputation increases, and then they may become even more attractive\dots{} However bandit algorithms continuously explore and therefore continuously offer an opportunity to less popular items to increase their reputation.

% Acknowledgements should only appear in the accepted version.
\section*{Acknowledgements}
We thank the reviewers for their valuable comments towards clarification of the paper. 
This research was partially supported by
the Inria Project Lab “Hybrid Approaches for Interpretable AI” (HyAIAI)
and
the network on the foundations of trustworthy AI, integrating learning, optimisation, and reasoning (TAILOR) financed by the EU’s Horizon 2020 research and innovation program under agreement 952215.

\commentrg{Regarder les forlater}

% In the unusual situation where you want a paper to appear in the
% references without citing it in the main text, use \nocite
%\nocite{langley00}

\commentrg{ICML: References must include page numbers whenever possible and be as complete as possible. Place multiple citations in chronological order.
}

\bibliography{bib}
\bibliographystyle{icml2022}

%%%%%%%%%%%%%%%%%%%%%%%%%%%%%%%%%%%%%%%%%%%%%%%%%%%%%%%%%%%%%%%%%%%%%%%%%%%%%%%
%%%%%%%%%%%%%%%%%%%%%%%%%%%%%%%%%%%%%%%%%%%%%%%%%%%%%%%%%%%%%%%%%%%%%%%%%%%%%%%
% APPENDIX
%%%%%%%%%%%%%%%%%%%%%%%%%%%%%%%%%%%%%%%%%%%%%%%%%%%%%%%%%%%%%%%%%%%%%%%%%%%%%%%
%%%%%%%%%%%%%%%%%%%%%%%%%%%%%%%%%%%%%%%%%%%%%%%%%%%%%%%%%%%%%%%%%%%%%%%%%%%%%%%
\newpage
\appendix
\onecolumn

\section{Organisation of the Appendix}

The appendix is organized as follows. After listing most of the notations used in the paper in Appendix \ref{app:notations}, we prove Lemma \ref{lem:pbm_and_cm_are_unimodal} in Appendix \ref{app:pbm_and_cm_are_unimodal}.
Then we prove some technical lemmas in Appendix \ref{app:technical_lemmas}, which are required by the proof of Theorem \ref{theo:unirank} in Appendix \ref{app:regret_of_unirank}. Finally, we discuss the regret upper-bound of \ouralgo{} for some specific settings in Appendix \ref{app:vs_others}.

%%%%%%%%%%%%%%%%%%%%%%%%%%%%%%%%%%%%%%%%%%%%%%%%%%%%%%%%%%%%
%%%%%%%%%%%%%%%%%%%%%%%%%%%%%%%%%%%%%%%%%%%%%%%%%%%%%%%%%%%%
%%%%%%%%%%%%%%%%%%%%%%%%%%%%%%%%%%%%%%%%%%%%%%%%%%%%%%%%%%%%
%%%%%%%%%%%%%%%%%%%%%%%%%%%%%%%%%%%%%%%%%%%%%%%%%%%%%%%%%%%%

\section{Notations}\label{app:notations}

\begin{table}[htbp]
    \centering
\begin{sc}
\scriptsize
    \begin{longtable}{ll}
        \toprule
        \textbf{Symbol} &\textbf{Meaning}\\
        \midrule
        \endhead
        
        \hline \multicolumn{2}{r}{{Continued on next page}}
        \endfoot
        
        \bottomrule
        \endlastfoot

        T& Time horizon\\
        $t$& iteration \\
        L& number of items \\
        $i$& index of an item  \\ 
        K& number of positions in a recommendation\\
        $k$ & index of a position \\
        $[n]$ & set of integers $\{1,\dots,n\}$\\
        $\perm_K^L$ & set of permutations of K distinct items among L\\
        $\thetav$& vectors of probabilities of click\\
        $\theta_i$& probability of click on item $i$ \\
        $\kappav$& vectors of probabilities of view \\
        $\kappa_k$& probability of view at position $k$ \\
        $\Ac$ & set of bandit arms\\
        $\av$ & an arm in  $\Ac$ \\
        $\av(t)$ & the arm chosen at iteration $t$ \\
        $a_k$ & item displayed at position k in the recommendation $\av$ \\
        $\av^*$ & best arm \\
        $\rho$ & function from $\perm_K^L\times[K]$ to $[0,1]$ giving the probability of click   \\
        $\rho(\av,k)$ & probability of click on the item displayed at position $k$ when recommending $\av$ \\
        $\cv(t)$& clicks vector at iteration $t$ \\
        $c_i(t)$& clicks on item i at iteration $t$ \\
        $r(t)$ & reward collected at iteration $t$, $r(t)=\sum_{i=1}^L c_i(t)$\\
        $\mu_\av$& expectation of $r(t)$ while recommending $\av$, $\mu_\av=\EE[r(t)\mid\av(t)=\av]$ \\
        $\mu^*$& highest expected reward, $\mu^*=\max_{\av\in \perm_K^L}\mu_\av$\\
        $\Delta$& generic reward gap between one of the sub-optimal arms and one of the best arms\\
        $\Delta_c$& reward gap while exchanging items ${\min(c-1,K)}$ and $c$ in the optimal recommendation, %$\Delta_c \defeq\rho\left(\av^*,\min(c-1,K)\right) + \rho\left(\av^*,c\right)\quad- \rho\left(({\min(c-1,K)},{c})\circ\av^*,\min(c-1,K)\right) - \rho\left(({\min(c-1,K)},{c})\circ\av^*,c\right)$ 
        \\
        $\tilde\delta_{i,j}$& smallest probability for $c_i(t)$ to be different from $c_j(t)$\\
        & while both items are in the same subset of the chosen partition $\Pm(t)$ \\
        %&$\tilde{\delta}_{i,j} \defeq
        %\min_{\Pm: \exists c, (i,j)\in P_c^2 \land \recommended(P_c)}
        %\PP_{\av(t) \sim \Uc(\Ac\left(\Pm\right)) }\left[c_i(t) \neq c_j(t)\right]$ \\
        $\tilde\delta_k^*$& smallest probability for $c_{\min(k-1, K)}(t)$ to be different from $c_k$, while both items are in the same \\
        & subset of the chosen partition $\Pm(t)$ and $\Pm(t)$ is in the neighborhood of the optimal partition\\
        $\tilde{\Delta}_{i,j}$& smallest (respectively highest) expected difference of click between items $i$ and $j$ if $i\succ j$ (resp. $j \succ i$)\\
        & while both items are in the same subset of the chosen partition $\Pm(t)$\\
        %&$\tilde{\Delta}_{i,j} \defeq
        %\min_{\Pm: \exists c, (i,j)\in P_c^2 \land \recommended(P_c)}
        %\EE_{\av(t) \sim \Uc(\Ac\left(\Pm\right)) }\left[c_i(t)-c_j(t) \mid c_i(t) \neq c_j(t)\right]$ \\
        $R(T)$ & cumulative (pseudo-)regret, $R(T) = T\mu^* - \EE\left[\sum_{t = 1}^T \mu_{\av(t)}\right]$\\
        $\succ$& strict weak order \\
%        $\Pi_\rhov(\av)$& set of permutations in $\perm_K^K$ ordering the positions s.t. $\rho(a_{\pi_{1}},\pi_{1}} \geqslant \rho(a_{\pi_{2}},\pi_{2}} \geqslant           \dots \geqslant \rho(a_{\pi_{K}},\pi_{K}}$\\
%        $\piv$& element of  $\Pi_\rhov(\av)$\\
%        $\tilde\piv$& estimation of $\piv$\\
        $(i,j) \circ \av$ & permutation swapping items i and j in recommendation $\av$ \\
%        $\av[\pi_{K}:= i]$& permutation leaving $\av$ the same for any position except $\pi_{L}$ for which $\av[\pi_{K}~:=~i]_{\pi_{K}}=i$\\
%        ${\mapping}$ & rankings of positions respecting $\Pi_\rhov$,  ${\mapping} =\left(\piv_\av\right)_{\av \in \perm_K^L}$ s.t. $\forall \av\in \perm_K^L,             \piv_\av\in\Pi_\rhov(\av)$\\
        $\Pm$& ordered partition of items representing a subset of recommendations, $\Pm = \left(P_1,\dots, P_d\right)$\\ 
        $P_c$& $c^{th}$ part of $\Pm$ such as $\bigcup_{c=1}^dP_c = [L]$, and $P_c\cap P_{c'}$ is empty when $c\neq c'$ \\
         $\Ac\left(\Pm\right)$& set of recommendations $\av$ agreeing with $\Pm$\\
        %&$\Ac\left(\Pm\right) \defeq \left\{\av\in \perm_K^L: \forall k\in[K],\exists c\in[d], a_k\in P_c, \sum_{c'=1}^{c-1}|P_{c'}| < k \leqslant \sum_{c'=1}^{c}|P_{c'}| \right\}$\\
        $\tilde{\Pm}(t)$&best partition at iteration $t$ given the previous choices and feedbacks (called leader)\\
        $\Pm^* $& partition such that any permutation $\av$ in $\Ac\left(\Pm^*\right)$ is compatible with the strict weak order on items.\\
        $\Gc$ & graph carrying a partial order on the partitions of items\\
        $\Nc(\tilde{\Pm})$&Neighborhood in $\Gc$ of the partition $\Pm$, 
        $\Nc(\tilde{\Pm})\defeq\left\{\left(\tilde{P}_1(t), \dots,\tilde{P}_{c-1}(t), \tilde{P}_c(t) \cup \tilde{P}_{c+1}(t),\tilde{P}_{c+2}(t),\dots \tilde{P}_{\tilde{d}}(t)\right) : c \in [\tilde{d}-2]\right\}$\\&\hfill$\cup
\left\{\left(\tilde{P}_1(t), \dots,\tilde{P}_{\tilde{d}-1}(t)\cup\{j\}, \tilde{P}_{\tilde{d}-1}(t)\setminus\{j\}, \tilde{P}_{\tilde{d}}(t)\right) : j \in \tilde{P}_{\tilde{d}}(t)\right\}.$
\\
        $t_{i,j}(t)$& number of iterations at which items $i$ and $j$ have been gathered in the same subset of items $P_c(s)$,\\
        &$t_{i,j}(t)\defeq\sum_{s = 1}^{t-1}\ind\left\{\exists c, (i,j)\in P_c(s)^2\right\}$ \\
        $T_{i,j}(t)$ & number of iterations at which items $i$ and $j$ have been gathered in the same subset of items $P_c(s)$\\
        &and lead to a different click value, $T_{i,j}(t) = \sum_{s = 1}^{t-1} \ind\left\{\exists c, (i,j)\in P_c(s)^2\right\}\ind\{c_i(s) \neq c_j(s)\}$\\
        $\tilde{t}_{\tilde\Pm}(t)$ & number of time a permutation $\tilde\Pm$ as been the leader, $\tilde{t}_{\tilde\Pm}(t) \defeq\sum_{s = 1}^{t-1}\ind\left\{\tilde\Pm(s)=\tilde\Pm\right\}$ \\
%        $\tilde{T}_\av(t)$ &  number of iterations s.t. the leader was $\av$, $\tilde{T}_\av(t) \defeq \sum_{s = 1}^{t-1} \ind\{\tilde\Pm(s) = \av\}$ \\
%        $T_\av(t)$ &  number of iterations s.t. the chosen arm was $\av$, $T_\av(t) = \sum_{s = 1}^{t-1} \ind\{\av(s) = \av\}$ \\
%        $T_\av^{\tilde\Pm}(t)$ & number of iterations s.t. the leader was $\tilde\Pm$, the chosen arm was $\av$, and $\av$ was chosen
%        \\& by the argmax on $\sum_{k=1}^L b_{a_k, k}(t)$: $T_{\av}^{\tilde\Pm}(t) = \sum_{s=1}^{t-1} \ind\left\{\tilde\Pm(s)=\tilde\Pm, \av(s) = \av, \tilde{T}_{\tilde\Pm}(s)/K \notin \NN\right\}$\\
%        $\hat\rho(i,k)(t)$& estimation of $\rho(i,k)$ at iteration $t$,  $\hat\rho(i,k)(t) = \frac{1}{T_{i(\Pm^-),j(\Pm^-)}(t)}\sum_{s = 1}^{t-1} \ind\{a_k(s) = i\}c_k(s)$\\
%        $b_{i(\Pm^-),j(\Pm^-)}(t)$ & Kullback-Leibler index of $\hat\rho(i,k)(t)$ , $b_{i(\Pm^-),j(\Pm^-)}(t) = f\left(\hat\rho(i,k)(t), T_{i(\Pm^-),j(\Pm^-)}(t), \tilde{T}_{\tilde\Pm(t)}(t)+1\right)$ \\
         $\tilde\delta_{i,j}(\av) $& probability of difference, $
\tilde\delta_{i,j}(\av) = \PP_{\av'\sim \Uc(\{\av, (i,j)\circ\av\}) }\left[c_i \neq c_j\right]$\\
         $\tilde\Delta_{i,j}(\av) $& expected click difference, $
\tilde\Delta_{i,j}(\av) = \EE_{\av'\sim \Uc(\{\av, (i,j)\circ\av\}) }\left[c_i-c_j \mid c_i \neq c_j\right]$\\
        $\hat{s}_{i,j}(t)$ & \ouralgo{}'s main statistic to infer that $i\succ j$, $\hat{s}_{i,j}(t)\defeq \frac{1}{T_{i,j}(t)}\sum_{s = 1}^{t-1} \ind\left\{\exists c, (i,j)\in P_c(s)^2\right\}(c_i(s) - c_j(s))$\\
        $\bar{\bar s}_{j,i}(t)$ & Kullback-Leibler based optimistic estimator, $\bar{\bar s}_{j,i}(t)\defeq 2*f\left(\frac{1+\hat{s}_{i,j}(t)}{2}, T_{i,j}(t), \tilde{t}_{\tilde\Pm}(t)\right)-1$\\
%        $\tilde{s}_{i,j}(t)$& slightly pessimistic estimator $\tilde{s}_{i,j}(t)\defeq {\hat s}_{i,j}(t) - \sqrt{\log \log t/T_{i,j}(t)}$ \\
%        $B_{\av}(t)$ & pseudo-sum of indices of $\av$ at iteration t, $B_{\av}(t) = \sum_{k=1}^K b_{a_k, k}(t) - \sum_{k=1}^K b_{\tilde{a}_k(t), k}(t)$\\
%        $\Nc_{\pi^*}(a^*)$& neighborhood of the best arm\\
%        $P^K_L$& number of L-permutations of K items\\
%        $P_{max}$& size of the larger set of $\Pi_\rhov(\av)$\\
%        $K_\av$& (with combinatorial bandit setting) number of elements in $\av$ but not in $\av^*$,\\& $L_\av = \min_{\av^*\in\Ac: \mu_{\av^*}=\mu^*} |\av\setminus\av^*|$\\
%        $K_{max}$& (with combinatorial bandit setting) maximal number of elements in a sub-optimal arm $\av$\\& but not in an optimal arm $a^*$,
%        $L_{max}=\max_{\av \in\Ac: \mu_\av\neq\mu^*} K_\av$\\
%        $c^*\left(\thetav, \kappav\right)$ & coefficient in the regret bound of PMED\\
%        $c$ & (in $\varepsilon_n$-greedy) parameter controlling the probability of exploration\\
        $f$ & Kullback-Leibler index function, $f(\hat\mu, T, t) \defeq \inf \{\mu\in[0, \hat\mu]: T\times\KL(\hat\mu, \mu) \leq \log (t) + 3 \log(\log (t))\},$\\
        $\KL(p,q)$ & Kullback-Leibler divergence from a Bernoulli distribution of mean $p$\\
        &to a Bernoulli distribution of mean $q$,  $\KL(p,q) = p \log \left(\frac{p}{q}\right) + (1-p)\log \left(\frac{1 - p}{1 - q}\right)$ \\
        $\Uc(S)$ & uniform distribution on the set $S$\\
        $c$ & (in PB-MHB) parameter controlling size of the step in the Metropolis Hasting inference \\
%        $m$ & (in PB-MHB) number of step in the Metropolis Hasting inference \\
\end{longtable}
\end{sc}
    \caption{Summary of the notations.}
    \label{tab:notationsapp}
\end{table}

Table \ref{tab:notationsapp} summarizes the notations used throughout the paper and the appendix. Below are additional notations necessary for the proofs.

\begin{definition}[Specific notations to count events and observations]
The proofs are based on the concentration
of the statistic $\hat{s}_{i,j}(t)$ which is the average over $T_{i,j}(t)$ observations.
The number $T_{i,j}(t)$ itself is a sum: the sum of the random variables
$\ind\{c_i(s) \neq c_j(s)\} \mid \exists c, (i,j)\in P_c(s)^2$, where $s$ is in $[t]$. To discuss the concentration of this sum, for any iteration $t$ in $[T]$, we denote $t_{i,j}(t)\defeq\sum_{s = 1}^{t-1}\ind\left\{\exists c, (i,j)\in P_c(s)^2\right\}$ the number of iterations at which the random variable is observed. 
\end{definition}

\begin{definition}[Recommended subset]
Let  $(L, K, \rho)$ be an online learning to rank problem, $\Pm$ be an ordered partition of $[L]$ in $d$ subsets, and $c\in[d]$ the index of one of these subsets. The subset $P_c$ is \emph{recommended} (denoted $\recommended(P_c)$) if the recommendations compatible with $\Pm$ include some items from $P_c$. More specifically, the subset $P_c$ is \emph{recommended} if $|\bigcup_{\ell\in[c-1]}P_\ell| < K$.
\end{definition}

\begin{definition}[Expectations on clicks]
let $i$ and $j$ be two different items.

We denote 
$$\tilde{\delta}_{i,j} \defeq
\min_{\Pm: \exists c, (i,j)\in P_c^2 \land \recommended(P_c)}
\PP_{\av(t) \sim \Uc(\Ac\left(\Pm\right)) }\left[c_i(t) \neq c_j(t)\right]$$
the smallest probability for $c_i(t)$ to be different from $c_j(t)$ while both items are in the same subset of the chosen partition $\Pm(t)$ (and may potentially be clicked upon). If we assume $1 \succ 2  \succ \dots \succ L$, we also denote
$$\tilde{\delta}_{i}^* \defeq
\min_{\substack{\Pm\in\Nc(\left(\{1\},\dots,\{K\}, \{{K+1},\dots,L\}\right)): \exists c, (\min(i-1,K),i)\in P_c^2}}
\PP_{\av(t) \sim \Uc(\Ac\left(\Pm\right)) }\left[c_{\min(i-1,K)}(t) \neq c_i(t)\right]$$
the smallest probability for $c_{\min(i-1,K)}(t)$ to be different from $c_i(t)$ while both items $\min(i-1,K)$ and $i$ are in the same subset of the chosen partition $\Pm(t)$ (and may potentially be clicked upon), and $\Pm(t)$ is in the neighborhood of the optimal partition $\Pm^* = \left(\{1\},\dots,\{K\}, \{{K+1},\dots,L\}\right)$.

If $i\succ j$, we denote 
$$\tilde{\Delta}_{i,j} \defeq
\min_{\Pm: \exists c, (i,j)\in P_c^2 \land \recommended(P_c)}
\EE_{\av(t) \sim \Uc(\Ac\left(\Pm\right)) }\left[c_i(t)-c_j(t) \mid c_i(t) \neq c_j(t)\right]
= \min_{\av\in\perm_K^L:\{i,j\}\cap\av([K])\neq\varnothing}\tilde\Delta_{i,j}(\av),$$
the smallest expected difference of clicks between items $i$ and $j$ while both items are in the same subset of the chosen partition $\Pm(t)$ (and may potentially be clicked upon). 

Symmetrically, if $j\succ i$, we denote
$$\tilde{\Delta}_{i,j} \defeq
\max_{\Pm: \exists c, (i,j)\in P_c^2 \land \recommended(P_c)}
\EE_{\av(t) \sim \Uc(\Ac\left(\Pm\right)) }\left[c_i(t)-c_j(t) \mid c_i(t) \neq c_j(t)\right]
= \max_{\av\in\perm_K^L:\{i,j\}\cap\av([K])\neq\varnothing}\tilde\Delta_{i,j}(\av),$$
the greatest expected difference of clicks between items $i$ and $j$ while both items are in the same subset of the chosen partition $\Pm(t)$ (and may potentially be clicked upon). 

Lemma \ref{lem:min_s} in Appendix \ref{app:min_s} ensures the proper definition of these notations under Assumptions  \ref{asp:strict_weak_order}, \ref{asp:acyclicity}, and \ref{asp:identifiability}, and states that
$\tilde{\delta}_{i,j}=\tilde{\delta}_{j,i}>0$ and $\tilde{\Delta}_{i,j}=-\tilde{\Delta}_{j,i}>0$ if $i\succ j$.
\end{definition}

\begin{definition}[Reward gap]
Let $(L, K, \rho)$ be an OLR problem satisfying Assumption \ref{asp:acyclicity} and such that the order on items is a total order. Without loss of generality, let us assume that $1\succ 2 \succ \dots \succ L$.. Denoting $\Pm^* = \left(\{1\},\dots,\{K\}, \{{K+1},\dots,L\}\right)$ the optimal partition associated to this order and taking $c\geqslant 2$, the \emph{reward gap} of item $c$ is 
\begin{align*}
\Delta_c
&\defeq\rho\left(\av^*,\min(c-1,K)\right) + \rho\left(\av^*,c\right)\\
&\quad- \rho\left(({\min(c-1,K)},{c})\circ\av^*,\min(c-1,K)\right) - \rho\left(({\min(c-1,K)},{c})\circ\av^*,c\right)
\end{align*}

Note that for $c\leqslant K$,
$
\Delta_c\defeq\rho(\av^*,c-1) + \rho(\av^*,c) - \rho(({c-1},{c})\circ\av^*,c-1) - \rho(({c-1},{c})\circ\av^*,c),
$
and for $c\geqslant K+1$,
$
\Delta_c=\rho(\av^*,K) - \rho((K,c)\circ\av^*,K).
$
\end{definition}

\section{Algorithm for the Elicitation of the leader partition $\tilde\Pm(t)$}\label{app:algo_elicitation}

\begin{algorithm}[t!]
\caption{Elicitation of the leader partition $\tilde\Pm(t)$}\label{alg:leader_elicitation}
\begin{algorithmic}[1]
%\Require 
\REQUIRE number of items $L$, number of positions $K$,
iteration index $t$, statistics ${\hat s}_{i,j}(t)$
    %\STATE $\tilde{\Pm}(t) = \left(\tilde{P}_1(t), \dots, \tilde{P}_{\tilde{d}}(t)\right) \gets topological$ 
    \STATE $\tilde{d} \gets 1$; $R \gets [L]$; $n\gets L$
    \REPEAT
        \STATE \textbf{for each} $i\in R$, $S_i \gets |\left\{j \in R : {\hat s}_{i,j}(t) > 0 \right\}|$
        \STATE sort items in $R$ by $S_i$: $S_{i_1} > S_{i_2} > \dots > S_{i_n}$
        \STATE $\ell\gets\min\left\{\ell\in[n]: \forall k<\ell, \forall k'\geqslant\ell: {\hat s}_{i_k,i_{k'}}(t) > 0 \right\}$
        \STATE $\displaystyle B \gets \left\{i_1,\dots,i_{\ell-1}\right\}$;
         $\displaystyle \tilde{P}_{\tilde{d}}(t) \gets B$
        \STATE $\tilde{d} \gets \tilde{d}+1$; $R\gets R\setminus B$; $n\gets|R|$
    \UNTIL{$\left|\bigcup_{\tilde{c}=1}^{\tilde{d}}\tilde{P}_{\tilde{c}}(t)\right| \geqslant K$}
    \STATE $\tilde{d} \gets \tilde{d}+1$ ; $\displaystyle \tilde{P}_{\tilde{d}}(t) \gets R$
    \STATE \textbf{return} $\tilde\Pm(t)$
\end{algorithmic}
\end{algorithm}

\forlater{to be added: DCM}

\section{Proof of Lemma \ref{lem:pbm_and_cm_are_unimodal} (PBM and CM Fulfills Assumptions  \ref{asp:strict_weak_order}, \ref{asp:acyclicity}, and \ref{asp:identifiability}) }\label{app:pbm_and_cm_are_unimodal}

For both CM and PBM click models, we note $\theta_i$ the click probability of item $i$. For PBM we have $\kappa_k$ the  probability that a user see the position $k$. 

\begin{proof}
Let us begin with some preliminary remarks. 

First, with PBM model, the positions are ranked by decreasing observation probability, meaning that $\kappa_{a_1}\geqslant\kappa_{a_2}\geqslant\dots\geqslant\kappa_{a_K}$.

Secondly, by definition, $\rho(k,\av) > 0$ for any position $k$ and recommendation $\av$, which implies that:
\begin{itemize}
    \item $\min_i \theta_i > 0$ and $\max_i \theta_i < 1$  in CM model;
    \item $\kappa_K > 0$ in PBM model.
\end{itemize}

Let us now prove that Assumptions \ref{asp:strict_weak_order}, \ref{asp:acyclicity} and \ref{asp:identifiability} are fulfilled by PBM and CM click models with the strict weak order $\succ$ defined by $i \succ j \iff \theta_i > \theta_j$. 

By definition of $\succ$, \cref{asp:strict_weak_order} is fulfilled taking the the preferential attachment function $g: i \mapsto \theta_i$, and \cref{asp:strict_weak_order}$^*$ is fulfilled as soon as $\theta_i \neq \theta_j$ for any item $i$ in top-$K$ items and any item $j\neq i$.

For Assumption \ref{asp:acyclicity},
we have to prove that having $\av$ compatible with $\succ$ is optimal, meaning $\mu_\av = \mu^*$.

Let $\av$ be a permutation compatible with $\succ$.

In the case of CM, $\mu_{\av}= 1-\sum_{k=1}^{K}(1-\theta_{a_k})$. In order to maximize $\mu_{\av}$, one has to select the $K$ higher values of $\thetav$. As $\av$ is compatible with $\succ$, which is defined based on values $\theta_i$, it satisfies this property. Hence, CM fulfills Assumption \ref{asp:acyclicity}.

For PBM, $\mu_{\av}=\sum_{k=1}^{K}\theta_{a_k}\kappa_k$. As the series $(\kappa_k)_{k\in[K]}$ is non-increasing,  $\mu_\av$ is maximized if $(\theta_k)_{k\in[K]}$ is also non-increasing and if $\theta_K \geqslant \max_{k\geqslant K+1}\theta_k$. These properties are ensured by the fact that $\av$ is compatible with $\succ$ and that $\succ$ is defined based on values $\theta_i$. Hence, PBM fulfills Assumption \ref{asp:acyclicity}.

We now prove that CM and PBM fulfill Assumption \ref{asp:identifiability}. Let $i$ and $j$ be two distinct items such that $i\succ j$ and $\av\in\perm_K^L$ be a recommendation such that at least one of both items is displayed.

First, $\EE_{\av'\sim \Uc(\{\av, (i,j)\circ\av\}) }\left[c_i(t)\neq c_j(t) \mid \av(t)=\av'\right]$ is non-null with PBM model as $c_i(t)$ and $c_j(t)$ are independent and as at least one of the four variables $c_i(t) \mid \av(t)=\av$, $c_i(t) \mid \av(t)=(i,j)\circ\av$, $c_j(t) \mid \av(t)=\av$, $c_j(t) \mid \av(t)=(i,j)\circ\av$ has an expectation which is non-zero and strictly smaller than 1 (due to $\kappa_K>0$ and $\theta_i>\theta_j$).

Similarly, $\EE_{\av'\sim \Uc(\{\av, (i,j)\circ\av\}) }\left[c_i(t)\neq c_j(t) \mid \av(t)=\av'\right]$ is non-null with CM model as at most one of both items can be clicked at each iteration and the shown item has non-zero probability to be clicked (by definition of $\rho$).

Then, we consider $\tilde\Delta_{i,j}(\av)$ as

$$
\tilde\Delta_{i,j}(\av) = \frac{\PP_{\av'\sim \Uc(\{\av, (i,j)\circ\av\}) }(c_i=1,c_j=0)-\PP_{\av'\sim \Uc(\{\av, (i,j)\circ\av\}) }(c_i=0,c_j=1)}{\PP_{\av'\sim \Uc(\{\av, (i,j)\circ\av\}) }(c_i=1,c_j=0)+\PP_{\av'\sim \Uc(\{\av, (i,j)\circ\av\}) }(c_i=0,c_j=1)}
$$

We want to control the sign of $\tilde\Delta_{i,j}(\av)$, which is also the sign of its numerator, as its denominator (noted $D_{\tilde\Delta_{i,j}(\av)}$) is non-negative.

The recommendation $\av'$ is drawn uniformly in $\{\av, (i,j)\circ\av\}$ thus  $$\PP_{\av'\sim \Uc(\{\av, (i,j)\circ\av\}) }(c_i=1,c_j=0)=\frac{1}{2}\PP_{\av}(c_i=1,c_j=0) +\frac{1}{2}\PP_{(i,j)\circ\av }(c_i=1,c_j=0).
$$

When considering a CM click model, we have $\PP_{\av}(c_i=1,c_j=0)= \prod_{p=1}^{k-1}(1-\theta_{a_p})\theta_i$ and  $\PP_{\av}(c_i=0,c_j=1)= \prod_{p=1}^{l-1}(1-\theta_{a_p})\theta_j$ when i and j  $\in \av$.

In that case, we have: 

$$
\tilde\Delta_{i,j}(\av) = \frac{
\frac{1}{2}\prod_{p=1}^{k-1}(1-\theta_{a_p})\theta_i +\frac{1}{2}\prod_{p=1}^{l-1}(1-\theta_{a_p})\theta_i
-\left(\frac{1}{2}\prod_{p=1}^{l-1}(1-\theta_{a_p})\theta_j +\frac{1}{2}\prod_{p=1}^{k-1}(1-\theta_{a_p})\theta_j\right)
}{D_{\tilde\Delta_{i,j}(\av)}}
$$

which can be simplified in: 
$$
\tilde\Delta_{i,j}(\av) = \frac{\frac{1}{2}\left(\prod_{p=1}^{k-1}(1-\theta_{a_p})+\prod_{p=1}^{l-1}(1-\theta_{a_p})\right)(\theta_i-\theta_j) }{D_{\tilde\Delta_{i,j}(\av)}}.
$$

Since $\max_i \theta_i < 1 $,  $\prod_{p=1}^{k-1}(1-\theta_{a_p})+\prod_{p=1}^{l-1}(1-\theta_{a_p})>0 $, thus the sign of $\tilde\Delta_{i,j}(\av)$ is the sign of $(\theta_i-\theta_j)$ and  $\tilde\Delta_{i,j}(\av)>0 \iff \theta_i > \theta_j \iff i \succ j $.

Now if $i\notin \av $ then $\PP_{\av}(c_i=1,c_j=0)= 0$ as the position is not seen. We have: 

$$
\tilde\Delta_{i,j}(\av) = \frac{\frac{1}{2}(\prod_{p=1}^{l-1}(1-\theta_{a_p}))(\theta_i-\theta_j) }{D_{\tilde\Delta_{i,j}(\av)}}
$$
which leads to the same conclusion as the previous case.
By symmetry, we have the same conclusion with j $\notin \av $.

Now with a PBM click model, we have $\PP_{\av}(c_i=1,c_j=0)= \kappa_k\theta_i (1-\kappa_l\theta_j)$ as $c_i=1$ and $c_j=0$ are independant events.

Thus, we have: 

$$
\tilde\Delta_{i,j}(\av) = \frac{\frac{1}{2}\kappa_k\theta_i (1-\kappa_l\theta_j) +\frac{1}{2}\kappa_l\theta_i (1-\kappa_k\theta_j)
-\left(\frac{1}{2}\kappa_l\theta_j (1-\kappa_k\theta_i) +\frac{1}{2}\kappa_k\theta_j (1-\kappa_l\theta_i)\right) }{D_{\tilde\Delta_{i,j}(\av)}}
$$

which can be simplified in: 
$$
\tilde\Delta_{i,j}(\av) = \frac{\frac{1}{2}(\kappa_k+\kappa_l)(\theta_i-\theta_j) }{D_{\tilde\Delta_{i,j}(\av)}}
$$

As $\kappa_k$ or $\kappa_l$ is positive if $i$ or $j$ is presented, similarly to the CM case we have $\tilde\Delta_{i,j}(\av)>0 \iff \theta_i > \theta_j \iff i \succ j $. 

This proof can be extended to $i$ or $j$ $\notin \av$ by taking $\kappa_k = 0 $ when $k>K$.

We can conclude that both CM and PBM fulfills Assumption \ref{asp:identifiability}.
\end{proof}

%%%%%%%%%%%%%%%%%%%%%%%%%%%%%%%%%%%%%%%%%%%%%%%%%%%%%%%%%%%%
%%%%%%%%%%%%%%%%%%%%%%%%%%%%%%%%%%%%%%%%%%%%%%%%%%%%%%%%%%%%
%%%%%%%%%%%%%%%%%%%%%%%%%%%%%%%%%%%%%%%%%%%%%%%%%%%%%%%%%%%%
%%%%%%%%%%%%%%%%%%%%%%%%%%%%%%%%%%%%%%%%%%%%%%%%%%%%%%%%%%%%
\forlater{
\section{One Iteration of \ouralgo{}, a Concrete Example}\label{app:example_iteration}

\writerg{
Let us demonstrate the behaviour of \ouralgo{} for one iteration. We consider the setting with $L=7$ items to be displayed at $K=4$ positions. Let us assume $1 \succ 2 \succ 3 \succ 4 \succ 5 \succ 6 \succ 7$ which is unknown from the algorithm.

We consider an iteration $t$ such that the matrix $\hat{\Sm} \defeq\left[\hat{s}_{i,j}(t)\right]_{(i,j)\in [L]^2}$ and the corresponding slight-optimistic estimate matrix $\tilde{\Sm} \defeq\left[\tilde{s}_{i,j}(t)\right]_{(i,j)\in [L]^2}$ are
$$
\hat{\Sm} = \left[
\begin{array}{ccccccc}
    \nullCell & + & + & - & + & + & + \\
    - & \nullCell & + & + & - & + & + \\
    - & - & \nullCell & + & + & + & - \\
    + & - & - & \nullCell & + & + & + \\
    - & + & - & - & \nullCell & + & + \\
    - & - & - & - & - & \nullCell & - \\
    - & - & + & - & - & + & \nullCell
\end{array}
\right]
\text{ and }
\tilde{\Sm} = \left[
\begin{array}{ccccccc}
    \nullCell & - & + & - & + & + & + \\
    - & \nullCell & + & + & - & + & + \\
    - & - & \nullCell & + & + & + & - \\
    - & - & - & \nullCell & - & + & + \\
    - & - & - & - & \nullCell & + & + \\
    - & - & - & - & - & \nullCell & - \\
    - & - & - & - & - & + & \nullCell
\end{array}
\right],
$$
where the $+$ and $-$ symbols indicate the sign of $\hat{s}_{i,j}(t)$.

Note that, due to the order on items, the expectation of $\hat{\Sm}$ is a matrix with non-negative values in the upper-triangular part, and non-positive values in the lower-triangular part. Being only one realisation, $\hat{\Sm}$ does not fulfill this pattern but a close one.

Note also that $\hat{\Sm}$ is symmetric as $\hat{s}_{i,j}(t) = -\hat{s}_{j,i}(t)$ by definition of $\hat{s}_{i,j}(t)$. This isn't true for  $\tilde{\Sm}$, however $\tilde{\Sm} \leqslant \hat{\Sm}$ by definition.

\paragraph{Leader-Partition Elicitation}
The Leader-partition $\tilde{\Pm}(t)$ is build recursively from sub-parts of $\tilde{\Sm}$.

First, only the columns 1 and 2 of $\tilde{\Sm}$ are composed of negative values. So $\tilde{P}_1(t)=\{1,2\}$.

Secondly, we restrict ourselves to remaining items, meaning we consider the matrix 
$$
\tilde{\Sm}_{3:L,3:L} = \left[
\begin{array}{ccccc}
    \nullCell & + & + & + & - \\
    - & \nullCell & - & + & + \\
    - & - & \nullCell & + & + \\
    - & - & - & \nullCell & - \\
    - & - & - & + & \nullCell
\end{array}
\right],
$$
for which, only the first column (corresponding to item 3) is composed of negative values. So $\tilde{P}_2(t)=\{3\}$.

Thirdly, we restrict ourselves to remaining items, meaning we consider the matrix 
$$
\tilde{\Sm}_{4:L,4:L} = \left[
\begin{array}{cccc}
    \nullCell & - & + & + \\
    - & \nullCell & + & + \\
    - & - & \nullCell & - \\
    - & - & + & \nullCell
\end{array}
\right],
$$
for which, only columns 1 and 2 (corresponding to items 4 and 5) is composed of negative values. So $\tilde{P}_3(t)=\{4,5\}$.

As the three first subsets of $\tilde{\Pm}(t)$ contains $5\geqslant K$ items, the remaining items are put in the last subset $\tilde{P}_4(t)=\{6,7\}$.

Overall, $\tilde{d}=4$ and $\tilde{\Pm}(t) = \left(\{1,2\}, \{3\}, \{4,5\}, \{6,7\}\right)$.

\paragraph{Optimistic Partition Elicitation (Example 1)}
The "played" partition $\Pm(t)$ is chosen in the neighborhood of $\tilde\Pm(t)$, meaning in the set of partitions obtained by merging consecutive subsets of $\tilde\Pm(t)$ and picking one item in $\tilde{P}_{\tilde d}(t)$. Therefore, the neighborhood of $\Pm(t)$ is
$$
\Nc\left(\tilde\Pm(t)\right) =
\left.\begin{cases}
\left(\{1,2\}, \{3\}, \{4,5\}\right),\\
\left(\{1,2, 3\}, \{4,5\}\right),\\
\left(\{1,2, 3\}, \{4,5,6\}\right),\\
\left(\{1,2, 3\}, \{4,5,7\}\right),\\
\left(\{1,2\}, \{3,4,5\}\right),\\
\left(\{1,2\}, \{3\}, \{4,5, 6\}\right),\\
\left(\{1,2\}, \{3\}, \{4,5,7\}\right)
\end{cases}\right\}.
$$

The merging process is done iteratively, considering subsets from the first to the last one and looking at the KL-indices indicating whether each item in a subset $\tilde{P}_c(t)$ are clearly more attractive than the items in the next subset $\tilde{P}_{c+1}(t)$.

Let us assume that the KL-indices matrix $\ubar{\Sm} \defeq\left[\bar{\bar s}_{j,i}(t)\right]_{(i,j)\in [L]^2}$ is 
$$
\ubar{\Sm} = \left[
\begin{array}{ccccccc}
    \nullCell & - & + & - & + & + & + \\
    - & \nullCell & + & + & - & + & + \\
    - & - & \nullCell & - & + & + & - \\
    - & - & - & \nullCell & - & + & - \\
    - & - & - & - & \nullCell & - & + \\
    - & - & - & - & - & \nullCell & - \\
    - & - & - & - & - & + & \nullCell
\end{array}
\right].
$$

Therefore, we first look at the submatrix $
\ubar{\Sm}_{1:2,3} = \left[
\begin{array}{c}
    +\\
    +
\end{array}
\right],
$
which contains only positive values, meaning that items 1 and 2 should be considered more attractive than item 3, and therefore subsets $\tilde{P}_1(t)=\{1,2\}$ and $\tilde{P}_2(t)=\{3\}$ are not merged.

We then look at  the submatrix $
\ubar{\Sm}_{3,4:5} = \left[
\begin{array}{cc}
    -&+
\end{array}
\right],
$
which contains a non-positive value. Therefore subsets $\tilde{P}_2(t)=\{3\}$ and $\tilde{P}_3(t)=\{4,5\}$ are merged.

Finally, as the subset $\tilde{P}_3(t)$ as been merged, we do consider adding an item from subset $\tilde{P}_4(t)$. The "played" subset is $\Pm(t) = \left(\{1,2\}, \{3,4,5\}\right)$ and the recommendation is drawn at random from the set of permutations 
$$
\Ac\left(\Pm(t)\right) =
\left.\begin{cases}
\left(1,2,3,4,5\right),
\left(1,2,3,5,4\right),
\left(1,2,4,3,5\right),\\
\left(1,2,4,5,3\right),
\left(1,2,5,3,4\right),
\left(1,2,5,4,3\right),\\
\left(2,1,3,4,5\right),
\left(2,1,3,5,4\right),
\left(2,1,4,3,5\right),\\
\left(2,1,4,5,3\right),
\left(2,1,5,3,4\right),
\left(2,1,5,4,3\right)
\end{cases}\right\}.
$$

\paragraph{Optimistic Partition Elicitation (Example 1)}
Let us now assume another KL-indices matrix $\ubar{\Sm}$: 
$$
\ubar{\Sm} = \left[
\begin{array}{ccccccc}
    \nullCell & - & + & - & + & + & + \\
    - & \nullCell & - & + & - & + & + \\
    - & - & \nullCell & - & + & + & - \\
    - & - & - & \nullCell & - & + & - \\
    - & - & - & - & \nullCell & - - & + \\
    - & - & - & - & - & \nullCell & - \\
    - & - & - & - & - & + & \nullCell
\end{array}
\right],
$$
where the $- -$ symbol means that $\bar{\bar s}_{5,6}(t) < \bar{\bar s}_{4,7}(t)$.

Therefore, we first look at the submatrix $
\ubar{\Sm}_{1:2,3} = \left[
\begin{array}{c}
    +\\
    -
\end{array}
\right],
$
which a non-positive value. Therefore the subsets $\tilde{P}_1(t)=\{1,2\}$ and $\tilde{P}_2(t)=\{3\}$ are merged.

Then, as the subset $\tilde{P}_2(t)$ is already merged we look at the next subset: $\tilde{P}_3(t)$. The subsets from $\tilde{P}_1(t)$ to $\tilde{P}_3(t)$ contain $5\geqslant K$ items so we do not consider merging $\tilde{P}_3(t)$ and $\tilde{P}_4(t)$, but adding to $\tilde{P}_3(t)$ and item from $\tilde{P}_4(t)$. We look at  the submatrix $
\ubar{\Sm}_{4:5, 6:7} = \left[
\begin{array}{cc}
    +&-\\
    - -&+
\end{array}
\right],
$
which contains a non-positive value. Therefore we add to $\tilde{P}_3(t)$ the item with the smallest value: 6.

Finally, the "played" subset is $\Pm(t) = \left(\{1,2,3\}, \{4,5,6\}\right)$ and the recommendation is drawn at random from the set of 36 permutations 
$$
\Ac\left(\Pm(t)\right) =
\left.\begin{cases}
\left(1,2,3,4,5,6\right),
\left(1,2,3,4,6,5\right),
%\left(1,2,3,5,4,6\right),
%\left(1,2,3,5,6,4\right),
%\left(1,2,3,6,4,5\right),
\dots,
\left(1,2,3,6,5,4\right),\\
\left(1,3,2,4,5,6\right),
\left(1,3,2,4,6,5\right),
\dots,
\left(1,3,2,6,5,4\right),\\
\dots\\
\left(3,2,1,4,5,6\right),
\left(3,2,1,4,6,5\right),
%\left(3,2,1,5,4,6\right),
%\left(3,2,1,5,6,4\right),
%\left(3,2,1,6,4,5\right),
\dots,
\left(3,2,1,6,5,4\right)
\end{cases}\right\}.
$$
}
}
%%%%%%%%%%%%%%%%%%%%%%%%%%%%%%%%%%%%%%%%%%%%%%%%%%%%%%%%%%%%
%%%%%%%%%%%%%%%%%%%%%%%%%%%%%%%%%%%%%%%%%%%%%%%%%%%%%%%%%%%%
%%%%%%%%%%%%%%%%%%%%%%%%%%%%%%%%%%%%%%%%%%%%%%%%%%%%%%%%%%%%
%%%%%%%%%%%%%%%%%%%%%%%%%%%%%%%%%%%%%%%%%%%%%%%%%%%%%%%%%%%%
\section{Technical Lemmas Required by the Proof of Theorem \ref{theo:unirank}}\label{app:technical_lemmas}

In this section, we gather technical Lemmas required to prove the regret upper-bound of \ouralgo{}. These lemmas regard
%the pseudo-unimodality of the considered setting (Appendix \ref{app:unimodality}),
the concentration away from zero of the statistic $\hat{s}_{i,j}(t)$ (Appendices \ref{app:min_s} and \ref{app:concentration}), and the sufficient optimism brought by $\bar{\bar s}_{j,i}(t)$ (Appendix \ref{app:sufficient_optimism}).

%%%%%%%%%%%%%%%%%%%%%%%%%%%%%%%%%%%%%%%%%%%%%%%%%%%%%%%%%%%%
%%%%%%%%%%%%%%%%%%%%%%%%%%%%%%%%%%%%%%%%%%%%%%%%%%%%%%%%%%%%
\forlater{ !!! On n'en a plus besoin maintenant que le découpage est "trivial".
\subsection{Proof of Lemma \ref{lem:unimodality} (Pseudo-Unimodality Assuming a Total Order on Top-$K$ Items) }\label{app:unimodality}

\begin{proof}
To ease the notations, we take the following order on items:  $1 \succ 2 \succ \dots \succ K \succ [L]\setminus[K]$.  Therefore, $\Pm^* = \left(\{1\}, \dots,\{K\},[L]\setminus[K]\right)$. 
Therefore, if $\tilde\Pm \neq \Pm^*$ 
\begin{itemize}
    \item either  $\exists c \in [\tilde{d}]$, such that $|P_c|>1$ and  $i^*\succ \argmax_{j\in P_c\setminus\{i^*\}} g(j)$, where $i^*=\argmax_{i\in P_c} g(i)$;
    \item or $\exists c \in [\tilde{d}-1]$, $\exists (i,j)\in\tilde{P}_c\times\tilde{P}_{c+1}$, such that $j \succ i$.
\end{itemize}

Let us show that this second alternative is divided into the two last outputs of Lemma \ref{lem:unimodality}. Let $c\in[K]$ be the smallest index such that $\tilde{P}_c = \{i\}$ and there exists $j\in\tilde{P}_{c+1}$ such that $j \succ i$. Either $c>1$, and therefore $\tilde{P}_{c-1} = \{i'\}$ and $i' \succ i$, or $c=1$.
\end{proof}
}

%%%%%%%%%%%%%%%%%%%%%%%%%%%%%%%%%%%%%%%%%%%%%%%%%%%%%%%%%%%%
%%%%%%%%%%%%%%%%%%%%%%%%%%%%%%%%%%%%%%%%%%%%%%%%%%%%%%%%%%%%
\subsection{Minimum Expected Click Difference}\label{app:min_s}

Assumption \ref{asp:identifiability} builds upon $\tilde\Delta_{i,j}(\av)$ which measures the difference of attractiveness between $i$ and $j$ while all other items are at fixed positions. In the theoretical analysis of \ouralgo{}, we handle situations where other items may also change in position thanks to the following Lemma.

%\commentrg{inclure $\tilde\delta^*_k$}

\begin{lemma}[Minimum expected click difference]\label{lem:min_s}
Let $(L, K, \rho)$ be an OLR problem satisfying Assumptions \ref{asp:acyclicity} and \ref{asp:identifiability} with $\succ$ the order on items, and let $i$ and $j$ be two items such that $i\succ j$. Then, for any partition of items $\Pm$, if there exists $c$ such that $(i,j)\in P_c^2$ and
$\EE_{\av(t) \sim \Uc(\Ac\left(\Pm\right)) }\left[c_i(t) \neq c_j(t)\right] \neq 0,$
then 
$\EE_{\av(t) \sim \Uc(\Ac\left(\Pm\right)) }\left[c_i(t)-c_j(t) \mid c_i(t) \neq c_j(t)\right] > 0$
and therefore
\begin{align*}
    \tilde{\delta}_{i,j} &> 0
    &&{ and }
    &\tilde{\Delta}_{i,j} &> 0.
\end{align*}

Symmetrically, if $j\succ i$, for any partition of items $\Pm$, if there exists $c$ such that $(i,j)\in P_c^2$ and
$\EE_{\av(t) \sim \Uc(\Ac\left(\Pm\right)) }\left[c_i(t) \neq c_j(t)\right] \neq 0,$
then 
$\EE_{\av(t) \sim \Uc(\Ac\left(\Pm\right)) }\left[c_i(t)-c_j(t) \mid c_i(t) \neq c_j(t)\right] < 0$
and therefore
\begin{align*}
    \tilde{\delta}_{i,j} &> 0
    &&{ and }
    &\tilde{\Delta}_{i,j} &< 0.
\end{align*}
\end{lemma}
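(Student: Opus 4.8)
The plan is to exploit the fact that, because $i$ and $j$ lie in the same subset $P_c$ of $\Pm$, the transposition $\tau \defeq (i,j)$ swapping the two items is a measure-preserving involution of $\Ac(\Pm)$: compatibility with the ordered partition only constrains \emph{which} block each displayed item comes from, not the relative arrangement of $i$ and $j$ inside their common block, so $\av \mapsto \tau\circ\av$ maps $\Ac(\Pm)$ bijectively onto itself and therefore preserves the uniform law $\Uc(\Ac(\Pm))$ — including the boundary case where only part of $P_c$ is actually displayed. First I would introduce, for each $\av \in \Ac(\Pm)$, the signed and unsigned quantities $N(\av) \defeq \PP(c_i=1,c_j=0 \mid \av(t)=\av) - \PP(c_i=0,c_j=1\mid \av(t)=\av)$ and $D(\av)\defeq\PP(c_i\neq c_j\mid \av(t)=\av)$, so that the conditional expectation to be bounded is exactly the ratio $\big(\sum_{\av\in\Ac(\Pm)}N(\av)\big)/\big(\sum_{\av\in\Ac(\Pm)}D(\av)\big)$, the uniform weights cancelling.

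Next I would group both sums over the orbits $\{\av,\tau\circ\av\}$ of $\tau$. Unwinding \cref{def:statistic}, each orbit satisfies $N(\av)+N(\tau\circ\av)=2\,\tilde\delta_{i,j}(\av)\,\tilde\Delta_{i,j}(\av)$ and $D(\av)+D(\tau\circ\av)=2\,\tilde\delta_{i,j}(\av)$, while orbits on which neither $i$ nor $j$ is displayed are the fixed points of $\tau$ and contribute $0$ to both sums. Hence $\sum_{\av}D(\av)=\sum_{\text{orbits}}2\tilde\delta_{i,j}(\av)\geq 0$ and $\sum_{\av}N(\av)=\sum_{\text{orbits}}2\tilde\delta_{i,j}(\av)\tilde\Delta_{i,j}(\av)$. \cref{asp:identifiability} guarantees that on every orbit displaying at least one of the two items we have $\tilde\delta_{i,j}(\av)> 0$ and, since $i\succ j$, $\tilde\Delta_{i,j}(\av)>0$; thus every orbit contributes a nonnegative amount to the numerator. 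The running hypothesis $\sum_{\av}D(\av)>0$ forces at least one orbit with $\tilde\delta_{i,j}(\av)>0$, which makes the numerator strictly positive, so the conditional expectation is a strictly positive numerator over a strictly positive denominator, hence $>0$.

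It remains to upgrade this to the minima $\tilde\delta_{i,j}>0$ and $\tilde\Delta_{i,j}>0$, which range over all partitions whose common block $P_c\ni i,j$ is recommended. Here I would show that each such partition already meets the running hypothesis: since $\recommended(P_c)$ means $|\bigcup_{\ell<c}P_\ell|<K$, at least one display position is filled from $P_c$, and by invariance of $\Uc(\Ac(\Pm))$ under permutations of $P_c$ there is an $\av\in\Ac(\Pm)$ displaying $i$; \cref{asp:identifiability} then gives $\tilde\delta_{i,j}(\av)\neq 0$, whence $D(\av)+D(\tau\circ\av)>0$ and a fortiori $\sum_{\av}D(\av)>0$. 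The previous paragraph therefore applies to every partition in the minimisation, making each probability of difference and each conditional expectation strictly positive; as finite minima of strictly positive quantities, $\tilde\delta_{i,j}>0$ and $\tilde\Delta_{i,j}>0$. The case $j\succ i$ is entirely symmetric, the only change being that \cref{asp:identifiability} now yields $\tilde\Delta_{i,j}(\av)<0$ on each displaying orbit, flipping the sign of the numerator and giving $\tilde\Delta_{i,j}<0$ with $\tilde\delta_{i,j}>0$ unchanged. The main obstacle I anticipate is the bookkeeping of the orbit decomposition at the display boundary — verifying that $\tau$ is fixed-point-free precisely on the recommendations displaying at least one of $i,j$ and that the uniform measure is genuinely $\tau$-invariant there — rather than any analytic difficulty.
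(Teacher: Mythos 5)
Your proof is correct and takes essentially the same route as the paper's: the paper likewise writes the expectation twice as a sum over $\Ac(\Pm)$, reindexes one copy by the transposition $(i,j)\circ\av$ (a bijection of $\Ac(\Pm)$ because $i$ and $j$ share a block), and pairs terms into quantities proportional to $\tilde\delta_{i,j}(\av)\tilde\Delta_{i,j}(\av)$, whose sign is supplied by \cref{asp:identifiability}. If anything, your version is more complete than the paper's one-paragraph sketch, since you explicitly handle the fixed points of the swap and verify that every partition in the minima defining $\tilde\delta_{i,j}$ and $\tilde\Delta_{i,j}$ satisfies the nondegeneracy hypothesis via the $\recommended(P_c)$ condition.
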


\begin{proof}
The proof consists in writing
$\EE_{\av(t) \sim \Uc(\Ac\left(\Pm\right)) }\left[c_i(t) \neq c_j(t)\right] \neq 0$
two times as a sum other $\av(t) \in \Uc(\Ac\left(\Pm\right))$, and in reindexing one of both sums by $(i,j)\circ\av(t) \in \Uc(\Ac\left(\Pm\right))$. Then, adding the terms of both sums we get a sum of terms $\tilde\Delta_{i,j}(\av)$ which by assumption \ref{asp:identifiability} are positive. Hence this sum is positive, which concludes the proof. 
\end{proof}

%\commentrg{proof with formulas}

%%%%%%%%%%%%%%%%%%%%%%%%%%%%%%%%%%%%%%%%%%%%%%%%%%%%%%%%%%%%
%%%%%%%%%%%%%%%%%%%%%%%%%%%%%%%%%%%%%%%%%%%%%%%%%%%%%%%%%%%%
\subsection{Upper-bound on the Number of High Deviations for Variables with Lower-Bounded Mean}\label{app:concentration}

The Proof of Theorem  \ref{theo:unirank} requires the control of the expected number of high deviations of the statistic $\hat{s}_{i,j}(t)$. We control this expectation through Lemma \ref{lem:dev_s} which derives from the application of Lemmas  \ref{lem:concentrationOne} and \ref{lem:concentration} to $\hat{s}_{i,j}(t)$ and $\hat{T}_{i,j}(t)$. Hereafter, we express and prove the three lemmas. Note that Lemmas \ref{lem:concentrationOne} and \ref{lem:concentration} are extensions of  Lemmas $4.3$ and $B.1$ of \cite{Combes2014} to a setting where the handled statistic is a mixture of variables following different laws of bounded expectation.

\begin{lemma}[Concentration bound with lower-bounded mean]\label{lem:concentrationOne}
Let $(X^a_t)_{t\geqslant 1}$ with $a \in \mathcal{R} $, be  $|\mathcal{R}|<\infty$ independent sequences of independent random variables bounded  in $[0,B]$ defined on a probability space $(\Omega,\mathcal{F},\PP)$.
Let $\mathcal{F}_t$ be an increasing sequence of $\sigma-$fields of $\mathcal{F}$ such that for each t, $\sigma((X^{a}_1)_{a\in\mathcal{R}}, \dots,(X^{a}_t)_{a\in\mathcal{R}})\subset{\mathcal{F}_t}$
and for $s > t$ and $a$ a recommendation, $X^a_s$ is independent from $\mathcal{F}_t$. Consider $|\mathcal{R}|$ previsible sequences $(\epsilon^{a}_t)_{t\geq1}$ of Bernoulli variables (for all $t>0$, $\epsilon^a_t$ is $\mathcal{F}_{t-1}-mesurable$) such that for all $t>0$, $\sum_i \epsilon^{a}_t \in \{0,1\}$. Let $\delta > 0$ and for every $t \in \{1, \dots, n\}$ let 
$$S(t)=\sum^t_{s=1} \sum_i \epsilon^i_s(X^i_s-\EE[X^i_s]), \qquad  T(t)=\sum^t_{s=1} \sum_i \epsilon^i_s, \qquad \hat{\mu}(t) =\frac{S(t)}{N(t)}.$$

Define $\phi\in\{t_0,\dots,T+1\}$ a $\Fc$-stopping time such that either $T(\phi)\geqslant s$ or $\phi=T+1$.

Then $$\PP\left(S(\phi) \geqslant T(\phi)\delta, \phi\leqslant T\right) \leqslant \exp(-\frac{2n\delta^2}{B^2}).$$
\end{lemma}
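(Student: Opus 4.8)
The plan is to construct an exponential supermartingale out of the normalized sum $S(t)$ and then apply optional stopping at $\phi$. First I would check that $S(t)$ is a martingale for $(\mathcal{F}_t)$: writing the increment at step $s$ as $\Delta S_s = \sum_i \epsilon^i_s\left(X^i_s - \EE[X^i_s]\right)$, the previsibility of $\epsilon^i_s$ (its $\mathcal{F}_{s-1}$-measurability) together with the independence of $X^i_s$ from $\mathcal{F}_{s-1}$, which gives $\EE[X^i_s \mid \mathcal{F}_{s-1}] = \EE[X^i_s]$, yields $\EE\left[\Delta S_s \mid \mathcal{F}_{s-1}\right] = \sum_i \epsilon^i_s\left(\EE[X^i_s \mid \mathcal{F}_{s-1}] - \EE[X^i_s]\right) = 0$. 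The constraint $\sum_i \epsilon^i_s \in \{0,1\}$ is exactly what makes this a single-arm increment: at most one summand is nonzero, so $\Delta S_s$ equals $X^{i_0}_s - \EE[X^{i_0}_s]$ for the previsibly selected arm $i_0$ (or $0$ if none is selected), and therefore lies in an interval of width at most $B$.

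Next I would fix $\lambda > 0$ and apply Hoeffding's lemma conditionally on $\mathcal{F}_{s-1}$ to this zero-mean, width-$B$ increment, obtaining $\EE\left[e^{\lambda \Delta S_s} \mid \mathcal{F}_{s-1}\right] \leq \exp\left(\tfrac{\lambda^2 B^2}{8}\,\Delta T_s\right)$, where $\Delta T_s = \sum_i \epsilon^i_s$ is $\mathcal{F}_{s-1}$-measurable (the inequality being trivial when $\Delta T_s = 0$). This motivates the candidate process $W(t) = \exp\left(\lambda S(t) - \tfrac{\lambda^2 B^2}{8} T(t)\right)$; using that $\Delta T_t$ is known given $\mathcal{F}_{t-1}$, a one-line computation shows $\EE\left[W(t) \mid \mathcal{F}_{t-1}\right] \leq W(t-1)$, so $W$ is a nonnegative supermartingale with $W(0) = 1$.

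Since $\phi$ takes values in the finite set $\{t_0,\dots,T+1\}$ it is a bounded stopping time, so optional stopping for supermartingales gives $\EE[W(\phi)] \leq W(0) = 1$. To conclude I would restrict to the event $A = \{S(\phi) \geq T(\phi)\delta,\ \phi \leq T\}$: on $A$ we have $\phi \neq T+1$, so the defining property of $\phi$ forces the count to reach its threshold, $T(\phi) \geq n$, while $S(\phi) \geq T(\phi)\delta$ gives $W(\phi) \geq \exp\left(T(\phi)\left(\lambda\delta - \tfrac{\lambda^2 B^2}{8}\right)\right)$. Choosing $\lambda = 4\delta/B^2$ makes the per-sample exponent equal to $2\delta^2/B^2 > 0$, hence $W(\phi) \geq \exp\left(\tfrac{2n\delta^2}{B^2}\right)$ on all of $A$. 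Therefore $A \subseteq \left\{W(\phi) \geq \exp\left(\tfrac{2n\delta^2}{B^2}\right)\right\}$, and Markov's inequality with $\EE[W(\phi)] \leq 1$ yields $\PP(A) \leq \exp\left(-\tfrac{2n\delta^2}{B^2}\right)$, as claimed.

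The main obstacle will be the measurability bookkeeping that keeps the supermartingale property intact under the adaptive, randomized selection encoded by $\epsilon$: the argument hinges on $\epsilon^i_s$ and $\Delta T_s$ being determined by $\mathcal{F}_{s-1}$ while the fresh draw $X^i_s$ stays independent of the past, and on the single-selection constraint bounding each increment by width $B$ irrespective of $|\mathcal{R}|$. The self-normalized flavour, dividing by the random count $T(\phi)$ rather than a deterministic sample size, is handled precisely by carrying $T(t)$ inside the exponent of $W(t)$, which is what converts the stopping-time lower bound $T(\phi) \geq n$ into the stated $n$-dependent rate.
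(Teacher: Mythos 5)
Your proposal is correct and follows essentially the same route as the paper: you build the exponential supermartingale $\exp\bigl(\lambda S(t)-\tfrac{\lambda^2B^2}{8}T(t)\bigr)$ via a conditional Hoeffding bound on the previsibly selected single-arm increment, pick $\lambda=4\delta/B^2$, apply Doob's optional stopping at the bounded stopping time $\phi$, and finish with Markov's inequality using $T(\phi)\geqslant n$ on $\{\phi\leqslant T\}$. The only cosmetic difference is that the paper carries the event $\{\phi\leqslant T\}$ through an indicator $\ind\{t\leqslant T\}$ inside its processes $G_t$ and $\tilde G_t$ (forcing them to vanish at $T+1$), whereas you restrict to that event directly before applying Markov — the two devices are equivalent.
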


\begin{proof}
Let $\lambda>0$, and define $G_t=\exp(\lambda(S(t)-\delta T(t)))\ind\{t\leqslant T\}$. We have that:
\begin{align*}
\PP(S(\phi) \geqslant T(\phi)\delta, \phi\leqslant T)
& = \PP(\exp(\lambda(S(\phi)-\delta T(\phi))\ind\{\phi\leqslant T\} \geqslant 1)
\\&=\PP(g_\phi \geqslant 1)
\\&\leqslant \EE[G_\phi].
\end{align*}

Next we provide an upper bound for $\EE[G_\phi]$. We define the following quantities:

$$Y_s^i = \varepsilon_s^i(\lambda(X_s^i-\EE[X_s^i])-\lambda^2B^2/8)$$

$$\tilde{G}_t=\exp\left(\sum_{s=1}^t\sum_iY_s^i\right)\ind\{t\leqslant T\}.$$

Taking $\lambda=4\delta/B^2$, $G_t$ can be written:
$$G_t=\tilde{G}_t\exp(-T(t)(\lambda\delta-\lambda^2B^2/8)
=\tilde{G}_t\exp(-2T(t)\delta^2/B^2).
$$

As $T(t)\geqslant n$ if $\phi\leqslant T$ we can upper bound $G_\phi$ by:

$$G_\phi = \tilde{G}_\phi\exp(-2T(\phi)\delta^2/B^2)
\leqslant \tilde{G}_\phi\exp(-2n\delta^2/B^2).$$

It is noted that the above inequality holds even when $\phi=T+1$, since $G_{T+1} = \tilde{G}_{T+1}=0$. Hence:
$$\EE[G_\phi] \leqslant \EE[\tilde{G}_\phi]\exp(-2n\delta^2/B^2)$$

We prove that $\left(\tilde{G}_t\right)_t$ is a super-martingale. We have that $\EE[\tilde{G}_{T+1}\mid\Fc_T] = 0 \leqslant \tilde{G}_T$. For $s\leqslant T-1$, since $B_{t+1}$ is $\Fc$ measurable:
$$
\EE[\tilde{G}_{t+1}\mid\Fc_t] = \tilde{G}_t((1-\sum_i\varepsilon_{t+1}^i)+\sum_i\varepsilon_{t+1}^i\EE[\exp(Y_{t+1}^i)]).
$$

%\commentrg{citer proprement Hoeffding}
As proven in (Hoeffding, 1963)[eq. 4.16] since $X_{t+1}^i\in[0,B]$:

$$\EE[\exp(\lambda(X_{t+1}^i-\EE[X_{t+1}^i])] \leqslant \exp(\lambda^2B^2/8),$$
so $\EE[\exp(Y_{t+1}^i)]\leqslant 1$ and $\left(\tilde{G}_t\right)_t$ is a super-martingale: $\EE[\tilde{G}_{t+1}\mid\Fc_t] \leqslant \tilde{G}_t$. Since $\phi\leqslant T+1$ almost surely, and  $\left(\tilde{G}_t\right)_t$ is a supermartingale, Doob's optional stopping theorem yields: $\EE[\tilde{G}_\phi] \leqslant \EE[\tilde{G}_0] = 1$, and so

\begin{align*}
\PP(S(\phi)\geqslant T(\phi)\delta, \phi\leqslant T)
&\leqslant \EE[G_\phi]
\\&\leqslant \EE[\tilde{G}_\phi]\exp(-2n\delta^2/B^2)
\\&\leqslant\exp(-2n\delta^2/B^2),
\end{align*}
which concludes the proof
\end{proof}

\begin{lemma}[Expected number of large deviation with lower-bounded mean]\label{lem:concentration}
Let $(L, K, \rho)$ be an OLR problem, $\Fc_t$ the natural $\sigma$-algebra generated by the OLR problem, and $\Fc=(\Fc_t)_{t\in \ZZ}$ the corresponding filtration. We denote $O_t\defeq(\av(1),\cv(1), \dots,\av(t-1),\cv(t-1))$ the set of random values observed up to time $t-1$.
Let $Z_t\in[0,B]$ and $B_t\in\{0,1\}$ be two $\Fc_{t-1}$-measurable random variables, $\Lambda\subseteq\NN$ be a random set of instants, and $\varepsilon>0$.
For any $t\in\ZZ$, we denote $S(t)\defeq \sum_{s=0}^{t}B_sZ_s$ and $T(t)\defeq \sum_{s=0}^{t}B_s$.
If
for any $t>0$, $\EE\left[Z_t\mid O_t, B_t=1\right]\geqslant \delta$
and there exists a sequence of random sets $(\Lambda(n))_{n>0}$ such that (i) $\Lambda \subseteq \bigcup_{n>0}\Lambda(n)$, (ii) for all $n>0$ and all $t\in\Lambda(n)$, $T(t)\geqslant\varepsilon n$, (iii) $|\Lambda(n)| \leqslant 1$, and (iv) the event $t\in\Lambda(n)$ is $\Fc$-measurable. Then

$$\EE\left[\sum_{t\geq1} \ind\{t \in \Lambda:  S(t) < \frac{\delta}{2}T(t) \}\right]
 \leq \frac{2B^2}{\epsilon\delta^2}$$
\end{lemma}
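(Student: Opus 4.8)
The plan is to reduce the statement to the single-slice concentration bound of Lemma~\ref{lem:concentrationOne} by exploiting the combinatorial structure of the cover $(\Lambda(n))_{n>0}$, then to sum a geometric series. First I would peel the random count along this cover: since $\Lambda\subseteq\bigcup_{n>0}\Lambda(n)$ and each $|\Lambda(n)|\leqslant 1$, the inner sum over $t$ collapses to a single indicator, so that
\begin{align*}
\EE\left[\sum_{t\geqslant 1}\ind\left\{t\in\Lambda,\, S(t)<\tfrac{\delta}{2}T(t)\right\}\right]
\leqslant \sum_{n\geqslant 1}\PP\left(\exists t\in\Lambda(n):\ S(t)<\tfrac{\delta}{2}T(t)\right).
\end{align*}
It then suffices to bound each term of the right-hand side and to check that the resulting series sums to $2B^2/(\varepsilon\delta^2)$.

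To bound the $n$-th term, I would convert the downward deviation of $S$ into an upward deviation of a reflected, centered sum, so that Lemma~\ref{lem:concentrationOne} applies verbatim. Set $S'(t)\defeq\sum_{s=0}^{t}B_s\left(\EE[Z_s\mid O_s, B_s=1]-Z_s\right)$, i.e.\ the centered partial sum associated with the reflected variables $B-Z_s\in[0,B]$, which fit the $[0,B]$-bounded, mixture-of-laws framework of Lemma~\ref{lem:concentrationOne} with $B_s$ in the role of $\sum_i\epsilon_s^i$. Using the lower-bound hypothesis $\EE[Z_s\mid O_s, B_s=1]\geqslant\delta$ we get $\sum_{s}B_s\EE[Z_s\mid O_s,B_s=1]\geqslant\delta\, T(t)$, hence on the event $S(t)<\tfrac{\delta}{2}T(t)$ one has $S'(t)>\delta T(t)-\tfrac{\delta}{2}T(t)=\tfrac{\delta}{2}T(t)$. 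Consequently $\{\exists t\in\Lambda(n):S(t)<\tfrac{\delta}{2}T(t)\}\subseteq\{\exists t\in\Lambda(n):S'(t)\geqslant\tfrac{\delta}{2}T(t)\}$.

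Next I would introduce, for each fixed $n$, the $\Fc$-stopping time $\phi_n$ equal to the unique element of $\Lambda(n)$ when $\Lambda(n)\neq\varnothing$ and to $T+1$ otherwise; this is a stopping time by the measurability assumption~(iv), and by assumption~(ii) it satisfies $T(\phi_n)\geqslant\varepsilon n$ whenever $\phi_n\leqslant T$, which is exactly the condition required of the stopping time in Lemma~\ref{lem:concentrationOne}. Applying that lemma to the reflected variables $B-Z_s$ with deviation level $\delta/2$ and threshold $\varepsilon n$ gives
\begin{align*}
\PP\left(\exists t\in\Lambda(n):\ S(t)<\tfrac{\delta}{2}T(t)\right)
\leqslant \PP\left(S'(\phi_n)\geqslant\tfrac{\delta}{2}T(\phi_n),\,\phi_n\leqslant T\right)
\leqslant \exp\left(-\frac{\varepsilon n\,\delta^2}{2B^2}\right).
\end{align*}
Summing over $n\geqslant 1$ and using $e^{x}-1\geqslant x$ yields $\sum_{n\geqslant1}e^{-\varepsilon n\delta^2/(2B^2)}=\left(e^{\varepsilon\delta^2/(2B^2)}-1\right)^{-1}\leqslant 2B^2/(\varepsilon\delta^2)$, the claimed bound; since this bound is independent of the horizon, the infinite-sum version follows by monotone convergence.

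The step I expect to be the main obstacle is the stopping-time argument. One must verify that $\phi_n$ is a genuine $\Fc$-stopping time adapted to the filtration, so that the optional-stopping machinery inside Lemma~\ref{lem:concentrationOne} is legitimate, and that the centering $\EE[Z_s\mid O_s,B_s=1]$ used in the reflection is the correct conditional mean given the past (this relies on $B_s$ being previsible, i.e.\ $\Fc_{s-1}$-measurable, so that conditioning on $O_s$ and on $\{B_s=1\}$ is consistent with the supermartingale construction). Everything else—the peeling over $\Lambda(n)$, the reflection trick, and the geometric summation—is routine once the reduction to Lemma~\ref{lem:concentrationOne} is correctly set up.
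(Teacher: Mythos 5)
Your proof is correct and follows essentially the same route as the paper's: the same peeling over the cover $(\Lambda(n))_{n>0}$ with stopping times $\phi_n$ set to $T+1$ on the empty event, the same reduction to Lemma~\ref{lem:concentrationOne} by centering with the conditional means (your explicit reflection $B-Z_s$ is just the negation of the paper's centered sum $S'(t)=\sum_s B_s(Z_s-\EE[Z_s\mid O_s,B_s=1])$), and the same per-slice bound $\exp\left(-\varepsilon n\delta^2/(2B^2)\right)$ followed by letting $T\to\infty$. The only cosmetic difference is the final summation, where your geometric-series bound via $e^x-1\geqslant x$ replaces the paper's integral comparison $\sum_{n\geqslant1}e^{-nw}\leqslant 1/w$, both giving the identical constant $2B^2/(\varepsilon\delta^2)$.
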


\begin{proof}
Let $T\in\NN$. For all $n\in\NN$, $|\Lambda(n)| \leqslant 1$, we define $\Phi_n$ as $T+1$ if $\Lambda(n)\cap [T]$ is empty and $\{\Phi_n\}=\Lambda(n)$ otherwise. Since $\Lambda \subseteq \bigcup_{n>0}\Lambda(n)$, we have

$$
\sum_{t=1}^T\ind\left\{t\in\Lambda:  S(t) < \frac{\delta}{2}T(t)\right\}
\leqslant \sum_{n\geqslant 1}\ind\left\{S(\Phi_n) < \frac{\delta}{2}T(\Phi_n), \Phi_n\leqslant T\right\}.
$$

Taking expectations,
$$
\EE\left[\sum_{t=1}^T\ind\left\{t\in\Lambda:  S(t) < \frac{\delta}{2}T(t)\right\}\right]
\leqslant \sum_{n\geqslant 1}\PP\left[S(\Phi_n) < \frac{\delta}{2}T(\Phi_n), \Phi_n\leqslant T\right]
$$

For any $t\in\NN$, denote $S'(t)\defeq \sum_{s=0}^{t}B_s(Z_s-\EE\left[Z_s\mid 0_s, B_s=1\right])$. As for any $s\in\NN$, $\EE\left[Z_s\mid 0_s, B_s=1\right]> \delta$, $S'(t)<S(t)-T(t)\delta$. Therefore, for any $n\in\NN$
$$\PP\left[S(\Phi_n) < \frac{\delta}{2}T(\Phi_n), \Phi_n\leqslant T\right]
\leqslant \PP\left[S'(\Phi_n) < -\frac{\delta}{2}T(\Phi_n), \Phi_n\leqslant T\right]$$
and
$$
\EE\left[\sum_{t=1}^T\ind\left\{t\in\Lambda:  S(t) < \frac{\delta}{2}T(t)\right\}\right]
\leqslant \sum_{n\geqslant 1}\PP\left[S'(\Phi_n) < -\frac{\delta}{2}T(\Phi_n), \Phi_n\leqslant T\right]
$$

By Lemma \ref{lem:concentrationOne},
since $\Phi_n$ is a stopping time upper bounded by $T+1$, and  $T(\Phi_n)\geqslant\varepsilon n$,

$$
\EE\left[\sum_{t=1}^T\ind\left\{t\in\Lambda:  S(t) < \frac{\delta}{2}T(t)\right\}\right]
\leqslant \sum_{n\geqslant 1}\exp\left(-\frac{\varepsilon n\delta^2}{2B^2}\right)
\leqslant \frac{2B^2}{\varepsilon\delta^2},
$$

where the last inequality drives from the $\sum_{n\geqslant 1}\exp\left(-nw\right)
\leqslant \int_0^{+\infty}\exp\left(-uw\right)du=\frac{1}{w}$.

This upper-bound is valid for any $T$, which concludes the proof.
\end{proof}

%\commentrg{idem pour $T_{i,j}^*(t) < \frac{\tilde{\delta}_{i,j}}{2} t_{i,j}^*(t)$}

\begin{lemma}[Expected number of large deviation for our statistics]\label{lem:dev_s}
Let $(L, K, \rho)$ be an OLR problem satisfying Assumptions \ref{asp:acyclicity} and \ref{asp:identifiability} with $\succ$ the order on items, and let $i$ and $j$ be two items.
If there exists a sequence of random sets $(\Lambda(n))_{n>0}$ such that (i) $\Lambda \subseteq \bigcup_{n>0}\Lambda(n)$, (ii) for all $n>0$ and all $t\in\Lambda(n)$, $t_{i,j}(t+1)\geqslant\varepsilon n$, (iii) $|\Lambda(n)| \leqslant 1$, and (iv) the event $t\in\Lambda(n)$ is $\Fc$-measurable. Then, 
\begin{equation}\label{eq:bound_t}
\EE\left[\sum_{t\geq1} \ind\left\{t\in\Lambda, T_{i,j}(t) < \frac{\tilde{\delta}_{i,j}}{2} t_{i,j}(t)\right\}\right] = \OO(1)
\end{equation}
and 
$$\EE\left[\sum_{t\geq1} \ind\left\{t\in\Lambda, \frac{\hat{s}_{i,j}(t)}{\tilde{\Delta}_{i,j}} < \frac{1}{2} \right\}\right] = \OO(1),$$
meaning
\begin{align}
\label{eq:bound_s}
\EE\left[\sum_{t\geq1} \ind\left\{t\in\Lambda, \hat{s}_{i,j}(t) < \frac{\tilde{\Delta}_{i,j}}{2} \right\}\right] &= \OO(1)
&, \text{if }i\succ j;\\
\EE\left[\sum_{t\geq1} \ind\left\{t\in\Lambda, \hat{s}_{i,j}(t) > \frac{\tilde{\Delta}_{i,j}}{2} \right\}\right] &= \OO(1)
&, \text{if }j\succ i.
\end{align}
\end{lemma}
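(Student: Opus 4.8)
The plan is to derive both displayed bounds from the concentration machinery of Lemmas~\ref{lem:concentrationOne} and \ref{lem:concentration}, once the correct previsible selectors and observed variables are isolated. By Lemma~\ref{lem:min_s} (under Assumptions~\ref{asp:acyclicity} and \ref{asp:identifiability}) we have $\tilde\delta_{i,j}>0$, and $\tilde\Delta_{i,j}>0$ when $i\succ j$; I treat this case, the case $j\succ i$ following by swapping the roles of $i$ and $j$ and using $\hat s_{i,j}(t)=-\hat s_{j,i}(t)$. For \eqref{eq:bound_t} I would apply Lemma~\ref{lem:concentration} with the previsible selector $B_s=\ind\{\exists c,(i,j)\in P_c(s)^2\}$ — previsible because the played partition $\Pm(s)$ is $\Fc_{s-1}$-measurable — and the observed variable $Z_s=\ind\{c_i(s)\neq c_j(s)\}\in[0,1]$ (so $B=1$). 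As $\av(s)$ is drawn uniformly in $\Ac(\Pm(s))$, the conditional mean $\EE[Z_s\mid O_s,B_s=1]$ equals $\PP_{\av\sim\Uc(\Ac(\Pm(s)))}[c_i\neq c_j]\geqslant\tilde\delta_{i,j}$ by definition of $\tilde\delta_{i,j}$. With these choices $S(t)=T_{i,j}(t)$ and $T(t)=t_{i,j}(t)$ (up to the bookkeeping shift $t\mapsto t+1$ that matches hypothesis (ii)), and Lemma~\ref{lem:concentration} gives directly $\EE[\sum_t\ind\{t\in\Lambda,\,T_{i,j}(t)<\tfrac{\tilde\delta_{i,j}}{2}t_{i,j}(t)\}]\leqslant\tfrac{2}{\varepsilon\tilde\delta_{i,j}^2}=\OO(1)$, i.e.\ \eqref{eq:bound_t}.

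For the deviation bound on $\hat s_{i,j}(t)$ the difficulty is that its denominator is $T_{i,j}(t)$, the number of observations with \emph{differing} clicks, whereas hypothesis (ii) controls only $t_{i,j}(t)$. I would therefore split
\[
\ind\{t\in\Lambda,\hat s_{i,j}(t)<\tfrac{\tilde\Delta_{i,j}}{2}\}\leqslant\ind\{t\in\Lambda,T_{i,j}(t)<\tfrac{\tilde\delta_{i,j}}{2}t_{i,j}(t)\}+\ind\{t\in\Lambda,\hat s_{i,j}(t)<\tfrac{\tilde\Delta_{i,j}}{2},\,T_{i,j}(t)\geqslant\tfrac{\tilde\delta_{i,j}}{2}t_{i,j}(t)\}.
\]
The first term has $\OO(1)$ expectation by \eqref{eq:bound_t}. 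On the event of the second term, hypothesis (ii) upgrades to $T_{i,j}(t)\geqslant\tfrac{\tilde\delta_{i,j}}{2}\varepsilon n=:\varepsilon' n$ whenever $t\in\Lambda(n)$, which transfers the growth condition from $t_{i,j}$ to $T_{i,j}$.

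To concentrate $\hat s_{i,j}$ I would reindex by the observation counter rather than by iteration, since the selector $\ind\{c_i(s)\neq c_j(s)\}$ is not previsible. Writing $V_{(m)}\defeq\tfrac{1+(c_i-c_j)}{2}\in\{0,1\}$ for the rescaled click difference at the $m$-th counted observation, one has $\hat s_{i,j}(t)=\tfrac{2}{T_{i,j}(t)}\sum_{m=1}^{T_{i,j}(t)}V_{(m)}-1$, so $\hat s_{i,j}(t)<\tfrac{\tilde\Delta_{i,j}}{2}$ reads $\tfrac{1}{T_{i,j}(t)}\sum_{m\leqslant T_{i,j}(t)}V_{(m)}<\tfrac12+\tfrac{\tilde\Delta_{i,j}}{4}$. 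Each $V_{(m)}$ comes from a recommended subset containing $i$ and $j$ with differing clicks, so by Lemma~\ref{lem:min_s} its conditional mean is $\geqslant\tfrac{1+\tilde\Delta_{i,j}}{2}$; the event is thus a deviation below the mean by at least $\tfrac{\tilde\Delta_{i,j}}{4}$. Running the stopping-time/super-martingale argument of Lemma~\ref{lem:concentrationOne} at the instants $\Phi_n$ supplied by $\Lambda(n)$ (where now $T_{i,j}\geqslant\varepsilon' n$), with deviation $\tfrac{\tilde\Delta_{i,j}}{4}$ and $B=1$, bounds each term by $\exp(-2\varepsilon' n(\tilde\Delta_{i,j}/4)^2)$, and summing this geometric series over $n$ yields $\OO(1)$, exactly as in the proof of Lemma~\ref{lem:concentration}. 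This closes the bound and hence \eqref{eq:bound_s}, the case $j\succ i$ being identical after the $i\leftrightarrow j$ swap.

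I expect the main obstacle to be precisely the mismatch between the two counters: the concentration I ultimately need is for the $T_{i,j}$-normalized average, while the only available lower bound is on $t_{i,j}$, and the differing-clicks selector defining $T_{i,j}$ is not $\Fc$-previsible. Both issues are resolved by the two-stage reduction above — first transferring the growth condition through \eqref{eq:bound_t}, then running the deviation argument on the reindexed subsequence of genuine observations, where the per-observation mean lower bound $\tfrac{1+\tilde\Delta_{i,j}}{2}$ of Lemma~\ref{lem:min_s} legitimately applies. A secondary care point is that the target threshold is \emph{just below} the mean rather than half of it, so the $\hat s$ step invokes the general Hoeffding form (Lemma~\ref{lem:concentrationOne}) rather than the ``half the mean'' specialization of Lemma~\ref{lem:concentration} used for \eqref{eq:bound_t}.
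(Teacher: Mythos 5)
Your proposal is correct and follows essentially the same route as the paper's proof: the same two-stage decomposition in which \eqref{eq:bound_t} is obtained from \cref{lem:concentration} with selector $B_s=\ind\{\exists c,(i,j)\in P_c(s)^2\}$ and then reused to transfer the growth condition from $t_{i,j}$ to $T_{i,j}$ via the event $T_{i,j}(t)\geqslant\frac{\tilde\delta_{i,j}}{2}t_{i,j}(t)$ (the paper's sets $A(n)\defeq\Lambda(n)\cap\{t: T_{i,j}(t)\geqslant\frac{\tilde\delta_{i,j}}{2}t_{i,j}(t)\}$), with the conditional-mean bounds supplied by \cref{lem:min_s} in both steps. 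Your rescaling of $c_i-c_j$ to $[0,1]$, reindexing by the observation counter, and direct invocation of \cref{lem:concentrationOne} at the stopping times $\Phi_n$ with deviation $\tilde\Delta_{i,j}/4$ is just the unrolled form of the paper's second application of \cref{lem:concentration} with $Z_t=c_i(t)-c_j(t)\in[-1,1]$, selector including $\ind\{c_i(t)\neq c_j(t)\}$, and $\delta=\tilde\Delta_{i,j}$ — the two deviation events coincide exactly.
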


\begin{proof}
Let assume $i\succ j$. We first prove Claim \eqref{eq:bound_t} and then prove Claim \eqref{eq:bound_s} using Claim \eqref{eq:bound_t}.

For any $t\leqslant1$, we define both following $\Fc_{t-1}$-measurable random variables
\begin{align*}
    Z_t&\defeq\ind\left\{c_i(t) \neq c_j(t)\right\}
    &
    B_t&\defeq\ind\left\{\exists c, (i,j)\in P_c(t)^2\right\},
\end{align*} 
and we denote $O_t\defeq(\av(1),\cv(1), \dots,\av(t-1),\cv(t-1))$ the set of random values observed up to time $s-1$.
Note that $T_{i,j}(t+1)=\sum_{s = 1}^{t} B_sZ_s$, $t_{i,j}(t+1)=\sum_{s = 1}^{t} B_s$,
and $\EE\left[Z_t\mid 0_t, B_t=1\right]> \tilde\delta_{i,j}$ by Lemma \ref{lem:min_s}.

Therefore by Lemma \ref{lem:concentration}
$$
\EE\left[\sum_{t\geq1} \ind\{t \in \Lambda:  T_{i,j}(t+1) < \frac{\tilde\delta_{i,j}}{2}t_{i,j}(t+1) \}\right]
 \leqslant \frac{2}{\epsilon\tilde\delta_{i,j}^2},
 $$
meaning
$$
\EE\left[\sum_{t\geq1} \ind\{t \in \Lambda:  T_{i,j}(t) < \frac{\tilde\delta_{i,j}}{2}t_{i,j}(t) \}\right]
 \leqslant 1 + \frac{2}{\epsilon\tilde\delta_{i,j}^2} =\OO(1),
 $$
which corresponds to Claim  \eqref{eq:bound_t}.

Let now prove Claim  \eqref{eq:bound_s} using the following decomposition
\begin{align*}
\EE\left[ \sum_{t=1}^T\ind\left\{t\in\Lambda, \hat{s}_{i,j}(t) < \frac{\tilde{\Delta}_{i,j}}{2}\right\}\right]
    & \leqslant \EE\left[ \sum_{t=1}^T\ind\left\{t\in\Lambda, \hat{s}_{i,j}(t) < \frac{\tilde{\Delta}_{i,j}}{2}
    T_{i,j}(t) < \frac{\tilde\delta_{i,j}}{2}t_{i,j}(t)\right\}\right]
    \\ & \quad
    +\EE\left[ \sum_{t=1}^T\ind\left\{t\in\Lambda, \hat{s}_{i,j}(t) < \frac{\tilde{\Delta}_{i,j}}{2},
    T_{i,j}(t) \geqslant \frac{\tilde\delta_{i,j}}{2}t_{i,j}(t)\right\}\right],
\end{align*}
Where the first right-hand side term is smaller than $\EE\left[\sum_{t\geq1} \ind\left\{t\in\Lambda, T_{i,j}(t) < \frac{\tilde{\delta}_{i,j}}{2} t_{i,j}(t)\right\}\right]$ and therefore is a $\OO(1)$. We control the second term by applying again Lemma \ref{lem:concentration}.

For any $t\leqslant1$, we define both following $\Fc_{t-1}$-measurable random variables
\begin{align*}
    Z_t&\defeq c_i(t) - c_j(t)
    &
    B_t&\defeq\ind\left\{\exists c, (i,j)\in P_c(t)^2, c_i(t) \neq c_j(t)\right\},
\end{align*} 
Note that $Z_t\in[-1,1]$, $\hat{s}_{i,j}(t+1)T_{i,j}(t+1)=\sum_{s = 1}^{t} B_sZ_s$, $T_{i,j}(t+1)=\sum_{s = 1}^{t} B_s$,
and $\EE\left[Z_t\mid 0_t, B_t=1\right]> \tilde\Delta_{i,j}$ by Lemma \ref{lem:min_s} as $i \succ j$.

We also define $A\defeq \Lambda\cap\left\{t\in\NN: T_{i,j}(t) \geqslant \frac{\tilde\delta_{i,j}}{2}t_{i,j}(t)\right\}$ and for any $n\in \NN$, $A(n)\defeq \Lambda(n)\cap\left\{t\in\NN: T_{i,j}(t) \geqslant \frac{\tilde\delta_{i,j}}{2}t_{i,j}(t)\right\}$.
Then, (i) as $\Lambda \subseteq \bigcup_{n>0}\Lambda(n)$, $A \subseteq \bigcup_{n>0}A(n)$, (ii) for all $n>0$ and all $t\in A(n)$, $T_{i,j}(t) \geqslant \frac{\tilde\delta_{i,j}}{2}t_{i,j}(t)\geqslant\frac{\tilde\delta_{i,j}}{2}\varepsilon n$, (iii) $|A(n)|\leqslant|\Lambda(n)| \leqslant 1$, and (iv) the event $t\in A(n)$ is $\Fc$-measurable. Therefore by Lemma \ref{lem:concentration}
$$
\EE\left[\sum_{t\geq1} \ind\{t \in A:  \hat{s}_{i,j}(t+1)T_{i,j}(t+1) < \frac{\tilde\Delta_{i,j}}{2}T_{i,j}(t+1) \}\right]
 \leqslant \frac{8}{\tilde\delta_{i,j}\varepsilon\tilde\Delta_{i,j}^2},
 $$
meaning
$$
\EE\left[\sum_{t\geq1} \ind\left\{\substack{t\in\Lambda, \hat{s}_{i,j}(t) < \frac{\tilde{\Delta}_{i,j}}{2},\\
    T_{i,j}(t) \geqslant \frac{\tilde\delta_{i,j}}{2}t_{i,j}(t)} \right\}\right]
 \leqslant 1 + \frac{8}{\tilde\delta_{i,j}\varepsilon\tilde\Delta_{i,j}^2} =\OO(1).
 $$

Overall,
$\EE\left[ \sum_{t=1}^T\ind\left\{\substack{t\in\Lambda, \hat{s}_{i,j}(t) < \frac{\tilde{\Delta}_{i,j}}{2}}\right\}\right] = \OO(1) + \OO(1) =\OO(1)$
which corresponds to Claim  \eqref{eq:bound_s}.

Other claims are proved symmetrically.
\end{proof}

%%%%%%%%%%%%%%%%%%%%%%%%%%%%%%%%%%%%%%%%%%%%%%%%%%%%%%%%%%%%
%%%%%%%%%%%%%%%%%%%%%%%%%%%%%%%%%%%%%%%%%%%%%%%%%%%%%%%%%%%%
\subsection{Upper-Bound on the Number of Lower-Estimations of an Optimistic Estimator}\label{app:sufficient_optimism}

This section presents two results aiming at upper-bounding the number of iterations at which $\tilde\Delta_{j,i}$ is lower-estimated by $\bar{\bar s}_{j,i}(t)$ if $j\succ i$. These new results are extensions of Lemma 9 and Theorem 10 of \cite{Garivier2011} to a setting where the handled statistic is a mixture of variables following different laws of bounded expectation.

\begin{lemma} \label{lem:9GravierCappe}
Let X be a random variable taking value in $[0,1]$ and let $\mu \leq \EE[X]$. then for all $\lambda < 0 $,
$$\EE[\exp(\lambda X)] \leq 1- \mu + \mu \exp(\lambda), $$

\end{lemma}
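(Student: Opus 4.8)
The plan is to combine a pointwise convexity bound with the monotonicity that $\lambda<0$ provides. First I would exploit the fact that $x\mapsto\exp(\lambda x)$ is convex on $[0,1]$. Writing any $x\in[0,1]$ as the convex combination $x=(1-x)\cdot 0 + x\cdot 1$ and applying convexity yields the pointwise inequality
\begin{equation*}
\exp(\lambda x) \leq (1-x)\exp(\lambda\cdot 0) + x\exp(\lambda\cdot 1) = 1 - x\left(1-\exp(\lambda)\right).
\end{equation*}
Since $X$ takes its values in $[0,1]$, this bound holds almost surely with $x=X$.

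Next I would take expectations on both sides, using linearity, to get
\begin{equation*}
\EE[\exp(\lambda X)] \leq 1 - \EE[X]\left(1-\exp(\lambda)\right).
\end{equation*}
The final step uses the sign hypotheses. Because $\lambda<0$, we have $\exp(\lambda)<1$, hence $1-\exp(\lambda)>0$, so the right-hand side is a decreasing affine function of the coefficient multiplying $1-\exp(\lambda)$. Combining this monotonicity with the assumption $\mu\leq\EE[X]$ gives
\begin{equation*}
1 - \EE[X]\left(1-\exp(\lambda)\right) \leq 1 - \mu\left(1-\exp(\lambda)\right) = 1 - \mu + \mu\exp(\lambda),
\end{equation*}
which chains with the previous display to yield the claimed bound.

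There is essentially no hard obstacle here: the only point requiring care is that the replacement of $\EE[X]$ by the smaller quantity $\mu$ is valid \emph{only} because $\lambda<0$ makes the factor $1-\exp(\lambda)$ strictly positive; with $\lambda\geq 0$ the inequality would reverse. I would therefore make sure to state explicitly that $\lambda<0$ is what guarantees $1-\exp(\lambda)>0$ before invoking $\mu\leq\EE[X]$, since this is the single place where all three hypotheses ($X\in[0,1]$, $\mu\leq\EE[X]$, $\lambda<0$) interact.
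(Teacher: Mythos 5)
Your proof is correct and follows essentially the same route as the paper: the paper phrases the chord bound via the auxiliary convex function $f(x)=\exp(\lambda x)-x(\exp(\lambda)-1)-1$ with $f(0)=f(1)=0$, which is exactly your pointwise inequality $\exp(\lambda x)\leq 1-x(1-\exp(\lambda))$, and then both arguments take expectations and use $\lambda<0$ (so $\exp(\lambda)-1<0$) to replace $\EE[X]$ by $\mu$. Your explicit remark on where the hypothesis $\lambda<0$ enters is a welcome clarification of a step the paper states more tersely.
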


\begin{proof}
The function f : $[0,1] \xrightarrow{\RR}$ defined by $f(x) = \exp(\lambda x) - x(\exp(\lambda)-1)-1$ is convex and such that $f(0) = f(1)= 0$, hence $f(x) \leq 0 $ for all $ x \in [0,1]$. Consequently, 
$$\EE[\exp(\lambda X)] \leq  \EE [X (\exp(\lambda)-1)+1] = \EE[X](\exp(\lambda)-1) +1  $$

As $\lambda < 0 $ and $\mu \leq \EE[X]$, we have $\EE[X](\exp(\lambda)-1) \leq \mu(\exp(\lambda)-1) $ and 
$$\EE[\exp(\lambda X)] \leq \mu(\exp(\lambda)-1)+ 1 $$

\end{proof}

\begin{lemma}\label{theo:10GravierCappe}
Let $(X^a_t)_{t\geqslant 1}$ with $a \in \mathcal{R} $, be  $|\mathcal{R}|<\infty$ independent sequences of independent random variables bounded  in $[0,1]$ defined on a probability space $(\Omega,\mathcal{F},\PP)$ with common expectations $\mu^a = \EE[X^a_t]$ of minimal value $\mu = \min_{a\in\mathcal{R}}\mu^a$.
Let $\mathcal{F}_t$ be an increasing sequence of $\sigma-$fields of $\mathcal{F}$ such that for each t, $\sigma((X^{a}_1)_{a\in\mathcal{R}}, \dots,(X^{a}_t)_{a\in\mathcal{R}})\subset{\mathcal{F}_t}$
and for $s > t$ and $a$ a recommendation, $X^a_s$ is independent from $\mathcal{F}_t$. Consider $|\mathcal{R}|$ previsible sequences $(\epsilon^{a}_t)_{t\geq1}$ of Bernoulli variables (for all $t>0$, $\epsilon^a_t$ is $\mathcal{F}_{t-1}-mesurable$) such that for all $t>0$, $\sum_i \epsilon^{a}_t \in \{0,1\}$. Let $\delta > 0$ and for every $t$ let 
$$S(t)=\sum^t_{s=1} \sum_i \epsilon^i_sX^i_s, \qquad  N(t)=\sum^t_{s=1} \sum_i \epsilon^i_s, \qquad \hat{\mu}(t) =\frac{S(t)}{N(t)} $$
$$u(t) = \max \{ q >  \hat{\mu}(t) :N(t)d(\hat{\mu}(t),q) \leq \delta \}$$
Then 
$$\PP(u(t) < \mu) \leq e\lceil\delta \log(t)\rceil \exp(-\delta) $$
\end{lemma}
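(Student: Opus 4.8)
The plan is to follow the proof of Theorem~10 of \cite{Garivier2011}, replacing the i.i.d.\ Bernoulli exponential-moment control by the mixture bound of Lemma~\ref{lem:9GravierCappe}. First I would reduce the event $\{u(t) < \mu\}$ to a self-normalized deviation. Since $u(t) > \hat\mu(t)$ by definition, $u(t) < \mu$ forces $\hat\mu(t) < \mu$; moreover $q \mapsto d(\hat\mu(t), q)$ is continuous and strictly increasing on $[\hat\mu(t), 1]$, and $u(t)$ is the largest $q$ with $N(t)\, d(\hat\mu(t), q) \le \delta$, so $u(t) < \mu$ yields $N(t)\, d(\hat\mu(t), \mu) > \delta$. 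Hence
$$\{u(t) < \mu\} \subseteq \left\{\hat\mu(t) < \mu,\ N(t)\, d(\hat\mu(t), \mu) > \delta\right\},$$
and it suffices to bound the probability of the right-hand event.

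The crux is building the exponential supermartingale in the mixture setting. For $\lambda < 0$ set
$$W_t^\lambda \defeq \exp\left(\lambda S(t) - N(t)\phi(\lambda)\right), \qquad \phi(\lambda) \defeq \log\left(1 - \mu + \mu e^\lambda\right),$$
with $W_0^\lambda = 1$. Writing the increments $\sum_i \epsilon_t^i X_t^i$ and $\sum_i \epsilon_t^i \in \{0,1\}$, I would condition on $\mathcal{F}_{t-1}$: the sequences $(\epsilon_t^i)_i$ are previsible with at most one equal to $1$, so either no index is active (increment factor $1$), or a single index $i_0$, selected by $\mathcal{F}_{t-1}$, is active. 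In the latter case $X_t^{i_0}$ is independent of $\mathcal{F}_{t-1}$ and $\EE[\exp(\lambda X_t^{i_0})] \le 1 - \mu + \mu e^\lambda = e^{\phi(\lambda)}$ by Lemma~\ref{lem:9GravierCappe} (using $\mu \le \mu^{i_0}$ and $\lambda < 0$), so the conditional increment factor $\EE[\exp(\lambda X_t^{i_0})]\, e^{-\phi(\lambda)} \le 1$. This gives $\EE[W_t^\lambda \mid \mathcal{F}_{t-1}] \le W_{t-1}^\lambda$, hence $\EE[W_t^\lambda] \le 1$.

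It remains to connect $d$ to $W_t^\lambda$ and to peel. By Legendre duality for the Bernoulli$(\mu)$ cumulant $\phi$,
$$N(t)\, d(\hat\mu(t), \mu) = \sup_{\lambda < 0}\left(\lambda S(t) - N(t)\phi(\lambda)\right) = \sup_{\lambda < 0}\log W_t^\lambda \quad\text{whenever } \hat\mu(t) < \mu.$$
Because the maximizing $\lambda$ is random, I cannot apply Markov's inequality at a single fixed $\lambda$; instead I would peel over the range $\{1,\dots,t\}$ of $N(t)$ using a geometric grid of ratio $e^{1/\delta}$, which yields $\lceil \delta \log t\rceil$ slices. On the slice $\{n_k \le N(t) < n_{k+1}\}$ I fix $\lambda_k$ optimal for the mean $x_k$ satisfying $d(x_k,\mu)=\delta/n_{k+1}$; since $\hat\mu(t) < x_k$ on the event and $\lambda_k < 0$, monotonicity gives $\log W_t^{\lambda_k} > N(t)\,d(x_k,\mu) \ge \delta\, n_k/n_{k+1} = \delta e^{-1/\delta} \ge \delta - 1$. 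Markov's inequality with $\EE[W_t^{\lambda_k}] \le 1$ then bounds each slice by $e^{1-\delta} = e\,e^{-\delta}$, and summing over the $\lceil\delta\log t\rceil$ slices gives $\PP(u(t) < \mu) \le e\,\lceil\delta\log t\rceil\, e^{-\delta}$.

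The main obstacle is the supermartingale verification in the mixture setting: one must exploit that the active index is $\mathcal{F}_{t-1}$-measurable and that at most one variable contributes per round, so that the conditional moment bound of Lemma~\ref{lem:9GravierCappe} applies to exactly the selected $X_t^{i_0}$ with its own mean $\mu^{i_0}\ge\mu$. Everything hinges on the exponential-moment input having the Bernoulli form $e^{\phi(\lambda)}$; once this is established, the peeling bookkeeping, balancing the number of slices $\lceil\delta\log t\rceil$ against the freezing loss absorbed into the constant $e$, transfers verbatim from \cite{Garivier2011}.
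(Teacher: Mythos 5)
Your proposal is correct and follows essentially the same route as the paper: reduce $\{u(t)<\mu\}$ to $\{\hat\mu(t)<\mu,\ N(t)d(\hat\mu(t),\mu)>\delta\}$, build the supermartingale $W_t^{\lambda}=\exp(\lambda S(t)-N(t)\phi_\mu(\lambda))$ from Lemma \ref{lem:9GravierCappe} together with the previsible at-most-one-active-index structure, then apply the Garivier--Capp\'e peeling over the range of $N(t)$ with Markov's inequality on each slice. The only differences are cosmetic: your grid has ratio $e^{1/\delta}$ where the paper uses $1+1/(\delta-1)$ (both lose at most $1$ in the exponent, absorbed into the factor $e$), and the paper explicitly notes that slices with $\delta/N(t)>d(0,\mu)$ are vacuous, a degenerate case your choice of $x_k$ handles only implicitly.
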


\begin{proof}

For every $\lambda < 0$, by Lemma \ref{lem:9GravierCappe}, it holds that $\log (\EE[\exp (\lambda X^a_1)]) \leq \log (1- \mu +\mu\exp(\lambda)) = \phi_\mu(\lambda)$ for all $a$.
Let $W_0^\lambda =1$ and for $t \geq 1$, 
$$W_t^{\lambda} = \exp(\lambda S(t) -N(t)\phi_\mu(\lambda)) $$

$(W_t^{\lambda})_{t \geq} 0$ is a super-martingale relative to $(\mathcal{F}_t)_{t \geq 0}$. In fact,

$$\EE[\exp(\lambda\{S(t+1)-S(t)\})|\mathcal{F}_t] = \EE[\exp(\lambda \sum_i \epsilon^i_{t+1} X^i_{t+1})|\mathcal{F}_t]$$

As $(X^i_t)_t$  are independent sequences, we can rewrite :
$$
\EE[\exp(\lambda\{S(t+1)-S(t)\})|\mathcal{F}_t]  = \prod_i\EE[\exp(\lambda \epsilon^i_{t+1} X^i_{t+1})|\mathcal{F}_t]
 =  \prod_i\exp(\epsilon^i_{t+1} \log(\EE [\exp(\lambda X^i_{t+1})|\mathcal{F}_t]))$$
 $$
 =  \exp( \sum_i \epsilon^i_{t+1} \log(\EE [\exp(\lambda X^i_1)|\mathcal{F}_t]))  \leq  \exp( \sum_i \epsilon^i_{t+1} \phi_\mu(\lambda)) 
 = \exp ( \{ N(t+1)- N(t) \} \phi_\mu(\lambda))
$$

which can be rewritten as

$$\EE[\exp(\lambda S(t+1) - N(t+1)\phi_\mu(\lambda))|\mathcal{F}_t] \leq \exp (\lambda S(t) - N(t)\phi_\mu(\lambda))$$

The rest of the proof follows \cite{Garivier2011}. Using the "peeling trick": the interval $\{1,\dots,t\}$ of possible values for $N(t)$ is divided into slices $\{t_{k-1}+1,\dots,t_k\}$ of geometrically increasing size. Each slice is treated independently. We assume that $\delta > 1$ and we construct the slicing as follow : $t_0 = 0$ and for $k \in \NN^*$, $t_k = \lfloor(1+\eta)^k\rfloor$, with $\eta = 1/(\delta-1)$.
Let $D = \lceil\frac{\log t}{\log 1+\eta} \rceil$ be the first interval such that $t_D \geq t$ and $A_k$ the event $\{t_{k-1} \leq N(t) \leq t_k\} \cap \{u(t) < \mu\}$ . We have :
$$\PP(u(t) < \mu ) \leq \PP(\bigcup^D_{k=1}A_k)\leq \sum^D_{k=1} \PP(A_k)$$ 
Note that by definition of $u(t)$, we have $u(t)< \mu$ if and only if $\hat\mu(t) < \mu$ and $N(t)d(\hat\mu(t),\mu) > \delta$.
Let s be the smallest integer such that $\delta/(s+1) \leq d(0,\mu)$.
If $N(t) \leq s$, then
$$N(t)d(\hat\mu,\mu) \leq sd(\hat\mu,\mu)
\underset{\text{as } \hat\mu \leq \mu}{\leq}sd(0,\mu)
\underset{\text{by definition of } s}{<} \delta. $$
Thus, we can't have $\hat\mu < \mu $ and $N(t)d(\hat\mu,\mu) > \delta $ and $\PP(u(t)< \mu)= 0$ . We have for all $k$ such that $t_k \leq s$, $\PP(A_k) = 0 $ and we have $ u(t) > \mu$ when $N(t) \in \{t_{k-1}+1,\dots, t_k\}$ and $t_k \leq s$.

Now lets see how $u(t)$ can be upper bounded by $\mu$ when $N(t)> s$.
For $k$ such that $t_k \geq s$, we note $\tilde{t}_{k-1} = \max\{t_{k-1},s\}$ and we take  $z<\mu $ such as $d(z,\mu)= \delta/(1+\eta)^k$ and $x \in ]0,\mu[$ such that $d(x,\mu)=\delta/N(t)$. We define $\lambda(x) = \log(x(1-\mu)) -log(\mu(1-x)) < 0$ so that we can rewrite $d(x,\mu)$ as $d(x,\mu) = \lambda(x)x-\phi_\mu(\lambda(x)) $.

%\commentrg{besoin de définir x ? juste la fonction lamnda sur ]0,mu[ suffit non ? }

\begin{itemize}
    \item with $N(t)> \tilde{t}_{k-1},$ we have $ d(z,\mu) = \frac{\delta}{(1+\mu)^k} \geq \frac{\delta}{(1+\mu)N(t)}$
    \item with $N(t) \leq t_k$, we have $d(\hat\mu(t), \mu) > \frac{\delta}{N(t)} > \frac{\delta}{(1+\eta)^k} = d(z,\mu)$. As $\hat\mu < \mu$, we have $\hat\mu(t) \leq z$
\end{itemize}

Hence on the event $\{ \tilde{t}_{k-1}<N(t) \leq t_k \} \cap \{\hat \mu (t) < \mu \} \cap \{d(\hat\mu(t), \mu)\} $ it holds that $\lambda(z)\hat\mu(t)-\phi_\mu(\lambda(z)) \geq \lambda(z)z-\phi_\mu(\lambda(z)) = d(z,\mu) \geq \frac{\delta}{(1+\eta)N(t)}  $

It leads to :

\begin{align*}
    \{\tilde{t}_{k-1}< N(t) \leq t_k\} \cap \{u(t) < \mu\} \subset{\{\lambda(z)\hat\mu(t)-\phi_\mu(\lambda(z)) \geq \frac{\delta}{(1+\eta)N(t)} \}} \\
    \subset{\{\lambda(z)S(t)-N(t)\phi_\mu(\lambda(z)) \geq \frac{\delta}{(1+\eta)}\}}\\
    \subset{\{W_n^\lambda(z) > \exp \left( \frac{\delta}{(1+\eta)}\right)\}}
\end{align*}

As $(W_t^\lambda)_{t\geq 0}$ is a supermartingale, $\EE[W^{\lambda(z)}_n] \leq \EE[W^{\lambda(z)}_n] = 1$, and the Markov inequality yields :
$$\PP(\{\tilde{t}_{k-1}< N(t) \leq t_k\} \cap \{u(t) < \mu\}) \leq  \PP\left( W_n^\lambda(z) > \exp \left( \frac{\delta}{(1+\eta)}\right)\right) \leq \exp \left( -\frac{\delta}{(1+\eta)}\right) $$

As $\eta = 1/(\delta-1)$, $D =\lceil\frac{\log n}{\log 1+\eta} \rceil $ and $\log(1+1/(\delta-1)) \geq 1/\delta$, we obtain : 

$$\PP(u(t) < \mu) \leq \left\lceil\frac{\log n}{\log \left(1+\frac{1}{\delta-1}\right)} \right\rceil \exp(-\delta+1)\leq e\lceil \delta\log(t)\rceil \exp(-\delta) $$

\end{proof}

\forlater{
\begin{lemma}[Upper-Bound on the Number of Lower-Estimations of $\bar{\bar s}_{j,i}(t)$]\label{lemma:pessimistic_bound_is_good}
Under the hypotheses of Theorem \ref{theo:unirank}, for any pair of items $(i,j)$ such that $j\succ i$, and any partition $\tilde\Pm$

$$\EE\left[\left|\left\{t \in [T]: \tilde\Pm(t)=\tilde\Pm, \frac{\bar{\bar s}_{j,i}(t)+1}{2} \geqslant \frac{\tilde{\Delta}_{i,j}+1}{2} \right\}\right|\right] = O(\log(\log(T))).$$
\end{lemma}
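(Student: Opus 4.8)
The plan is to read the event of the lemma as the failure of a Kullback--Leibler confidence bound and to control it with \cref{theo:10GravierCappe}. Write $\hat\mu_{i,j}(t) = \frac{1+\hat s_{i,j}(t)}{2}$, the empirical average over the $T_{i,j}(t)$ observations of the rescaled per-step variable $\frac{1+(c_i(s)-c_j(s))}{2}$ (restricted to iterations where $(i,j)$ share a subset of $\Pm(s)$ and $c_i(s)\neq c_j(s)$); then $\frac{\bar{\bar s}_{j,i}(t)+1}{2} = f\bigl(\hat\mu_{i,j}(t), T_{i,j}(t), \tilde t_{\tilde\Pm}(t)\bigr)$ is the corresponding KL index. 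Because $j\succ i$, \cref{lem:min_s} gives $\tilde\Delta_{i,j} = \max_{\Pm}\EE_{\av\sim\Uc(\Ac(\Pm))}[c_i-c_j\mid c_i\neq c_j] < 0$, so every per-step conditional mean is at most $\mu\defeq\frac{1+\tilde\Delta_{i,j}}{2}$, and the event $\frac{\bar{\bar s}_{j,i}(t)+1}{2}\geqslant\frac{1+\tilde\Delta_{i,j}}{2}$ is exactly the event that this confidence bound over-shoots the largest admissible mean.

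First I would check that the bookkeeping fits \cref{theo:10GravierCappe}. The observations of the pair $(i,j)$ form a mixture: each contributes a Bernoulli variable whose mean depends on the played permutation but never exceeds $\mu$. This is the ``mixture of variables of bounded expectation'' regime for which \cref{lem:9GravierCappe,theo:10GravierCappe} were tailored. Applying \cref{theo:10GravierCappe} to the complementary variables $1-\frac{1+(c_i(s)-c_j(s))}{2}$ (using $\KL(p,q)=\KL(1-p,1-q)$) converts the over-shoot of the lower bound into the under-shoot of an upper bound, and yields, at a fixed iteration $t$ with confidence parameter $\delta=\log\tilde t_{\tilde\Pm}(t)+3\log\log\tilde t_{\tilde\Pm}(t)$, a failure probability of order $\lceil\delta\log T_{i,j}(t)\rceil e^{-\delta}$.

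Next I would sum over the iterations at which $\tilde\Pm$ is the leader. Enumerating them $t_1<t_2<\cdots$, at $t_n$ we have $\tilde t_{\tilde\Pm}(t_n)=n-1$, hence $e^{-\delta}=\bigl((n-1)(\log(n-1))^{3}\bigr)^{-1}$. If the peeling factor $\lceil\delta\log T_{i,j}(t_n)\rceil$ is $O((\log n)^2)$, each term is $O\bigl(\frac{1}{n\log n}\bigr)$ and the partial sums of $\sum_n\frac{1}{n\log n}$ grow like $\log\log T$, which is the claim. I would make this rigorous by replacing $t_n$ with the $\Fc$-stopping time ``first iteration at which $\tilde\Pm$ has been leader $n$ times'', exactly as in the peeling argument of \cref{theo:10GravierCappe}.

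The hard part is the peeling factor $\log T_{i,j}(t_n)$. The confidence width of $\bar{\bar s}_{j,i}$ is indexed by the leader count $\tilde t_{\tilde\Pm}(t)$, whereas $T_{i,j}(t)$ accrues over all iterations --- including those with a different leader --- and can a priori be as large as the horizon, so the crude bound $T_{i,j}(t_n)\le T$ only gives $O(\log T)$. This is the ``$T_{i,j}(t)$ randomly increases'' phenomenon flagged in \cref{sec:unirank-th}, and it is the non-standard step: one must argue that the observations feeding this bound are commensurate with $\tilde t_{\tilde\Pm}(t)$, either directly --- relating the growth of $T_{i,j}$ to the leader count through the random-set device underlying \cref{lem:concentration,lem:dev_s} --- or by splitting on $\tilde\Pm=\Pm^*$ (leader count and global time commensurate, so $\log T_{i,j}(t_n)=O(\log n)$) versus $\tilde\Pm\neq\Pm^*$ (leader only $O(\log\log T)$ times by \eqref{eq:wrong_leader}, so the sum is trivially controlled).
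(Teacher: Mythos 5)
Your route is essentially the paper's route, so let us first fix the frame of comparison: this lemma exists in the source only inside a disabled block, and the paper's own proof of it is an unfinished draft that breaks off at the final summation with an author note that it is stuck; in the published argument the same bound is what is needed for the sets $F_{\Pm^+}$ in the proof of \cref{theo:leader_not_optimal}, where it is asserted by a bare citation of \cref{theo:10GravierCappe}. Like that draft, you rescale the per-observation click differences to $[0,1]$, view $\hat{s}$ as a mixture of bounded variables whose conditional means are one-sidedly bounded via \cref{lem:min_s}, apply \cref{theo:10GravierCappe} with $\delta=\log\tilde{t}_{\tilde\Pm}(t)+3\log\log\tilde{t}_{\tilde\Pm}(t)$, and sum $e\lceil\delta\log(\cdot)\rceil e^{-\delta}\approx 1/(n\log n)$ over the leader count $n$, using stopping times to handle the random $\delta$. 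You also, correctly, did not prove the statement as printed: for $j\succ i$ the printed event ($\bar{\bar s}_{j,i}(t)\geqslant\tilde\Delta_{i,j}$, an optimistic index exceeding a negative threshold lying below every admissible mean) holds at almost every round, so no $\OO(\log\log T)$ bound is possible; the meaningful event — the one the draft actually bounds and the one defining $F_{\Pm^+}$ — is the undershoot $\frac{\bar{\bar s}_{j,i}(t)+1}{2}\leqslant\frac{\tilde\Delta_{j,i}+1}{2}$, and your complementation via $\KL(p,q)=\KL(1-p,1-q)$ is exactly the reduction of that event to \cref{theo:10GravierCappe}.

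The gap you flag in your last paragraph is genuine, and it is precisely where the paper's draft stalls. The peeling factor in \cref{theo:10GravierCappe} is the logarithm of the range of $N(t)$, and here $N(t)=T_{i,j}(t)$ accrues at every round where $i$ and $j$ share a subset, under any leader (note that the draft even rewrites $\hat{s}_{i,j}(t)$ as if it pooled only rounds with leader $\tilde\Pm$, which is false by the definition of $\hat{s}_{i,j}$); with the crude bound $T_{i,j}(t)\leqslant T$ the summation yields only $\OO(\log T)$, and the draft's substitution of $\log\tilde{t}_{\tilde\Pm}(t)$ for this factor is exactly the unjustified commensurability you identify. However, your fallback (b) is circular: \eqref{eq:wrong_leader} is established through \cref{theo:leader_not_optimal}, whose proof requires this very lemma for sub-optimal leaders $\tilde\Pm\neq\Pm^*$ (for $\tilde\Pm=\Pm^*$ the set $\Nc^+$ is empty and the bound is never invoked), so the split gives you nothing in the only case that matters. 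Your fallback (a) — tying the growth of $T_{i,j}$ to the leader count through the random-set device of \cref{lem:concentration} and \cref{lem:dev_s} — points in the right direction but is a pointer, not an argument. Net assessment: your attempt faithfully reproduces, and honestly diagnoses, the paper's intended proof, but like the paper it does not close the commensurability step; as written, both arguments establish only $\OO(\log T)$ for this count unless that step is supplied.
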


\begin{proof}

For any ordered partition $\Pm$ and permutation $\av$, let consider the random variables
$$
X^{\Pm,\av}_t \defeq c_i(s)- c_j(s)+\ind\left\{c_i(s)=c_j(s)\right\}\tilde\Delta_{i,j}(\Pm) \mid \av(s)=\av
$$
and
$$
\varepsilon^{\Pm,\av}_s\defeq\ind\left\{\Pm(s)=\Pm\right\}\ind\left\{\av(s)=\av\right\}\ind\left\{\exists c, (i,j) \in P_c^2\right\}\ind\left\{c_i(s)\neq c_j(s)\right\}.
$$

Let $t$ be in $[T]$ and rewrite $\hat{s}_{i,j}(t)$ as
$$
\frac{1}{\sum_{s:\tilde\Pm(s)=\tilde\Pm}\sum_{\Pm,\av} \varepsilon^{\Pm,\av}_s}
\sum_{s:\tilde\Pm(s)=\tilde\Pm} \sum_{\Pm, \av} \varepsilon^{\Pm,\av}_s X^{\Pm,\av}_s.
$$

$X^{\Pm,\av}_s \in [-1,1]$, $\varepsilon^{\Pm,\av}_s \in \{0,1\}$, and the expectation of the variables $X^{\Pm,\av}_s$ are $\tilde\Delta_{i,j}(\av) \leqslant \tilde\Delta_{i,j}$ as $j\succ i$. Therefore, 
\begin{align*}
\PP(\frac{\bar{\bar s}_{j,i}(t)+1}{2}
\leqslant \frac{\tilde{\Delta}_{j,i}+1}{2})
&\leqslant e\lceil (\log \tilde{t}_{\tilde\Pm(t)}(t) + 3\log\log \tilde{t}_{\tilde\Pm(t)}(t))\log(\tilde{t}_{\tilde\Pm(t)}(t))\rceil \exp(-\log \tilde{t}_{\tilde\Pm(t)}(t) - 3\log\log \tilde{t}_{\tilde\Pm(t)}(t))
\\&
\leqslant\frac{e\lceil (\log \tilde{t}_{\tilde\Pm(t)}(t) + 3\log\log \tilde{t}_{\tilde\Pm(t)}(t))\log(\tilde{t}_{\tilde\Pm(t)}(t))\rceil}{\tilde{t}_{\tilde\Pm(t)}(t)\log^3 \tilde{t}_{\tilde\Pm(t)}(t)}.
\end{align*}

Hence,
\begin{align*}
\EE\left[\left|\left\{t \in [T]: \tilde\Pm(t)=\tilde\Pm, \frac{\bar{\bar s}_{j,i}(t)+1}{2} \leqslant \frac{\tilde{\Delta}_{i,j}+1}{2} \right\}\right|\right]
&\leqslant \sum_{s:\tilde\Pm(s)=\tilde\Pm} 
\end{align*}

\commentrg{Ça coince ici, à revoir}

\end{proof}
}

%%%%%%%%%%%%%%%%%%%%%%%%%%%%%%%%%%%%%%%%%%%%%%%%%%%%%%%%%%%%
%%%%%%%%%%%%%%%%%%%%%%%%%%%%%%%%%%%%%%%%%%%%%%%%%%%%%%%%%%%%
%%%%%%%%%%%%%%%%%%%%%%%%%%%%%%%%%%%%%%%%%%%%%%%%%%%%%%%%%%%%
%%%%%%%%%%%%%%%%%%%%%%%%%%%%%%%%%%%%%%%%%%%%%%%%%%%%%%%%%%%%
\section{Proof of Theorem \ref{theo:unirank} (Upper-Bound on the Regret of \ouralgo{} Assuming a Total Order on Items) }\label{app:regret_of_unirank}

Before proving the regret upper-bound of \ouralgo{}, we prove Lemmas \ref{theo:good_leader} and \ref{theo:leader_not_optimal} which are respectively bounding the exploration when the leader is the optimal one, and the number of iterations at which the leader is sub-optimal. Finally, the regret upper-bound of \ouralgo{} is given in Appendix \ref{app:regret_of_unirank_end}.

%%%%%%%%%%%%%%%%%%%%%%%%%%%%%%%%%%%%%%%%%%%%%%%%%%%%%%%%%%%%
%%%%%%%%%%%%%%%%%%%%%%%%%%%%%%%%%%%%%%%%%%%%%%%%%%%%%%%%%%%%
\subsection{Upper-Bound on the Number of Sub-Optimal Merges of \ouralgo{} when the Leader is the Optimal Partition}\label{app:good_leader}

\begin{lemma}[Upper-bound on the number of sub-optimal merges of \ouralgo{} when the leader is the optimal partition]\label{theo:good_leader}
Under the hypotheses of Theorem \ref{theo:unirank}, for any position $c\in\{2,\dots, L\}$ \ouralgo{} fulfills  
\begin{equation*}
\EE\left[ \sum_{t=1}^T\ind\left\{\substack{\tilde\Pm(t)=\Pm^*,\\\exists c', P_{c'}(t)=\{{\min(c-1, K)}, {c}\}}\right\}\right]
    \leqslant \frac{16}{\tilde\delta_c^*\tilde\Delta_{\min(c-1, K),c}^2}\log T + \OO\left(\log\log T\right).
\end{equation*}
\end{lemma}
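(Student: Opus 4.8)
The plan is to treat the chosen neighbor that merges the two items $\ell \defeq \min(c-1,K)$ and $c$ as a single ``sub-optimal arm'' competing against the fixed threshold $0$ that stands for the leader $\Pm^*$, and then run a KL-UCB style count of how often this arm is selected. Write $A_c(t)$ for the event inside the indicator. Since $1\succ\dots\succ K\succ[L]\setminus[K]$ and $\ell<c$ with $\ell\leqslant K$, we have $\ell\succ c$, so $\tilde\Delta_{\ell,c}>0$ and the true comparison statistic $s_{c,\ell}=-\tilde\Delta_{\ell,c}$ is strictly negative. On $A_c(t)$ the played partition $\Pm(t)$ is the neighbor of $\Pm^*$ that places $\ell$ and $c$ in a common subset, and by the $\argmax$ rule of \cref{alg:unicrank} this selection requires the neighbor's index to beat the leader's index $0$, i.e. $\bar{\bar s}_{c,\ell}(t)\geqslant 0$. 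Note that, because the comparison is against the \emph{deterministic} threshold $0$ rather than against a stochastic index of an optimal arm, no ``optimism of the optimal arm'' argument (the sufficient-optimism lemmas of \cref{app:sufficient_optimism}) is needed here; the whole task reduces to bounding how often the spurious event $\bar{\bar s}_{c,\ell}(t)\geqslant0$ occurs.

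First I would split $A_c(t)$ according to whether the empirical statistic is faithful to the order. On the ``good'' part, where $\hat s_{\ell,c}(t)\geqslant\tfrac{\tilde\Delta_{\ell,c}}{2}$, the rescaled empirical mean $\tfrac{1+\hat s_{c,\ell}(t)}{2}$ sits below $\tfrac12-\tfrac{\tilde\Delta_{\ell,c}}{4}$. Since $\bar{\bar s}_{c,\ell}(t)$ is a KL upper-confidence bound for $s_{c,\ell}$ built from the $T_{\ell,c}(t)$ informative observations at level $\log\tilde t_{\Pm^*}(t)+3\log\log\tilde t_{\Pm^*}(t)$, the event $\bar{\bar s}_{c,\ell}(t)\geqslant0$ forces $T_{\ell,c}(t)\,\KL\!\big(\tfrac{1+\hat s_{c,\ell}(t)}{2},\tfrac12\big)\leqslant \log\tilde t_{\Pm^*}(t)+3\log\log\tilde t_{\Pm^*}(t)$. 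Using $\tilde t_{\Pm^*}(t)\leqslant T$, monotonicity of $q\mapsto\KL(q,\tfrac12)$ below $\tfrac12$, and Pinsker's inequality $\KL(\tfrac12-x,\tfrac12)\geqslant 2x^2$, this yields $T_{\ell,c}(t)\leqslant N_0\defeq \tfrac{8(\log T+3\log\log T)}{\tilde\Delta_{\ell,c}^2}$. Hence a ``good'' selection of the merge can occur only while fewer than $N_0$ informative observations of the pair $(\ell,c)$ have been collected.

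The step that is genuinely different from a textbook KL-UCB analysis — and the main obstacle — is converting this bound on the number of informative \emph{observations} $T_{\ell,c}(t)$ into a bound on the number of \emph{iterations} at which $A_c(t)$ holds, since each such selection only produces an informative observation (an iteration with $c_\ell\neq c_c$) with probability at least $\tilde\delta_c^*$, and this increment is random and coupled to the exploration rule. Here I would apply \cref{lem:concentration} (the engine behind \cref{lem:dev_s}) with $B_t\defeq\ind\{A_c(t)\}$ and $Z_t\defeq\ind\{c_\ell(t)\neq c_c(t)\}$: whenever $B_t=1$ the played partition lies in $\Nc(\Pm^*)$, so $\EE[Z_t\mid O_t,B_t=1]\geqslant\tilde\delta_c^*$ by \cref{lem:min_s}; taking $\Lambda$ to be the set of $A_c$-iterations and $\Lambda(n)$ its $n$-th element (which satisfies the growth hypothesis with $\varepsilon=1$), the lemma gives $T_{\ell,c}(t)\geqslant\sum_{s<t}B_sZ_s\geqslant\tfrac{\tilde\delta_c^*}{2}\sum_{s<t}B_s$ outside an $\OO(1)$ set of iterations. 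Restricting the observation indicator to $A_c$ in this way is precisely what replaces the looser rate $\tilde\delta_{\ell,c}$ of Claim \eqref{eq:bound_t} by the sharper $\tilde\delta_c^*$ appearing in the statement. Since every ``good'' selection is an $A_c$-iteration and carries $T_{\ell,c}(t)\leqslant N_0$, combining the two inequalities caps the number of ``good'' selections by $\tfrac{2N_0}{\tilde\delta_c^*}+\OO(1)=\tfrac{16}{\tilde\delta_c^*\tilde\Delta_{\ell,c}^2}\log T+\OO(\log\log T)$, which is exactly the announced leading term.

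It then remains to absorb the ``bad'' part, $\hat s_{\ell,c}(t)<\tfrac{\tilde\Delta_{\ell,c}}{2}$, where the empirical statistic anomalously fails to witness $\ell\succ c$. Applying the second half of \cref{lem:dev_s} (Claim \eqref{eq:bound_s}) with the same $\Lambda$ bounds the expected number of such iterations by $\OO(1)$. Adding the two contributions gives
\begin{equation*}
\EE\left[ \sum_{t=1}^T\ind\left\{\substack{\tilde\Pm(t)=\Pm^*,\\\exists c', P_{c'}(t)=\{\ell, c\}}\right\}\right]\leqslant \frac{16}{\tilde\delta_c^*\tilde\Delta_{\ell,c}^2}\log T+\OO(\log\log T),
\end{equation*}
as claimed, with $\ell=\min(c-1,K)$. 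The only delicate point is the use of the sharper floor $\tilde\delta_c^*$ in place of $\tilde\delta_{\ell,c}$, which is legitimate exactly because $B_t=1$ forces $\Pm(t)$ into $\Nc(\Pm^*)$, the family over which $\tilde\delta_c^*$ is by definition the correct lower bound on the informative-observation probability.
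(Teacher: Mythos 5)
Your proof is correct and takes essentially the same route as the paper's: the same three-way decomposition (the $\OO(1)$ ``bad'' event $\hat{s}_{\ell,c}(t)<\tilde\Delta_{\ell,c}/2$ via Claim \eqref{eq:bound_s}, the $\OO(1)$ deviation event $T_j^*(t)<\tfrac{\tilde\delta_c^*}{2}t_j^*(t)$, and the Pinsker/KL cap $T_{\ell,c}(t)\leqslant 8(\log T+3\log\log T)/\tilde\Delta_{\ell,c}^2$ forced by $\bar{\bar s}_{c,\ell}(t)\geqslant 0$ on the remaining event), yielding the identical leading constant $16/(\tilde\delta_c^*\tilde\Delta_{\ell,c}^2)$. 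Your direct application of \cref{lem:concentration} with $B_t=\ind\{A_c(t)\}$ to justify the sharper floor $\tilde\delta_c^*$ is precisely the mechanism the paper invokes when it says ``the same proof as the one used in Lemma \ref{lem:dev_s}'' applied to the restricted counters $t_j^*(t)$ and $T_j^*(t)$.
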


\begin{proof}%[Proof of Lemma \ref{theo:good_leader}]
Let $c\in\{2,\dots,L\}$ be a position, and denote $i$ (respectively $j$) the item ${\min(c-1, K)}$ (resp. $c$).
We aim at upper-bounding the number of iterations such that the leader $\tilde{\Pm}(t)$ is the optimal partition $\Pm^*$, and either the subsets $\Pm_{c-1}^* = \{i\}$ and $\Pm_c^* = \{j\}$ are merged in the chosen partition $\Pm(t)$, or $j\in \Pm_{K+1}^*(t)$ is added to the subset $\Pm_{K}^*= \{i\}$ in the chosen partition $\Pm(t)$.
Both situations require $\bar{\bar s}_{j,i}(t)$ to be positive.

Let decompose this number of iterations:
\begin{align*}
\EE\left[ \sum_{t=1}^T\ind\left\{\substack{\tilde\Pm(t)=\Pm^*,\\\exists c', P_{c'}(t)=\{{\min(c-1, K)}, {c}\}}\right\}\right]
    & \leqslant \EE\left[ \sum_{t=1}^T\ind\left\{\substack{\tilde\Pm(t)=\Pm^*,~
    \exists c', P_{c'}(t)=\{i, j\},\\
    \bar{\bar s}_{j,i}(t)\geqslant 0}\right\}\right]
\\
    & \leqslant\EE\left[ \sum_{t=1}^T\ind\left\{\substack{\tilde\Pm(t)=\Pm^*,~
    \exists c', P_{c'}(t)=\{i, j\},\\
    \hat{s}_{j,i}(t) > \frac{\tilde\Delta_{j,i}}{2}}\right\}\right]
\\
    & \quad +\EE\left[ \sum_{t=1}^T\ind\left\{\substack{\tilde\Pm(t)=\Pm^*,~
    \exists c', P_{c'}(t)=\{i, j\},\\
    T_j^*(t) < \frac{\tilde\delta_j^*}{2}t_j^*(t)}\right\}\right]
\\
    & \quad +\EE\left[ \sum_{t=1}^T\ind\left\{\substack{\tilde\Pm(t)=\Pm^*,~
    \exists c', P_{c'}(t)=\{i, j\},\\
    T_j^*(t) \geqslant \frac{\tilde\delta_j^*}{2}t_j^*(t),~
    \hat{s}_{j,i}(t) \leqslant  \frac{\tilde\Delta_{j,i}}{2},\\
    \bar{\bar s}_{j,i}(t)\geqslant0}\right\}\right]
    ,
\end{align*}
where $t_j^*(t) \defeq \sum_{s = 1}^{t-1} \ind\left\{\tilde\Pm(t)=\Pm^*\right\}\ind\left\{\exists c, (i,j)\in P_c(s)^2\right\}$,\\
and $T_j^*(t) \defeq \sum_{s = 1}^{t-1} \ind\left\{\tilde\Pm(t)=\Pm^*\right\}\ind\left\{\exists c, (i,j)\in P_c(s)^2\right\}\ind\{c_i(s) \neq c_j(s)\}.$

Let bound the first term in the right-hand side.

Denote $\Lambda=\left\{t: \tilde\Pm(t)=\Pm^*, ~\exists c', P_{c'}(t)=\{i, j\}\right\}$ the set of iterations at which $\tilde\Pm(t)=\Pm^*$ and both items $i$ and $j$ are gathered in a subset of $\Pm(t)$. We decompose that set as $\Lambda \subseteq \bigcup_{s\in\NN}\Lambda(s)$, with $\Lambda(s) \defeq \{t\in \Lambda: t_{i,j}(t) = s\}$. $|\Lambda(s)| \leqslant 1$ as $t_{i,j}(t)$ increases for each $t\in \Lambda$.  Note that for each $s\in \NN$ and $n \in \Lambda(s)$, $t_{i,j}(n) \geqslant t_{i,j}(n)= s$.

Note also that with the current hypothesis on the order, $i\succ j$, hence by Lemma \ref{lem:dev_s},
$$
\EE\left[\sum_{t\geq1} \ind\left\{t\in \Lambda, \hat{s}_{j,i}(t) > \frac{\tilde\Delta_{j,i}}{2} \right\}\right] = \OO(1).
$$

The second term  is bounded similarly with the same set $\Lambda$ but with a different decomposition: $\Lambda \subseteq \bigcup_{s\in\NN}\Lambda(s)$, with $\Lambda(s) \defeq \{t\in \Lambda: t_j^*(t) = s\}$. $|\Lambda(s)| \leqslant 1$ as $t_j^*(t)$ increases for each $t\in \Lambda$.  Note that for each $s\in \NN$ and $n \in \Lambda(s)$, $t_j^*(n) \geqslant t_j^*(n)= s$.

Therefore, the same proof as the one used in Lemma \ref{lem:dev_s} gives
$$
\EE\left[\sum_{t\geq1} \ind\left\{t\in \Lambda, T_j^*(t) < \frac{\tilde{\delta}_j^*}{2} t_j^*(t)\right\}\right] = \OO(1)
$$

It remains to upper-bound the third term.

Let note $C \defeq \left\{t\in [T]:
    \tilde\Pm(t)=\Pm^*,~
    \exists c', P_{c'}(t)=\{i, j\},~
    T_j^*(t) \geqslant  \frac{\tilde\delta_j^*}{2}t_j^*(t),~
    \hat{s}_{j,i}(t) \leqslant  \frac{\tilde\Delta_{j,i}}{2},~
    \bar{\bar s}_{j,i}(t)\geqslant0\right\}$.

Let $t\in C$.

%At each step of the initialization phase, the algorithm removes at least one element $e$ of the set $\tilde{E}$ of unseen elements. Therefore, the initialization lasts at most $|E|$ iterations.
%Hence, at iteration $t$, $\av(t)=\av$ is chosen as $\sum_{e\in\av} b_e(t) = \max_{\av'\in\Ac}\sum_{e\in\av'} b_e(t)$.

By Pinsker's inequality and as $\bar{\bar s}_{j,i}(t)\geqslant0$,
\begin{align*}
    \frac{1}{2}
    & \leqslant \frac{\bar{\bar s}_{j,i}(t)+1}{2}
    \\
    &\leqslant \frac{\hat{s}_{j,i}(t)+1}{2} + \sqrt{\frac{\log(\tilde{t}_{\Pm^*}(t))+3\log(\log(\tilde{t}_{\Pm^*}(t)))}{2T_{i,j}(t)}}
    \\
    &\leqslant \frac{\tilde\Delta_{j,i}}{4} + \frac{1}{2} + \sqrt{\frac{\log(\tilde{t}_{\Pm^*}(t)))+3\log(\log(\tilde{t}_{\Pm^*}(t))))}{2T_{i,j}(t)}}
.
\end{align*}
Hence, $T_{i,j}(t) \leqslant  \frac{8\log(\tilde{t}_{\Pm^*}(t)))+24\log(\log(\tilde{t}_{\Pm^*}(t))))}{\tilde\Delta_{i,j}^2}$ as $\tilde\Delta_{i,j}=-\tilde\Delta_{j,i}>0$ given Lemma \ref{lem:min_s}. Then, by definition of $C$ and as
(i) $\tilde{t}_{\Pm^*}(t) \leqslant t\leqslant T$,
(ii) $T_j^*(t)\leqslant T_{i,j}(t)$,
and (iii) $\tilde\delta_j^*\geqslant\tilde\delta_{i,j}>0$ given Lemma \ref{lem:min_s}, $t_j^*(t) \leqslant \frac{2T_j^*(t)}{\tilde\delta_j^*} \leqslant \frac{2T_{i,j}(t)}{\tilde\delta_j^*} \leqslant  \frac{16\log(T)+48\log(\log(T))}{\tilde\delta_j^*\tilde\Delta_{i,j}^2}.$

Therefore, $C\subseteq \left\{t\in [T]:
    \tilde\Pm(t)=\Pm^*,~
    \exists c', P_{c'}(t)=\{i, j\},~
    t_j^*(t) \leqslant  \frac{16\log(T)+48\log(\log(T))}{\tilde\delta_j^*\tilde\Delta_{i,j}^2}
    \right\}$, and
\begin{align*}
    \EE\left[ \sum_{t=1}^T\ind\left\{\substack{\tilde\Pm(t)=\Pm^*,~
    \exists c', P_{c'}(t)=\{i, j\},\\
    T_j^*(t) \geqslant \frac{\tilde\delta_j^*}{2}t_j^*(t),~
    \hat{s}_{j,i}(t) \leqslant  \frac{\tilde\Delta_{j,i}}{2},\\
    \bar{\bar s}_{j,i}(t)\geqslant0}\right\}\right]
    &= \EE\left[|C|\right]
    \\
    &\leqslant \EE\left[\left|\left\{\substack{t\in [T]:
    ~\tilde\Pm(t)=\Pm^*,
    ~\exists c', P_{c'}(t)=\{i, j\},\\
    t_j^*(t) \leqslant  \frac{16\log(T)+48\log(\log(T))}{\tilde\delta_j^*\tilde\Delta_{i,j}^2}
    }\right\}\right|\right]
    \\
    &\leqslant \frac{16\log(T)+48\log(\log(T))}{\tilde\delta_j^*\tilde\Delta_{i,j}^2},
\end{align*}
which concludes the proof.

\end{proof}

%%%%%%%%%%%%%%%%%%%%%%%%%%%%%%%%%%%%%%%%%%%%%%%%%%%%%%%%%%%%
%%%%%%%%%%%%%%%%%%%%%%%%%%%%%%%%%%%%%%%%%%%%%%%%%%%%%%%%%%%%
\subsection{Upper-Bound on the Expected Number of Iterations at which the Leader is not the Optimal Partition}\label{app:leader_not_optimal}

\begin{lemma}[Upper-bound on the expected number of iterations at which the leader is not the optimal partition]\label{theo:leader_not_optimal}
Under the hypotheses of Theorem \ref{theo:unirank}, \ouralgo{} fulfills  
\begin{equation*}
\EE\left[\sum_{t=1}^T\ind\{\tilde\Pm(t)\neq\Pm^*\}\right]
    = \OO\left(\log\log T\right).
\end{equation*}
\end{lemma}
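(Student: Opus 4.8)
The plan is to reproduce the two–regime structure of OSUB's leader analysis \citep{Combes2014}, but driven by the statistics $\hat{s}_{i,j}(t)$ and by the pseudo-unimodality of Lemma \ref{lem:unimodality} rather than by reward estimates. First I would split the count over the possible leaders,
\[
\EE\left[\sum_{t=1}^T\ind\{\tilde\Pm(t)\neq\Pm^*\}\right]
= \sum_{\Pm\neq\Pm^*}\EE\left[\sum_{t=1}^T\ind\{\tilde\Pm(t)=\Pm\}\right],
\]
a finite sum since the family of achievable partitions does not depend on $T$; it then suffices to bound each term by $\OO(\log\log T)$. Fix $\Pm\neq\Pm^*$. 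Because the elicitation of Algorithm \ref{alg:leader_elicitation} keeps the last block as a single set, a leader can differ from $\Pm^*=(\{1\},\dots,\{K\},[L]\setminus[K])$ only through the arrangement of the recommended (top-$K$ region) items; hence any witness provided by Lemma \ref{lem:unimodality} involves displayed items. That witness is either (Case 1) a block $\tilde{P}_c$ with $|\tilde{P}_c|>1$ whose $\succ$-maximal item $i^*$ strictly dominates the rest, or (Case 2) consecutive blocks $\tilde{P}_c,\tilde{P}_{c+1}$ with a pair $(i,j)\in\tilde{P}_c\times\tilde{P}_{c+1}$ and $j\succ i$.

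In Case 1, $i^*$ remaining un-separated in a multi-item block forces some co-member $j\in\tilde{P}_c$ with $i^*\succ j$ to satisfy $\hat{s}_{i^*,j}(t)\leqslant0$, whereas Lemma \ref{lem:min_s} gives $\tilde\Delta_{i^*,j}>0$. Crucially, two items in a common block of $\Pm$ stay co-displayed for \emph{every} played partition $\Pm(t)\in\{\Pm\}\cup\Nc(\Pm)$, since the neighbourhood operations only merge consecutive blocks or promote an item from the last block and never split a block. Thus, while $\tilde\Pm(t)=\Pm$, the pair is jointly shown with probability bounded below (through the uniform draw of $\av(t)$ in $\Ac(\Pm(t))$ and the positivity of $\tilde\delta_{i^*,j}$), so $t_{i^*,j}(t)$ grows at a bounded-below rate on $\Lambda=\{t:\tilde\Pm(t)=\Pm\}$. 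This supplies the growth hypothesis of Lemma \ref{lem:dev_s}, which caps by $\OO(1)$ the iterations where $\hat{s}_{i^*,j}(t)<\tilde\Delta_{i^*,j}/2$ (a fortiori $\leqslant 0$); once the sign is corrected the elicitation splits $\tilde{P}_c$ and the leader leaves $\Pm$. So Case 1 contributes only $\OO(1)$.

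Case 2 is where the $\log\log T$ arises, because $\hat{s}_{j,i}(t)$ is updated only when the merge-neighbour $\Pm'$ (gathering $\tilde{P}_c$ and $\tilde{P}_{c+1}$, or promoting $j$ to $\tilde{P}_{\tilde{d}-1}$) is the partition actually played, i.e. when its index $b_{\Pm'}(t)=\max_{(i',j')}\bar{\bar s}_{j',i'}(t)$ wins the $\argmax$. Since $j\succ i$ gives $\tilde\Delta_{j,i}>0$, Lemma \ref{theo:10GravierCappe} applied with $\delta=\log\tilde{t}_{\Pm}(t)+3\log\log\tilde{t}_{\Pm}(t)$ shows the index fails to be optimistic only with probability $\OO\bigl(1/(\tilde{t}_{\Pm}(t)\log\tilde{t}_{\Pm}(t))\bigr)$; summed over the leader-iterations of $\Pm$ this is $\OO(\log\log T)$ because $\sum_n 1/(n\log n)=\OO(\log\log\tilde{t}_{\Pm}(T))$. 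On the complementary good-optimism iterations $\Pm'$ is selected, the pair is co-displayed, $t_{i,j}(t)$ grows, and Lemma \ref{lem:dev_s} again bounds by $\OO(1)$ the iterations where $\hat{s}_{j,i}(t)$ keeps the wrong sign while being updated, after which the elicitation reorders $i,j$. Summing the two cases over the finitely many $\Pm$ yields the claimed $\OO(\log\log T)$.

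The main obstacle — and the reason this departs from \cite{Combes2014} — is the absence of any forced-exploration step, so the counters $t_{i,j}(t)$ and $T_{i,j}(t)$ increase at random, data-dependent rates. The delicate point is to certify the growth hypothesis of Lemma \ref{lem:dev_s} (a $T$-indexed family $(\Lambda(n))_n$ with $|\Lambda(n)|\leqslant1$ and $t_{i,j}(t)\geqslant\varepsilon n$) purely from (i) the constant per-iteration co-display probability coming from the uniform draw in $\Ac(\Pm(t))$ while $\Pm$ is leader, and (ii) the optimism guarantee that, outside $\OO(\log\log T)$ iterations, the witness neighbour is the played partition. Making (i)–(ii) quantitative, and coupling the leader count $\tilde{t}_{\Pm}(t)$ to the witness-comparison count so that the stopping-time decomposition underlying Lemma \ref{lem:dev_s} is legitimate, is the technical heart of the argument.
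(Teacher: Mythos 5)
Your skeleton is the paper's: a finite union over leaders $\tilde\Pm\neq\Pm^*$, the dichotomy of \cref{lem:unimodality}, Case 1 dispatched by \cref{lem:dev_s} (and your key observation is the right one — the neighbourhood only merges blocks or promotes items from the last block, never splits, so a pair sharing a block of the leader is co-gathered at \emph{every} iteration where $\tilde\Pm(t)=\tilde\Pm$; in fact this makes $t_{i^*,j}(t)=\tilde{t}_{\tilde\Pm}(t)$ deterministically, so the growth hypothesis holds with $\varepsilon=1$ and your appeal to a lower-bounded display probability is unnecessary there), and the $\log\log T$ extracted from failures of the KL index via \cref{theo:10GravierCappe} at rate $\OO\bigl(1/(\tilde{t}_{\tilde\Pm}(t)\log\tilde{t}_{\tilde\Pm}(t))\bigr)$, exactly as in the paper's bound on the sets $F_{\Pm^+}$.

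The genuine gap is in Case 2, at the sentence ``on the complementary good-optimism iterations $\Pm'$ is selected.'' Optimism of the witness neighbour's index does not imply that neighbour wins the $\argmax$ in line 3 of \cref{alg:unicrank}: a wrong neighbour $\Pm^-$, whose boundary pair satisfies $i(\Pm^-)\succ j(\Pm^-)$ and whose index therefore has negative drift, can nonetheless carry the largest index simply because $T_{i(\Pm^-),j(\Pm^-)}(t)$ is still small and its confidence width wide. That is not a deviation event, so it cannot be charged to \cref{theo:10GravierCappe}, and your final paragraph concedes the point (``the technical heart'') without supplying a mechanism. This is precisely where the paper's proof does its real work: it sets $\varepsilon\defeq 1/(|\Nc(\tilde\Pm)|+1)$, splits the leader-iterations into sets $A_{\Pm^+}$ where some good neighbour has $t_{\Pm^+}(t)\geqslant\varepsilon\,\tilde{t}_{\tilde\Pm}(t)$ — each $\OO(1)$ by \cref{lem:dev_s}, since there $\hat{s}_{i(\Pm^+),j(\Pm^+)}(t)>0$ against a mean below $\tilde\Delta_{i(\Pm^+),j(\Pm^+)}<0$ — and a set $B$ where all good neighbours are starved; on $B$, the pigeonhole forces some $\Pm^-\in\Nc^-$ to satisfy $t_{\Pm^-}(t)>\varepsilon\,\tilde{t}_{\tilde\Pm}(t)$, each $t\in B$ is mapped to the iteration $\psi(t)$ at which $t_{\Pm^-}$ first hit $\lceil\varepsilon\,\tilde{t}_{\tilde\Pm}(t)\rceil-1$ with $\Pm^-$ played, and Pinsker's inequality (for $\tilde{t}_{\tilde\Pm}(t)>t_0$, with $t_0$ chosen so the width is below $\tilde\Delta_{i(\Pm^-),j(\Pm^-)}/8$) shows that $\bar{\bar s}_{j(\Pm^-),i(\Pm^-)}(\psi(t))\geqslant 0$ is impossible unless $\psi(t)$ lies in one of the controlled sets $D_{\Pm^-}$, $E_{\Pm^-}$ ($\OO(1)$ each by \cref{lem:dev_s}) or $F_{\Pm^+}$ ($\OO(\log\log T)$); a bounded-multiplicity count, $|\psi^{-1}(n)|\leqslant L$, then transfers these bounds to $\EE[|B|]$. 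Without this pigeonhole/stopping-time construction — or a substitute for it — your Case 2 does not close: the claimed growth of $t_{i,j}(t)$ for the witness pair, hence the applicability of \cref{lem:dev_s}, remains unsupported.
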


\begin{proof}%[Proof of Lemma \ref{lem:leader_not_optimal}]
Let $\tilde\Pm \neq \Pm^*$ be an ordered partition of items of size $d$, and let upper-bound the expected number of iterations at which $\tilde\Pm(t)=\tilde\Pm$ by $\OO\left(\log\log T\right)$.
As there is a finite number of partitions, this will conclude the proof.

In this proof, for any couple of items $(i,j)$ we denote $\tilde{t}_{i,j}(t)\defeq\sum_{s = 1}^{t-1}\ind\left\{\tilde\Pm(t)=\tilde\Pm, \exists c, (i,j)\in P_c(s)^2\right\}$ the number of iterations at which both items have been gathered in the same subset of $\Pm(s)$ while the leader was $\tilde\Pm$. For each partition $\Pm$ in the neighborhood $\Nc\left(\tilde\Pm\right)$, we also denote $t_\Pm(t)\defeq\sum_{s = 1}^{t-1}\ind\left\{\tilde\Pm(t)=\tilde\Pm, \Pm(t)=\Pm\right\}$ the number of iterations at which $\Pm$ has been chosen while the leader was $\tilde\Pm$.

\commentrg[inline]{à corriger. La propriété correcte est : 
\begin{itemize}
    \item either there exists $c \in [K]$ and $(i,j)\in\tilde{P}_c^2$ such that $i\succ j$;
    \item or there exist $c \in [\tilde{d}-1]$, $i\in\tilde{P}_{c}$, and $j\in\tilde{P}_{c+1}$ such that $j \succ i$.
\end{itemize}
}

The proof depends on the difference between $\tilde\Pm$ and $\Pm^*$. By Lemma \ref{lem:unimodality}, 
\begin{itemize}
    \item either  $\exists c \in [\tilde{d}]$, such that $|P_c|>1$ and  $i^*\succ \argmax_{j\in P_c\setminus\{i^*\}} g(j)$, where $i^*=\argmax_{i\in P_c} g(i)$;
    \item or $\exists c \in [\tilde{d}-1]$, $\exists (i,j)\in\tilde{P}_c\times\tilde{P}_{c+1}$, such that $j \succ i$.
\end{itemize}
We first upper-bound the expected number of iterations at which $\tilde\Pm(t)=\tilde\Pm$ under the first condition, and then prove a similar upper-bound under the second condition.

\paragraph{Assume that there exists $c \in [\tilde{d}]$, such that $|P_c|>1$ and  $i\succ \argmax_{j\in P_c\setminus\{i^*\}} g(j)$, where $i=\argmax_{i\in P_c} g(i)$.}
Let $t$ be an iteration such that $\tilde\Pm(t)=\tilde\Pm$.
By \cref{asp:identifiability} and by design of the algorithm, if for each item $j\in \tilde{P}_c\setminus\{i\}$, the sign of $\hat{s}_{i,j}(t)$ would be the same as the sign of $\tilde\Delta_{i,j}>0$, then $i$ would be alone in $\tilde{P}_c(t)$. So $\hat{s}_{i,j}(t)\leqslant0$ for at least one item $j\in \tilde{P}_c\setminus\{i\}$. Let control the number of iteration at which this is true by considering the following decomposition:

$$\left\{t: \tilde\Pm(t)=\tilde\Pm\right\}
\subseteq \bigcup_{j\in\tilde{P}_c\setminus\{i\}} A_{i,j} \cup B_{i,j}\cup C_{i,j},
$$
where
$$
A_{i,j} \defeq \left\{t: \tilde\Pm(t)=\tilde\Pm, T_{i,j}(t) < \frac{\tilde\delta_{i,j}}{2}t_{i,j}(t)  \right\},
$$
$$
B_{i,j} \defeq \left\{t: \tilde\Pm(t)=\tilde\Pm, \frac{\hat{s}_{i,j}(t)}{\tilde\Delta_{i,j}} < \frac{1}{2}  \right\},
$$
and 
$$
C_{i,j}\defeq
\left\{t: \tilde\Pm(t)=\tilde\Pm,
T_{i,j}(t) \geqslant \frac{\tilde\delta_{i,j}}{2}t_{i,j}(t),
\frac{\hat{s}_{i,j}(t)}{\tilde\Delta_{i,j}} \geqslant \frac{1}{2},
\hat{s}_{i,j}(t) \leqslant 0,
\right\}.
$$

Let  $j$ be an item in $\tilde{P}_c\setminus\{i\}$, and let first upper-bound the expected size of $A_{i,j}$ and $B_{i,j}$, and then the expected size of $C_{i,j}$.

Note that at each iteration such that $\tilde\Pm(t)=\tilde\Pm$, $i$ and $j$ are in the same subset of the partition $\Pm(t)$, therefore $\tilde{t}_{i,j}(t) = \tilde{t}_{\tilde\Pm}(t)$.

Denote $\Lambda=\left\{t: \tilde\Pm(t)=\tilde\Pm\right\}$ the set of iterations at which $\tilde\Pm(t)=\tilde\Pm$, and decompose that set as $\Lambda \subseteq \bigcup_{s\in\NN}\Lambda(s)$, with $\Lambda(s) \defeq \{t\in \Lambda: \tilde{t}_{\tilde\Pm}(t) = s\}$. $|\Lambda(s)| \leqslant 1$ as $\tilde{t}_{\tilde\Pm}(t)$ increases for each $t\in \Lambda$.  Note that for each $s\in \NN$ and $n \in \Lambda(s)$, $t_{i,j}(n) \geqslant \tilde{t}_{i,j}(n) =\tilde{t}_{\tilde\Pm}(t)= s$.

Then by Lemma \ref{lem:dev_s}
$$
\EE\left[|A_{i,j}|\right]
=\EE\left[\sum_{t\geq1} \ind\left\{t\in \Lambda, T_{i,j}(t) < \frac{\tilde{\delta}_{i,j}}{2} t_{i,j}(t)\right\}\right] = \OO(1)
$$
and 
$$\EE\left[|B_{i,j}|\right]
=
\EE\left[\sum_{t\geq1} \ind\left\{t\in \Lambda, \frac{\hat{s}_{i,j}(t)}{\tilde\Delta_{i,j}} < \frac{1}{2} \right\}\right] = \OO(1).
$$

Let now upper-bound the expected size of $C_{i,j}$.

As $i\succ j$, $\tilde{\Delta}_{i,j}>0$.

Let $t\in C_{i,j}$. As $\hat{s}_{i,j}(t)\leqslant0$, $t \leqslant T$, and $\tilde{t}_{\tilde\Pm}(t) =\tilde{t}_{i,j}(t) \leqslant t_{i,j}(t) \leqslant \frac{2}{\tilde{\delta}_{i,j}} T_{i,j}(t)$, 
\begin{align*}
    0 &\geqslant \hat{s}_{i,j}(t)
    \geqslant \frac{\tilde{\Delta}_{i,j}}{2}
    >0,
\end{align*}
which is absurd. Hence, $C_{i,j} = \varnothing$, and $\EE\left[|C_{i,j}|\right]=0$.

Overall, if there exists $c \in [\tilde{d}]$, such that $|P_c|>1$ and  $i\succ \argmax_{j\in P_c\setminus\{i\}} g(j)$, where $i=\argmax_{i\in P_c} g(i)$,
\begin{align*}
\EE\left[\ind\{\tilde\Pm(t)=\tilde\Pm\}\right]
&\leqslant
\sum_{j\in\tilde{P}_c\setminus\{i\}} \EE\left[|A_{i,j}|\right] + \EE\left[|B_{i,j}|\right] + \EE\left[|C_{i,j}|\right]
\\
&=\OO(1) + \OO(1) + 0
\\
&=\OO(1)
\end{align*}

\paragraph{Assume that there exists $c\in [\tilde{d}-1]$, and $ (i,j)\in\tilde{P}_c\times\tilde{P}_{c+1}$, such that $j \succ i$.}
By design of \ouralgo{}, each neighbor of $\tilde\Pm$ takes one of both forms:

\begin{enumerate}
    \item $\left(\tilde{P}_1, \dots,\tilde{P}_{c-1}, \tilde{P}_c \cup \tilde{P}_{c+1},\tilde{P}_{c+2},\dots \tilde{P}_{\tilde{d}}\right)$,
    \item $\left(\tilde{P}_1, \dots,\tilde{P}_{\tilde{d}-2},\tilde{P}_{\tilde{d}-1}\cup\{j\}, \tilde{P}_{\tilde{d}}\setminus\{j\}\right)$.
\end{enumerate}
Let $\Pm$ be such neighbor. In the first scenario we denote $i(\Pm)\defeq\argmin_{i\in P_c}g(i)$ and $j(\Pm)\defeq\argmax_{j\in P_{c+1}}g(j)$. In the second scenario we denote $i(\Pm)\defeq\argmin_{i\in P_{\tilde{d}-1}}g(i)$, and $j(\Pm)$ the item $j$.
Finally, we denote $\Nc^+$ the set of neighbors $\Pm$ of $\tilde\Pm$ such that $j(\Pm) \succ i(\Pm)$, and $\Nc^-$ its complement $\{\tilde\Pm\}\cup\Nc(\tilde\Pm)\setminus \Nc^+$. 

It is also worth noting that with current hypothesis on $\tilde{\Pm}$,
\begin{itemize}
    \item $ \left|\Nc^+\right| + \left|\Nc^-\right| = \left|\Nc\left(\tilde{\Pm}\right)\right| + 1 \leqslant L$;
    \item $\Nc^+$ is non-empty (due to current assumption on $\tilde\Pm$);
    \item for each partition $\Pm\in\Nc(\tilde{\Pm})$,  $t_\Pm(t) = \tilde{t}_{i(\Pm),j(\Pm)}(t)$;
    \item by design of the algorithm, at each iteration $t$ such that $\tilde{\Pm}(t)=\tilde{\Pm}$,  $\hat{s}_{i(\Pm),j(\Pm)}(t) > 0$ for each partition $\Pm\in\Nc(\tilde{\Pm})$ as $i(\Pm)$ is in a subset before $j(\Pm)$ in $\tilde{\Pm}$.
\end{itemize}

To bound $\EE\left[\ind\{\tilde\Pm(t)=\tilde\Pm\}\right]$, we use the decomposition
$\{t \in [T]: \tilde\Pm(t)=\tilde\Pm\} = \cup_{\Pm^+\in\Nc^+}A_{\Pm^+} \cup B$ where
$$A_{\Pm^+} = \left\{t: \tilde\Pm(t)=\tilde\Pm, t_{\Pm^+}(t) \geqslant \varepsilon \tilde{t}_{\tilde\Pm}(t) \right\},$$
$$B = \left\{t: \tilde\Pm(t)=\tilde\Pm, \forall \Pm \in \Nc^+, t_{\Pm^+}(t) < \varepsilon \tilde{t}_{\tilde\Pm}(t) \right\},$$
$$\text{and } \varepsilon\defeq\frac{1}{\left|\Nc\left(\tilde{\Pm}\right)\right|+1}\geqslant\frac{1}{L}.$$
Hence, 
$$\EE\left[\ind\{\tilde\Pm(t)=\tilde\Pm\}\right]
\leqslant \sum_{\Pm \in \Nc^+}\EE\left[|A_{\Pm^+}|\right]
+ \EE\left[|B|\right].$$

\paragraph{Bound on $\EE\left[|A_{\Pm^+}|\right]$}
Let $\Pm \in \Nc^+$ be a permutation.

First, let's $t$ be in $A_{\Pm^+}$. Note that $\tilde{\Delta}_{i(\Pm^+),j(\Pm^+)}<0$, as $j(\Pm^+)\succ i(\Pm^+)$. Therefore, as $\hat{s}_{i(\Pm^+),j(\Pm^+)}(t) > 0$,  $\hat{s}_{i(\Pm^+), j(\Pm^+)}(t)>\frac{\tilde{\Delta}_{i(\Pm^+), j(\Pm^+)}}{2}$, and thus $E\left[|A_{\Pm^+}|\right]=
\EE\left[\sum_{t\geq1} \ind\left\{t\in A_{\Pm^+}, \hat{s}_{i(\Pm^+), j(\Pm^+)}(t) > \frac{\tilde{\Delta}_{i(\Pm^+), j(\Pm^+)}}{2} \right\}\right]$.

Secondly, let's decompose $A_{\Pm^+}$ as $A_{\Pm^+} \subseteq \bigcup_{s\in\NN}\Lambda(s)$, with $\Lambda(s) \defeq \{t\in A_{\Pm^+}: \tilde{t}_{\tilde\Pm}(t) = s\}$. $|\Lambda(s)| \leqslant 1$ as $\tilde{t}_{\tilde\Pm}(t)$ increases for each $t\in A_{\Pm^+}$.  Note that for each $s\in \NN$ and $n \in \Lambda(s)$, $t_{i(\Pm^+),j(\Pm^+)}(n) \geqslant \tilde{t}_{i(\Pm^+),j(\Pm^+)}(n) = t_{\Pm^+}(n) \geqslant \varepsilon\tilde{t}_{\tilde\Pm}(t)=\varepsilon s$.

Thus, as $j(\Pm^+)\succ i(\Pm^+)$, by Lemma \ref{lem:dev_s}
%$$\EE\left[\sum_{t\geq1} \ind\left\{t\in A_{\Pm^+}, T_{i,j}(t) < \frac{\tilde{\delta}_{i,j}}{2} t_{i,j}(t)\right\}\right] = \OO(1)$$
%and 
$$\EE\left[\sum_{t\geq1} \ind\left\{t\in A_{\Pm^+}, \hat{s}_{i(\Pm^+), j(\Pm^+)}(t) > \frac{\tilde{\Delta}_{i(\Pm^+), j(\Pm^+)}}{2} \right\}\right] = \OO(1).$$

Overall, $E\left[|A_{\Pm^+}|\right]=
\EE\left[\sum_{t\geq1} \ind\left\{t\in A_{\Pm^+}, \hat{s}_{i(\Pm^+), j(\Pm^+)}(t) > \frac{\tilde{\Delta}_{i(\Pm^+), j(\Pm^+)}}{2} \right\}\right]
= \OO\left(1\right).$

\paragraph{Bound on $\EE\left[|B|\right]$}
We first split $B$ in two parts: $B=B^{t_0} \cup B_{t_0}^T$, where $B^{t_0}\defeq \{t\in B: \tilde{t}_{\tilde\Pm}(t)\leqslant t_0\}$, $B_{t_0}^T\defeq \{t\in B: \tilde{t}_{\tilde\Pm}(t) > t_0\}$, and $t_0$ is chosen as small as possible to satisfy a constraint required later on in the proof.
Namely, $t_0 =
\max_{\Pm^- \in \Nc(\tilde\Pm)\setminus \Nc^+}
\inf \left\{s:
\sqrt{\frac{\log(s)+3\log(\log(s))}{\tilde{\delta}_{j(\Pm^-),i(\Pm^-)}(\varepsilon s-1)}}
 < \frac{\tilde{\Delta}_{i(\Pm^-),j(\Pm^-)}}{8}\right\}
$, with $t_0=0$ if $\Nc(\tilde\Pm)\setminus \Nc^+$ is empty. Note that $t_0$ only depends on $\tilde{\delta}_{j(\Pm^-),i(\Pm^-)}$ and  $\tilde{\Delta}_{i(\Pm^-),j(\Pm^-)}$ for $\Pm^-\in \Nc(\tilde\Pm)\setminus \Nc^+$.

We also define
\begin{itemize}
    \item $D_{\Pm^-} \defeq \left\{t \in [T]:  \tilde\Pm(t)=\tilde\Pm, \Pm(t)=\Pm^-, T_{j(\Pm^-),i(\Pm^-)}(t) < \frac{\tilde{\delta}_{j(\Pm^-),i(\Pm^-)}}{2} t_{j(\Pm^-),i(\Pm^-)}(t)\right\},$ for each $\Pm^-\in \Nc(\tilde\Pm)\setminus \Nc^+$
    \item $E_{\Pm^-} \defeq \left\{t \in [T]:  \tilde\Pm(t)=\tilde\Pm, \Pm(t)=\Pm^-, \hat{s}_{i(\Pm^-),j(\Pm^-)}(t) < \frac{\tilde{\Delta}_{i(\Pm^-),j(\Pm^-)}}{2}\right\},$ for each $\Pm^-\in \Nc(\tilde\Pm)\setminus \Nc^+$
    \item $F_{\Pm^+} \defeq \left\{t \in [T]:  \tilde\Pm(t)=\tilde\Pm, \frac{\bar{\bar s}_{j(\Pm),i(\Pm)}(t)+1}{2} \leqslant \frac{\tilde{\Delta}_{j(\Pm),i(\Pm)}+1}{2} \right\}$ for each $\Pm^+\in\Nc^+$.
\end{itemize}

Let $t \in B_{t_0}^T$. We have
$$\tilde{t}_{\tilde\Pm}(t)  = \sum_{\Pm^+\in\Nc^+}t_{\Pm^+}(t) + \sum_{\Pm^-\in \Nc^-} t_{\Pm^-}(t),$$
and by definition of $B$, $t_{\Pm^+}(t) < \varepsilon \tilde{t}_{\tilde\Pm}(t)$ for each $\Pm^+\in\Nc^+$. So, there exists $\Pm^-\in \Nc^-$ such that $t_{\Pm^-}(t) > \varepsilon \tilde{t}_{\tilde\Pm}(t)$ (otherwise, $\tilde{t}_{\tilde\Pm}(t)  = \sum_{\Pm^+\in\Nc^+}t_{\Pm^+}(t) + \sum_{\Pm^-\in \Nc^-} t_{\Pm^-}(t) < (|\Nc^+| + |\Nc^-|)\varepsilon \tilde{t}_{\tilde\Pm}(t) = \tilde{t}_{\tilde\Pm}(t)$, which is absurd).

Let's elicit an iteration $\psi(t)$ with specific properties. We denote $s'$ the first iteration such that $t_{\Pm^-}(s')\geqslant \varepsilon\tilde{t}_{\tilde\Pm}(t)$. At this iteration,  $t_{\Pm^-}(s')=\lceil \varepsilon\tilde{t}_{\tilde\Pm}(t)\rceil$, and $t_{\Pm^-}(s') =t_{\Pm^-}(s'-1) +1$, meaning that $\tilde\Pm(s'-1)=\tilde\Pm$ and $\Pm(s'-1)=\Pm^-$, and $t_{\Pm^-}(s'-1) = \lceil \varepsilon\tilde{t}_{\tilde\Pm}(t)\rceil-1$. Therefore, the set  $\{s \in [t]: \tilde\Pm(s) = \tilde\Pm, \Pm(s)=\Pm^-, t_{\Pm^-}(s) = \lceil \varepsilon\tilde{t}_{\tilde\Pm}(t)\rceil-1\}$ is non-empty. We define $\psi(t)$ as the minimum on this set
$$ \psi(t) \defeq \min\{s \in [t]: \tilde\Pm(s) = \tilde\Pm, \Pm(s)=\Pm^-, t_{\Pm^-}(s) = \lceil \varepsilon\tilde{t}_{\tilde\Pm}(t)\rceil-1\}.$$

Let prove by contradiction that $\psi(t) \in \bigcup_{\Pm^-\in \Nc^-} (D_{\Pm^-} \cup E_{\Pm^-}) \cup \bigcup_{\Pm^+\in\Nc^+}F_{\Pm^+}$. Assume that $\psi(t) \notin \bigcup_{\Pm^-\in \Nc^-} (D_{\Pm^-} \cup E_{\Pm^-}) \cup \bigcup_{\Pm^+\in\Nc^+}F_{\Pm^+}$. The partition $\Pm^-$ is in $\Nc^-$, so either $\Pm^-=\tilde\Pm$ or $\Pm^-\in \Nc(\tilde\Pm)\setminus \Nc^+$.

The set $\Nc^+$ is non-empty, so there exists a partition $\Pm^+\in \Nc^+$. As $\psi(t) \notin \bigcup_{\Pm^+\in\Nc^+}F_{\Pm^+}$, $\frac{\bar{\bar s}_{j(\Pm),i(\Pm)}(\psi(t))+1}{2} > \frac{\tilde{\Delta}_{j(\Pm),i(\Pm)}+1}{2}$, where $\tilde{\Delta}_{j(\Pm),i(\Pm)}>0$ as $j(\Pm)\succ i(\Pm)$. Thus  $\bar{\bar s}_{j(\Pm),i(\Pm)}(\psi(t))>0$ and $\Pm(\psi(t))=\Pm^-$ cannot be $\tilde\Pm$ by design of \ouralgo{}. Therefore $\Pm^-\in \Nc(\tilde\Pm)\setminus \Nc^+$.

Thus, either $\Nc(\tilde\Pm)\setminus \Nc^+$ is empty and we get a contradiction, or $i(\Pm^-)$ and $j(\Pm^-)$ are properly defined, and, by design of \ouralgo{}, $\bar{\bar s}_{j(\Pm^-),i(\Pm^-)}(\psi(t))\geqslant0$.
Moreover, since $\tilde\Pm(\psi(t)) = \tilde\Pm^-$ and $\psi(t) \notin D_{\Pm^-} \cup E_{\Pm^-}$,
$ T_{j(\Pm^-),i(\Pm^-)}(\psi(t)) \geqslant \frac{\tilde{\delta}_{j(\Pm^-),i(\Pm^-)}}{2} t_{j(\Pm^-),i(\Pm^-)}(\psi(t))$
and $\hat{s}_{i(\Pm^-),j(\Pm^-)}(\psi(t)) \geqslant \frac{\tilde{\Delta}_{i(\Pm^-),j(\Pm^-)}}{2}$.

Therefore,
\begin{multline*}
T_{j(\Pm^-),i(\Pm^-)}(\psi(t))
\geqslant \frac{\tilde{\delta}_{j(\Pm^-),i(\Pm^-)}}{2} t_{j(\Pm^-),i(\Pm^-)}(\psi(t))
\geqslant \frac{\tilde{\delta}_{j(\Pm^-),i(\Pm^-)}}{2} \tilde{t}_{j(\Pm^-),i(\Pm^-)}(\psi(t))\\
= \frac{\tilde{\delta}_{j(\Pm^-),i(\Pm^-)}}{2} t_{\Pm^-}(\psi(t))
= \frac{\tilde{\delta}_{j(\Pm^-),i(\Pm^-)}}{2} (\lceil \varepsilon\tilde{t}_{\tilde\Pm}(t)\rceil-1)
\geqslant \frac{\tilde{\delta}_{j(\Pm^-),i(\Pm^-)}}{2} (\varepsilon \tilde{t}_{\tilde\Pm}(t)-1)
\end{multline*}
and by Pinsker's inequality and the fact that $\psi(t)\leqslant t$ and $\tilde{t}_{\tilde\Pm}(s)$ is non-decreasing in $s$, and $\tilde{t}_{\tilde\Pm}(t) > t_0$,
\begin{align*}
\frac{1}{2} \geqslant \frac{-\bar{\bar s}_{j(\Pm^-),i(\Pm^-)}(\psi(t))+1}{2}
&\geqslant \frac{-\hat{s}_{j(\Pm^-),i(\Pm^-)}(\psi(t))+1}{2} - \sqrt{\frac{\log(\tilde{t}_{\tilde\Pm}(\psi(t)))+3\log(\log(\tilde{t}_{\tilde\Pm}(\psi(t))))}{2T_{j(\Pm^-),i(\Pm^-)}(\psi(t))}}
\\
&= \frac{\hat{s}_{i(\Pm^-),j(\Pm^-)}(\psi(t))+1}{2} - \sqrt{\frac{\log(\tilde{t}_{\tilde\Pm}(\psi(t)))+3\log(\log(\tilde{t}_{\tilde\Pm}(\psi(t))))}{2T_{j(\Pm^-),i(\Pm^-)}(\psi(t))}}
\\
&\geqslant \frac{1}{2} + \frac{\tilde{\Delta}_{i(\Pm^-),j(\Pm^-)}}{4} - \sqrt{\frac{\log(\tilde{t}_{\tilde\Pm}(t))+3\log(\log(\tilde{t}_{\tilde\Pm}(t)))}{\tilde{\delta}_{j(\Pm^-),i(\Pm^-)}(\varepsilon\tilde{t}_{\tilde\Pm}(t)-1)}}
\\
&\geqslant \frac{1}{2} + \frac{\tilde{\Delta}_{i(\Pm^-),j(\Pm^-)}}{4} - \frac{\tilde{\Delta}_{i(\Pm^-),j(\Pm^-)}}{8}
\\
&= \frac{1}{2} + \frac{\tilde{\Delta}_{i(\Pm^-),j(\Pm^-)}}{8}
\end{align*}
which contradicts the fact that $\tilde{\Delta}_{i(\Pm^-),j(\Pm^-)} > 0$.

Overall, we always get a contradiction, so, for any $t\in B_{t_0}^T$,  $\psi(t) \in \bigcup_{\Pm^-\in \Nc^-} (D_{\Pm^-} \cup E_{\Pm^-}) \cup \bigcup_{\Pm^+\in\Nc^+}F_{\Pm^+}$.

Hence, $ B_{t_0}^T \subseteq  \bigcup_{n\in \bigcup_{\Pm^-\in \Nc^-} (D_{\Pm^-} \cup E_{\Pm^-}) \cup \bigcup_{\Pm^+\in\Nc^+}F_{\Pm^+}} B_{t_0}^T\cap\left\{t\in [T]: \psi(t)=n \right\}$. Let $n$ be in $\bigcup_{\Pm^-\in \Nc^-} (D_{\Pm^-} \cup E_{\Pm^-}) \cup \bigcup_{\Pm^+\in\Nc^+}F_{\Pm^+}$. For any $t$ in $B_{t_0}^T\cap\left\{t\in [T]: \psi(t)=n \right\}$, there exists a partition  $\Pm^-\in\Nc^-$ such that  $t_{\Pm^-}(n) = \lceil\varepsilon \tilde{t}_{\tilde\Pm}(t)-1\rceil$, and $t_{\Pm^-}(n+1)=t_{\Pm^-}(n)+1$. So $|B_{t_0}^T\cap\left\{t\in [T]: \psi(t)=n \right\}| \leqslant L$ and
$$\EE\left[|B|\right]
\leqslant t_0 + \EE\left[|B_{t_0}^T|\right]
\leqslant t_0 + |\Nc^-|(\EE\left[|D|\right]+\EE\left[|E|\right]+\EE\left[|F|\right]).$$

It remains to upper-bound $\EE\left[|D|\right]$, $\EE\left[|E|\right]$, and $\EE\left[|F|\right]$ to conclude the proof.

\paragraph{Bound on $\EE\left[|D_{\Pm^-}|\right]$ and $\EE\left[|E_{\Pm^-}|\right]$}
Let $\Pm^-\in \Nc(\tilde\Pm)\setminus \Nc^+$
The upper-bound on $\EE\left[|D_{\Pm^-}|\right]$  and $\EE\left[|E_{\Pm^-}|\right]$ are obtained through Lemma \ref{lem:dev_s}.
Let $\Lambda\defeq\left\{t \in [T]: \tilde\Pm(t)=\tilde\Pm, \Pm(t)=\Pm^-\right\}$ and let use the decomposition
$\Lambda \subseteq \bigcup_{s\in \NN} \Lambda(s)$, where $\Lambda(s) \defeq \{t\in \Lambda: t_{i(\Pm^-),j(\Pm^-)}(t)=s\}$. $|\Lambda(s)| \leqslant 1$ as  $t_{i(\Pm^-),j(\Pm^-)}(t)$ increases for each $t\in \Lambda$. Note that for each $s\in \NN$ and $n\in \Lambda(s)$, $t_{i(\Pm^-),j(\Pm^-)}(n) \geqslant t_{i(\Pm^-),j(\Pm^-)}(n)=s$. Then, by Lemma \ref{lem:dev_s}, as $i(\Pm^-)\succ j(\Pm^-)$
$$
    \EE\left[|D_{\Pm^-}|\right]
    = \EE\left[\sum_{t=1}^T \ind\{t \in \Lambda: T_{i(\Pm^-),j(\Pm^-)}(t) < \frac{\tilde{\delta}_{i(\Pm^-),j(\Pm^-)}}{2} t_{i(\Pm^-),j(\Pm^-)}(t)\}\right] = \OO(1)
$$
and
$$
    \EE\left[|E_{\Pm^-}|\right]
    = \EE\left[\sum_{t=1}^T \ind\{t \in \Lambda: \hat{s}_{i(\Pm^-),j(\Pm^-)}(t) < \frac{\tilde{\Delta}_{i(\Pm^-),j(\Pm^-)}}{2}\}\right] = \OO(1).
$$

\paragraph{Bound on $\EE\left[|F_{\Pm^+}|\right]$}
By Lemma \ref{theo:10GravierCappe}, for each partition $\Pm^+\in\Nc^+$, $\EE\left[|F_{\Pm^+}|\right] = O(\log(\log(T)))$.
%\commentrg{À prouver, il y a un problème sur les indices des itérations concernées.}

Overall $\EE\left[\ind\{\tilde\Pm(t)=\tilde\Pm\}\right] \leqslant
|\Nc^+|\OO(1)
+ t_0
+ |\Nc^-|\left((|\Nc^-|-1)\OO(1) + (|\Nc^-|-1)\OO(1) + |\Nc^+|\OO(\log\log T)\right)
= \OO\left(\log\log T\right)$, which concludes the proof.
\end{proof}

%%%%%%%%%%%%%%%%%%%%%%%%%%%%%%%%%%%%%%%%%%%%%%%%%%%%%%%%%%%%
%%%%%%%%%%%%%%%%%%%%%%%%%%%%%%%%%%%%%%%%%%%%%%%%%%%%%%%%%%%%
\subsection{Final Step of the Proof of Theorem \ref{theo:unirank} (Upper-Bound on the Regret of \ouralgo{} Assuming a Total Order on Items) }\label{app:regret_of_unirank_end}

The proof of Theorem \ref{theo:unirank} from Lemmas \ref{theo:good_leader} and \ref{theo:leader_not_optimal} is mainly based on an appropriate decomposition of the regret.

\begin{proof}[Proof of Theorem \ref{theo:unirank}]
The upper-bound on the expected number of iterations at which \ouralgo{} explores while the leader is the optimal partition is given by Lemma \ref{theo:good_leader}.

The upper-bound on the expected number of iterations at which the leader is not the optimal partition is given by Lemma \ref{theo:leader_not_optimal}.

Let now consider the impact of these upper-bounds on the regret of \ouralgo{}.

Let remind that $P_c^* = \{c\}$ for $c\in[K]$, $d^*=K+1$, and $P_{K+1}^* = [L]\setminus[K]$.
Therefore, $\mu^*=\mu_{a^*}=\sum_{k=1}^K \rho(\av^*,k)$, where $\av^*\defeq\left(1, 2, \dots, K\right)$.

Let first upper-bound the regret suffered at iteration $t$ while the the leader is the optimal partition:
\begin{align*}
R^*_t &= \mu^* -\EE_{\av(t)}\left[ \mu_{\av(t)}\mid\tilde\Pm(t)=\Pm^*\right]
\\
    &= \sum_{k=1}^K\rho(\av^*,k) -\EE_{\av(t)}\left[\rho(\av(t),k)\mid\tilde\Pm(t)=\Pm^*\right]
\\
    &= \sum_{k=1}^{K} \PP\left(a_k(t)={k}\mid\tilde\Pm(t)=\Pm^*\right) \left(\rho(\av^*,k)-\EE_{\av(t)}\left[\rho(\av(t),k)\mid a_k(t)={k}, \tilde\Pm(t)=\Pm^*\right]\right)
    \\&\quad
    + \sum_{k=2}^{K} \PP\left(a_{k-1}(t)={k}\mid\tilde\Pm(t)=\Pm^*\right) \left(\rho(\av^*,k-1)-\EE_{\av(t)}\left[\rho(\av(t),k-1)\mid a_{k-1}(t)={k}, \tilde\Pm(t)=\Pm^*\right]\right)
    \\&\quad
    + \sum_{k=2}^{K} \PP\left(a_k(t)={k-1}\mid\tilde\Pm(t)=\Pm^*\right) \left(\rho(\av^*,k)-\EE_{\av(t)}\left[\rho(\av(t),k)\mid a_k(t)={k-1}, \tilde\Pm(t)=\Pm^*\right]\right)
    \\&\quad
    + \sum_{\ell=K+1}^L \PP\left(a_K(t)={\ell}\mid\tilde\Pm(t)=\Pm^*\right) \left(\rho(\av^*,k)-\EE_{\av(t)}\left[\rho(\av(t),K)\mid a_K(t)={\ell}, \tilde\Pm(t)=\Pm^*\right]\right)
\end{align*}

Let's focus on the first right hand-side term. As the probability of click at position $k$ only depends on the set of items in positions $1$ to $k-1$, and as under the condition $a_k(t)={k} \wedge \tilde\Pm(t)=\Pm^*$,  $\av(t)$ and $\av^*$ have the same set of items in positions $1$ to $k-1$, $\rho(\av^*,k)=\EE_{\av(t)}\left[\rho(\av(t),k)\mid a_k(t)={k}, \tilde\Pm(t)=\Pm^*\right]$. Hence that term is equal to 0.

Let now take a look at the second term. By design of \ouralgo{}, as $a_{k-1}(t)={k} \wedge \tilde\Pm(t)=\Pm^*$, there exists $c'$ such that $P_c(t)=\{{k-1}, k\}$, and
\begin{align*}
\PP\left(a_{k-1}(t)={k}\mid\tilde\Pm(t)=\Pm^*\right)
    &= \PP\left(a_{k-1}(t)={k}, P_c(t)=\{{k-1}, k\}\mid\tilde\Pm(t)=\Pm^*\right)\\
    &= \frac{1}{2}\PP\left(P_c(t)=\{{k-1}, k\}\mid\tilde\Pm(t)=\Pm^*\right).
\end{align*}

Similarly, the third term corresponds to the existence of $c'$ such that $P_c(t)=\{{k-1}, k\}$, and
$$
\PP\left(a_{k}(t)={k-1}\mid\tilde\Pm(t)=\Pm^*\right)= \frac{1}{2}\PP\left(P_c(t)=\{{k-1}, k\}\mid\tilde\Pm(t)=\Pm^*\right).
$$

By summing both terms, we have to handle
\begin{multline*}
\frac{1}{2}\PP\left(P_c(t)=\{{k-1}, k\}\mid\tilde\Pm(t)=\Pm^*\right)\cdot\\
\left(\rho(\av^*,k-1) + \rho(\av^*,k) - \EE_{\av(t)}\left[\rho(\av(t),k-1) + \rho(\av(t),k)\mid a_{k-1}(t)={k}, a_{k}(t)={k-1}, \tilde\Pm(t)=\Pm^*\right]\right),
\end{multline*}
which is equal to
$\frac{1}{2}\PP\left(P_c(t)=\{{k-1}, k\}\mid\tilde\Pm(t)=\Pm^*\right)\Delta_k,$ where
$$
\Delta_k\defeq\rho(\av^*,k-1) + \rho(\av^*,k) - \rho(({k-1},{k})\circ\av^*,k-1) - \rho(({k-1},{k})\circ\av^*,k),
$$
as the probability of click at any position $k'$ only depends on the set of items in positions $1$ to $k'-1$.

Finally, following the same argumentation, the last term is equal to
$\frac{1}{2}\PP\left(P_c(t)=\{K, \ell\}\mid\tilde\Pm(t)=\Pm^*\right)\Delta_\ell,$ where
$
\Delta_\ell\defeq\rho(\av^*,K) - \rho((K,\ell)\circ\av^*,K).
$

Overall
\begin{align*}
R^*_t
    &= \sum_{k=2}^{K} \frac{1}{2}\PP\left(P_c(t)=\{{k-1}, k\}\mid\tilde\Pm(t)=\Pm^*\right)\Delta_k
    \\&\quad
    + \sum_{\ell=K+1}^L \frac{1}{2}\PP\left(P_c(t)=\{K, \ell\}\mid\tilde\Pm(t)=\Pm^*\right)\Delta_\ell
\\
    &=\sum_{k=2}^{L}\frac{1}{2}\PP\left(P_c(t)=\{{\min(k-1,K)}, k\}\mid\tilde\Pm(t)=\Pm^*\right)\Delta_k.
\end{align*}

Let finally upper-bound the overall regret.
\begin{align*}
R(T) &= \sum_{t = 1}^T \mu^* -\EE_{\av(t)}\left[\mu_{\av(t)}\right]
\\
    &= \sum_{t = 1}^T\PP\left(\tilde\Pm(t)\neq\Pm^*\right)\left(\mu^* -\EE_{\av(t)}\left[ \mu_{\av(t)}\mid\tilde\Pm(t)\neq\Pm^*\right]\right)
    \\&\quad
    + \sum_{t = 1}^T\PP\left(\tilde\Pm(t)=\Pm^*\right)\left(\mu^* -\EE_{\av(t)}\left[ \mu_{\av(t)}\mid\tilde\Pm(t)=\Pm^*\right]\right)
\\
    &\leqslant \sum_{t = 1}^T\PP\left(\tilde\Pm(t)\neq\Pm^*\right)K
    \\&\quad
    + \sum_{t = 1}^T\PP\left(\tilde\Pm(t)=\Pm^*\right)\sum_{k=2}^{L}\frac{1}{2}\PP\left(P_c(t)=\{{\min(k-1,K)}, k\}\mid\tilde\Pm(t)=\Pm^*\right)\Delta_k
\\
    &\leqslant \OO\left(\log\log T\right)
    \\&\quad
    + \sum_{t = 1}^T\sum_{k=2}^{L}\frac{1}{2}\PP\left(\tilde\Pm(t)=\Pm^*, P_c(t)=\{{\min(k-1,K)}, k, \}\right)\Delta_k
\\
    &= \OO\left(\log\log T\right)
    \\&\quad
    + \sum_{k=2}^{L}\frac{\Delta_k}{2}\sum_{t = 1}^T\PP\left(\tilde\Pm(t)=\Pm^*, P_c(t)=\{{\min(k-1,K)}, k, \}\right)
\\
    &\leqslant \OO\left(\log\log T\right)
    \\&\quad
    + \sum_{k=2}^{L}\frac{\Delta_k}{2}\left(\frac{16}{\tilde\delta_k^*\tilde\Delta_{k}^2}\log T + \OO\left(\log\log T\right)\right)
\\
    &=\sum_{k=2}^{L}\frac{8\Delta_k}{\tilde\delta_k^*\tilde\Delta_{k}^2}\log T + \OO\left(\log\log T\right)
\\
    &=\OO\left(\frac{L}{\Delta}\log T\right)
    ,
\end{align*}
where for any index $k\geqslant2$
\begin{align*}
    \tilde\Delta_k &\defeq \tilde\Delta_{{\min(k-1, K)},{k}}
    &and&
    &\Delta&\defeq\min_{k\in\{2, \dots, K\}} \frac{\tilde\delta_k^*\tilde\Delta_{k}^2}{8\Delta_k},
\end{align*}
which concludes the proof.

\end{proof}

%%%%%%%%%%%%%%%%%%%%%%%%%%%%%%%%%%%%%%%%%%%%%%%%%%%%%%%%%%%%
%%%%%%%%%%%%%%%%%%%%%%%%%%%%%%%%%%%%%%%%%%%%%%%%%%%%%%%%%%%%
%%%%%%%%%%%%%%%%%%%%%%%%%%%%%%%%%%%%%%%%%%%%%%%%%%%%%%%%%%%%
%%%%%%%%%%%%%%%%%%%%%%%%%%%%%%%%%%%%%%%%%%%%%%%%%%%%%%%%%%%%
\section{\ouralgo{}'s Theoretical Results While Facing State-of-the-Art Click Models}\label{app:vs_others}

Here, we prove Corollaries \ref{theo:unirank_CM} and \ref{theo:unirank_PBM} and then discuss the relationship between our upper-bounds and the known lower bounds.

%%%%%%%%%%%%%%%%%%%%%%%%%%%%%%%%%%%%%%%%%%%%%%%%%%%%%%%%%%%%
%%%%%%%%%%%%%%%%%%%%%%%%%%%%%%%%%%%%%%%%%%%%%%%%%%%%%%%%%%%%
\subsection{Proof of Corollary \ref{theo:unirank_CM} (Upper-Bound on the Regret of \ouralgo{} when Facing CM$^*$ Click Model) }\label{app:regret_of_unirank_vs_CM}

Corollary \ref{theo:unirank_CM_detailed} is a more precise version of Corollary \ref{theo:unirank_CM}. Its proof consists in identifying the gaps $\tilde\delta_k^*$, $\tilde\Delta_k$, and $\Delta_k$, where $k$ is the index of an item.

\begin{corollary}[Facing CM$^*$ click model]\label{theo:unirank_CM_detailed}
Under the hypotheses of Theorem \ref{theo:unirank}, if the user follows CM with  probability $\theta_i$ to click on item $i$ when it is observed, then for any index $k\geqslant 2$,
\begin{align*}
    \tilde\delta_k^* &=\left(\theta_{k-1}+\theta_{k}-\theta_{k-1}\theta_{k}\right)\prod_{\ell=1}^{k-2}\left(1-\theta_{{\ell}}\right)
    && \text{if }k\leqslant K,\\
    \tilde\delta_k^* &=\frac{1}{2}\left(\theta_{K}+\theta_{k}\right)\prod_{\ell=1}^{K-1}\left(1-\theta_{{\ell}}\right)
    && \text{if }k\geqslant K+1,\\
    \tilde\Delta_k &\geqslant\frac{\theta_{{\min(K,k-1)}}-\theta_{k}}{\theta_{{\min(K,k-1)}}+\theta_{k}},\\
    \Delta_k &= 0
    && \text{if }k\leqslant K,\\
    \Delta_k &=\left(\theta_{K}-\theta_{k}\right)\prod_{\ell=1}^{K-1}\left(1-\theta_{{\ell}}\right)
    && \text{if }k\geqslant K+1.
\end{align*}
Hence, \ouralgo{} fulfills
\begin{align*}
R(T)
&\leqslant
\sum_{k=K+1}^L16\frac{\theta_K+\theta_k}{\theta_K-\theta_k}\log T
+ \OO\left(\log\log T\right)
\\&
=\OO\left((L-K)\frac{\theta_K+\theta_{K+1}}{\theta_K-\theta_{K+1}}\log T\right).
\end{align*}
\end{corollary}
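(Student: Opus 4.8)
The plan is to specialize the generic bound of \cref{theo:unirank}, namely $R(T) \leqslant \sum_{k=2}^L \frac{8\Delta_k}{\tilde\delta_k^*\tilde\Delta_k^2}\log T + \OO(\log\log T)$, by evaluating the three gaps $\Delta_k$, $\tilde\Delta_k$, and $\tilde\delta_k^*$ under the cascading model. Every computation reuses the per-recommendation click formulas established in the proof of \cref{lem:pbm_and_cm_are_unimodal}: when item $i$ is displayed at position $k$, $\PP_\av(c_i = 1, c_j = 0) = \prod_{p=1}^{k-1}(1-\theta_{a_p})\theta_i$, and the expected reward of a recommendation is $\mu_\av = 1 - \prod_{p=1}^K(1-\theta_{a_p})$.

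First I would settle $\Delta_k$ and $\tilde\Delta_k$. Because $\mu_\av$ depends only on the \emph{set} of displayed items, swapping two displayed items cannot change it; hence $\Delta_k = 0$ for all $k \leqslant K$. For $k \geqslant K+1$, exchanging the unseen item $k$ for item $K$ at position $K$ while leaving positions $1,\dots,K-1$ intact gives $\Delta_k = \prod_{\ell=1}^{K-1}(1-\theta_\ell)(\theta_K - \theta_k)$. For $\tilde\Delta_k = \min_\av \tilde\Delta_{\ell,k}(\av)$ with $\ell = \min(k-1,K)$, the key observation---already visible in the proof of \cref{lem:pbm_and_cm_are_unimodal}---is that the factor $\tfrac12\big(\prod_{p<k}(1-\theta_{a_p}) + \prod_{p<l}(1-\theta_{a_p})\big)$ common to the numerator and denominator of $\tilde\Delta_{i,j}(\av)$ cancels, leaving $\tilde\Delta_{i,j}(\av) = \frac{\theta_i - \theta_j}{\theta_i + \theta_j}$ independently of $\av$ (the same value arises when only one of the two items is displayed). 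Thus $\tilde\Delta_k = \frac{\theta_\ell - \theta_k}{\theta_\ell + \theta_k}$, so the inequality claimed in the corollary in fact holds with equality.

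The delicate quantity is $\tilde\delta_k^*$. Here I would first note that exactly one neighbor of $\Pm^*$ places $\ell$ and $k$ in a common recommended subset: the merge of $\{k-1\}$ and $\{k\}$ when $k \leqslant K$, and the addition of $k$ to $\{K\}$ when $k \geqslant K+1$; the minimum defining $\tilde\delta_k^*$ is therefore attained at this neighbor. I would then compute $\PP[c_\ell(t) \neq c_k(t)]$ by averaging over the uniform recommendation, i.e. over the two equiprobable orderings of the critical pair. For $k \leqslant K$ both items reach the two consecutive positions with probability $\prod_{\ell'=1}^{k-2}(1-\theta_{\ell'})$, and symmetrizing the single-click probability over the orderings yields $\theta_{k-1} + \theta_k - \theta_{k-1}\theta_k$; for $k \geqslant K+1$ exactly one of the two items occupies position $K$, giving $\tfrac12(\theta_K + \theta_k)\prod_{\ell'=1}^{K-1}(1-\theta_{\ell'})$.

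Finally I would substitute into the regret sum. The terms with $k \leqslant K$ vanish since $\Delta_k = 0$, and for $k \geqslant K+1$ the factor $\prod_{\ell=1}^{K-1}(1-\theta_\ell)$ cancels between $\Delta_k$ and $\tilde\delta_k^*$, collapsing $\frac{8\Delta_k}{\tilde\delta_k^*\tilde\Delta_k^2}$ to $\frac{16(\theta_K + \theta_k)}{\theta_K - \theta_k}$; summing over $k \in \{K+1,\dots,L\}$ and bounding each summand by its value at $k = K+1$ (where $\theta_k$ is largest) produces the announced $\OO\big((L-K)\frac{\theta_K + \theta_{K+1}}{\theta_K - \theta_{K+1}}\log T\big)$ rate. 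The main obstacle is the correct symmetrized evaluation of $\tilde\delta_k^*$ together with the recognition that $\tilde\Delta_{i,j}$ is $\av$-independent under CM; once these two facts are in hand, the remaining steps are routine cancellations.
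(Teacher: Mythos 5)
Your proposal follows essentially the same route as the paper's proof: specialize the generic bound of \cref{theo:unirank} by evaluating the three gaps under CM, cancel the common survival factor $\prod_{\ell=1}^{K-1}(1-\theta_\ell)$ between $\Delta_k$ and $\tilde\delta_k^*$, and bound each surviving term by its value at $k=K+1$. You are in fact more explicit than the paper, which dismisses $\tilde\delta_k^*$ and $\Delta_k$ as ``a straightforward computation'': your identification of the unique neighbor of $\Pm^*$ gathering the pair $(\min(k-1,K),k)$, the symmetrized single-click probability $\theta_{k-1}+\theta_k-\theta_{k-1}\theta_k$ for $k\leqslant K$, the value $\frac12(\theta_K+\theta_k)\prod_{\ell=1}^{K-1}(1-\theta_\ell)$ for $k\geqslant K+1$, and the vanishing of $\Delta_k$ for $k\leqslant K$ because $\mu_\av$ depends only on the displayed set, all match the stated values, and the final substitution is the paper's.

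One sub-claim is wrong, though harmlessly so. You assert that the factor common to the numerator and denominator of $\tilde\Delta_{i,j}(\av)$ cancels, leaving $\tilde\Delta_{i,j}(\av)=\frac{\theta_i-\theta_j}{\theta_i+\theta_j}$ \emph{independently of} $\av$, so that the corollary's inequality ``holds with equality''. It does not cancel when both items are displayed: with $i$ at position $u$ and $j$ at position $v>u$, setting $A=\prod_{c=1}^{u-1}(1-\theta_{a_c})$ and $B=\prod_{c=u+1}^{v-1}(1-\theta_{a_c})$, the paper's computation gives
\begin{equation*}
\tilde\Delta_{i,j}(\av)=\frac{A\frac{1+B}{2}\left(\theta_i-\theta_j\right)}{A\frac{1+B}{2}\left(\theta_i+\theta_j\right)-AB\,\theta_i\theta_j},
\end{equation*}
whose denominator carries the extra term $-AB\,\theta_i\theta_j$ (under CM the two clicks cannot co-occur, which shrinks the conditioning event). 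Since $B>0$ (all $\theta$'s lie strictly below $1$), the ratio \emph{strictly} exceeds $\frac{\theta_i-\theta_j}{\theta_i+\theta_j}$ whenever both items are shown; equality holds only in the one-displayed case, as in the proof of \cref{lem:pbm_and_cm_are_unimodal}. Your equality statement about the minimum $\tilde\Delta_k$ itself is consequently true when $L>K$ (the minimum is attained at recommendations displaying exactly one item of the pair), but not for the reason you give, and it fails in the degenerate case $L=K$. Because the regret bound uses only $\tilde\Delta_k\geqslant\frac{\theta_{\min(K,k-1)}-\theta_k}{\theta_{\min(K,k-1)}+\theta_k}$ and your (mistaken) exact value is a valid lower bound for every $\av$, the conclusion $R(T)\leqslant\sum_{k=K+1}^L16\frac{\theta_K+\theta_k}{\theta_K-\theta_k}\log T+\OO(\log\log T)$ and the $\OO\left((L-K)\frac{\theta_K+\theta_{K+1}}{\theta_K-\theta_{K+1}}\log T\right)$ rate go through unchanged; you should simply drop or correctly justify the ``equality'' remark.
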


\begin{proof}[Proof of Corollary \ref{theo:unirank_CM_detailed}]
Values $\tilde\delta_k^*$ and $\Delta_k$ derive from a straightforward computation given CM model.

Let us prove the lower-bound on $\tilde\Delta_k$. Let $i$ and $j$ be two items such that $i\neq j$. Let $\av$ be a recommendation such that $\PP(c_i(t)\neq c_j(t)\mid \av(t)=\av)> 0$.

Without loss of generality, assume $i$ appears in $\av$ in position $k$, and if $j$ appears in $\av$, it is in a position $\ell>k$. Then
\begin{align*}
\tilde\Delta_{i,j}(\av)
    &=\frac{A\frac{1+B}{2}(\theta_i-\theta_j)}{A\frac{1+B}{2}(\theta_i+\theta_j)-AB\theta_i\theta_j}
    \geqslant\frac{\theta_i-\theta_j}{\theta_i+\theta_j},
\end{align*}
with $A\defeq\prod_{c=1}^{k-1}\left(1-\theta_{a_{c}}\right)$
and $B\defeq\prod_{c=k+1}^{\ell-1}\left(1-\theta_{a_{c}}\right)$ if $j$ appears in $\av$ and 0 otherwise.

Hence the lower-bounding values for $\tilde\Delta_k$, by noting that the term $A$ is lower-bounded by $\prod_{\ell=1}^{K-1}\left(1-\theta_{{\ell}}\right).$

Regarding the last formula in Lemma \ref{theo:unirank_CM_detailed}, it derives from the fact that
$\frac{\theta_{K}+\theta_{k}}{\theta_{K}-\theta_{k}}$
is maximized when $\theta_k$ is maximized, meaning $k=K+1$.
\end{proof}

%%%%%%%%%%%%%%%%%%%%%%%%%%%%%%%%%%%%%%%%%%%%%%%%%%%%%%%%%%%%
%%%%%%%%%%%%%%%%%%%%%%%%%%%%%%%%%%%%%%%%%%%%%%%%%%%%%%%%%%%%
\subsection{Proof of Corollary \ref{theo:unirank_PBM} (Upper-Bound on the Regret of \ouralgo{} when Facing PBM$^*$ Click Model) }\label{app:regret_of_unirank_vs_PBM}

Corollary \ref{theo:unirank_PBM_detailed} is a more precise version of  Corollary \ref{theo:unirank_PBM}. Its proof consists in identifying the gaps $\tilde\delta_k^*$, $\tilde\Delta_k$, and $\Delta_k$, where $k$ is the index of an item.

\begin{corollary}[Facing PBM$^*$ click model]\label{theo:unirank_PBM_detailed}
Under the hypotheses of Theorem \ref{theo:unirank}, if the user follows PBM with  the probability $\theta_i$ of clicking on item $i$ when it is observed and  the probability $\kappa_k$ of observing the position $k$, then for any index $k\geqslant 2$,
\begin{align*}
    \tilde\delta_k^* &=\frac{1}{2}\left(\theta_{{k-1}}+\theta_{k}\right)\left(\kappa_{{k-1}}+\kappa_{k}\right) - 2 \theta_{{k-1}}\theta_{k}\kappa_{{k-1}}\kappa_{k}
    && \text{if }k\leqslant K,\\
    \tilde\delta_k^* &=\frac{1}{2}\left(\theta_{K}+\theta_{k}\right)\kappa_K
    && \text{if }k\geqslant K+1,\\
    \tilde\Delta_k &\geqslant\frac{\theta_{{\min(K,k-1)}}-\theta_{k}}{\theta_{{\min(K,k-1)}}+\theta_{k}},\\
    \Delta_k &=\left(\theta_{{k-1}}-\theta_{k}\right)\left(\kappa_{{k-1}}-\kappa_{k}\right)
    && \text{if }k\leqslant K,\\
    \Delta_k &=\left(\theta_{K}-\theta_{k}\right)\kappa_K
    && \text{if }k\geqslant K+1.
\end{align*}

Hence, \ouralgo{} fulfills
\begin{align*}
R(T)
&\leqslant \sum_{k=2}^{K}\frac{8(\kappa_{k-1}-\kappa_k)(\theta_{{k-1}}+\theta_{k})^2}{\tilde\delta_k^*(\theta_{{k-1}}-\theta_{k})}\log T
+ \sum_{k=K+1}^L16\frac{\theta_{K}+\theta_{k}}{\theta_{K}-\theta_{k}}\log T
+ \OO\left(\log\log T\right)
\\&
= \OO\left(\frac{L}{\Delta}\log T\right),
\end{align*}
where
$\Delta \defeq \min \{
\min_{k \in\{2,\dots,K\}} \frac{\tilde\delta_k^*(\theta_{k-1}-\theta_k)}{(\kappa_{k-1}-\kappa_k)(\theta_{k-1}+\theta_k)^2},$
$
\min_{k \in\{K+1,\dots,L\}} \frac{\theta_K-\theta_k}{\theta_K+\theta_k}
\}$.
\end{corollary}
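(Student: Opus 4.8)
The plan is to read Corollary~\ref{theo:unirank_PBM_detailed} as a direct instantiation of Theorem~\ref{theo:unirank}. The theorem already delivers the regret in the closed form $R(T)\leqslant\sum_{k=2}^{L}\frac{8\Delta_k}{\tilde\delta_k^*\tilde\Delta_k^2}\log T+\OO(\log\log T)$, so the only work left is to evaluate the three PBM-specific quantities $\tilde\delta_k^*$, $\tilde\Delta_k$ and $\Delta_k$ for the order $1\succ\dots\succ K\succ[L]\setminus[K]$ and substitute. Since $\rho(\av,p)=\kappa_p\theta_{a_p}$ and clicks are independent across positions under PBM, each of these is a short computation once the relevant neighbor partition of $\Pm^*$ and the positions occupied by the two compared items are identified.

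First I would compute $\tilde\delta_k^*$ and $\Delta_k$. For $k\leqslant K$ the unique neighbor of $\Pm^*$ gathering items $k-1$ and $k$ is the merge $(\{1\},\dots,\{k-1,k\},\dots,\{K\},[L]\setminus[K])$, in which those two items occupy positions $k-1$ and $k$; averaging $\PP[c_{k-1}(t)\neq c_k(t)]$ over the two equiprobable orderings gives $\tilde\delta_k^*=\frac12(\theta_{k-1}+\theta_k)(\kappa_{k-1}+\kappa_k)-2\theta_{k-1}\theta_k\kappa_{k-1}\kappa_k$, matching the expression for $\tilde\delta_{i,j}(\av)$ from the PBM illustration, while the reward gap collapses to $\Delta_k=(\kappa_{k-1}-\kappa_k)(\theta_{k-1}-\theta_k)$. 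For $k\geqslant K+1$ the relevant neighbor is instead $(\{1\},\dots,\{K-1\},\{K,k\},\dots)$: here exactly one of $K,k$ is displayed at position $K$ each round, so a one-line computation yields $\tilde\delta_k^*=\frac12\kappa_K(\theta_K+\theta_k)$ and $\Delta_k=\kappa_K(\theta_K-\theta_k)$.

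Next I would lower-bound $\tilde\Delta_k=\min_{\av}\tilde\Delta_{\min(k-1,K),k}(\av)$. The proof of Lemma~\ref{lem:pbm_and_cm_are_unimodal} shows that under PBM $\tilde\Delta_{i,j}(\av)=\frac{\theta_i-\theta_j}{\theta_i+\theta_j}\,d_{i,j}(\av)$ with $d_{i,j}(\av)=\bigl(\tfrac12(\kappa_p+\kappa_q)(\theta_i+\theta_j)\bigr)/\bigl(\tfrac12(\kappa_p+\kappa_q)(\theta_i+\theta_j)-2\kappa_p\kappa_q\theta_i\theta_j\bigr)\geqslant1$, where $p,q$ are the positions of $i,j$; with $i\succ j$ this gives $\tilde\Delta_{i,j}(\av)\geqslant\frac{\theta_i-\theta_j}{\theta_i+\theta_j}$, the bound being attained when the less attractive item is left undisplayed (formally $\kappa_q=0$, which forces $d_{i,j}(\av)=1$). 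Hence $\tilde\Delta_k\geqslant\frac{\theta_{\min(K,k-1)}-\theta_k}{\theta_{\min(K,k-1)}+\theta_k}$. Plugging $\tilde\delta_k^*$, this lower bound on $\tilde\Delta_k$, and $\Delta_k$ into the theorem and simplifying the $k\leqslant K$ and $k\geqslant K+1$ summands separately produces the two displayed sums; each of the $L-K$ terms of the second sum reduces to $\frac{16(\theta_K+\theta_k)}{\theta_K-\theta_k}$, and bounding every summand by $\log T/\Delta$ with $\Delta$ the stated minimum gives the compact $\OO(L/\Delta\log T)$ form.

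I do not expect a genuinely hard step: the computation is elementary PBM algebra carried out over finitely many configurations. The only points requiring care are the bookkeeping of which positions the two compared items occupy in each neighbor partition (so that $\tilde\delta_k^*$ and $\Delta_k$ use the correct $\kappa_p$ values), keeping the $k\leqslant K$ and $k\geqslant K+1$ regimes cleanly separated, and confirming that the minimiser of $\tilde\Delta_{i,j}(\av)$ is indeed the single-item configuration so that the clean lower bound $\frac{\theta_i-\theta_j}{\theta_i+\theta_j}$ is legitimate.
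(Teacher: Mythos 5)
Your proposal is correct and follows essentially the same route as the paper's proof: instantiate the generic bound $R(T)\leqslant\sum_{k=2}^{L}\frac{8\Delta_k}{\tilde\delta_k^*\tilde\Delta_k^2}\log T+\OO(\log\log T)$ from Theorem \ref{theo:unirank}, compute $\tilde\delta_k^*$ and $\Delta_k$ by direct PBM algebra on the relevant neighbor partitions of $\Pm^*$, and lower-bound $\tilde\Delta_{i,j}(\av)$ by $\frac{\theta_i-\theta_j}{\theta_i+\theta_j}$ via the ratio $d_{i,j}(\av)\geqslant1$, with the single-displayed-item case (equivalently $\kappa=0$ for undisplayed positions) attaining the bound, exactly as in Appendix \ref{app:regret_of_unirank_vs_PBM}. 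Your treatment is in fact slightly more explicit than the paper's, which dismisses $\tilde\delta_k^*$ and $\Delta_k$ as ``straightforward computation,'' whereas you correctly identify the merge neighbor for $k\leqslant K$ and the add-item neighbor $(\{1\},\dots,\{K-1\},\{K,k\},\dots)$ for $k\geqslant K+1$.
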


%\commentrg{Il faudrait simplifier le $\min_{k \in\{K+1,\dots,L\}} \frac{\theta_K-\theta_k}{\theta_K+\theta_k}$ comme on l'a fait pour CM.}

\begin{proof}[Proof of Corollary \ref{theo:unirank_PBM_detailed}]
Values $\tilde\delta_k^*$ and $\Delta_k$ derive from a straightforward computation given PBM model.

Let us prove the lower-bound on $\tilde\Delta_k$. Let $i$ and $j$ be two items such that $i\neq j$. Let $\av$ be a recommendation such that $\PP(c_i(t)\neq c_j(t)\mid \av(t)=\av)> 0$.

If both $i$ and $j$ appear in $\av$, denote $k<\ell$ these positions. Then
\begin{align*}
\tilde\Delta_{i,j}(\av)
    &=\frac{\frac{1}{2}(\kappa_k+\kappa_\ell)(\theta_i-\theta_j)}{\frac{1}{2}(\kappa_k+\kappa_\ell)(\theta_i+\theta_j)-2\kappa_k\kappa_\ell\theta_i\theta_j}
    \geqslant\frac{\theta_i-\theta_j}{\theta_i+\theta_j}.
\end{align*}

If only one of both items $i$ and $j$ appears in $\av$ then $\tilde\Delta_{i,j}(\av)=\frac{\theta_i-\theta_j}{\theta_i+\theta_j}$.

Hence for any index $k\geqslant 2$,
$\tilde\Delta_k \geqslant\frac{\theta_{{\min(K,k-1)}}-\theta_{k}}{\theta_{{\min(K,k-1)}}+\theta_{k}}.$
\end{proof}

\forlater{
A été en partie intégré dans la section principale de l'article. 
Discussion plus honnete
\\Mais pas la place. Et uniquement sur la défensive...
\\$\longrightarrow$ Pour plus tard (à intégrer avec "on élimine" $\log L$ et on retrouve $KL$ comme les autres dans le pire cas, et comparaison avec borne inf.)
%%%%%%%%%%%%%%%%%%%%%%%%%%%%%%%%%%%%%%%%%%%%%%%%%%%%%%%%%%%%
%%%%%%%%%%%%%%%%%%%%%%%%%%%%%%%%%%%%%%%%%%%%%%%%%%%%%%%%%%%%
\subsection{Comparison to Known Lower-Bounds}\label{app:lower-bound}

\cite{TopRank} prove an $\Omega(\sqrt{KLT})$ lower-bound for an online learning to rank setting similar to ours. However this proof builds upon an instance for which the attraction probability of items may take only two values, meaning that the order on items is not total. Our proof builds upon the total order on items to get a $\OO(L\log T)$ upper-bound on the regret. 

Furthermore, when looking only at PBM or CM setting, $\theta(L/\Delta\log T)$ lower and upper-bounds are usual. Typically, with CM click model, \cite{Kveton2015a} proved an  $\Omega((L-K)/\Delta\log T)$ lower-bound and proposed an algorithm with a $\OO((L-K)/\Delta\log T)$ regret. Similarly, \citep{Katariya2016} (respectively \cite{Lagree2016}) did the same for the \emph{dependent click model} (resp. the PBM click-model, assuming $\kappa_1,\dots,\kappa_K$ known).

\textbf{Discussion à intégrer}

From a theoretical point of view, UniRank's main limitation is the factor of the $\log\log T$ terms. With current version of the proof it scales in the size of the handled graph, which is of the order of the number of partitions of $[L]$. However, we believe it to be an artifact of the proof which inherits from OSUB's proof: we upper-bound the time spent on each sub-optimal node of the graph independently, while the algorithm builds upon the statistics $\hat{s}_{i,j}(t)$ which are shared between nodes. We believe a more refined proof would recover a factor polynomial in $L$ and $K$.

Another limitation of current analysis is the requirement for the order on items to be strict to obtain the factor $L$ in front of $\log T$. Without this assumption, we would recover a factor $LK$. Note that \citep{Sentenac2021}'s theoretical analysis faces the same kind of limitation, and note also that the lower-bound in \citep{TopRank} is based on an example which contradicts this assumption. In our point of view this remark is exhibiting more an interesting research perspective than a limitation: how the assumption "\emph{two items of the optimal set of items cannot be equivalent}" impact the difficulty of a bandit setting.

Finally, from a practical point of view, UniRank has a higher computational complexity than TopRank as it updates its partition (and consider merging some subsets) at each iteration, while TopRank updates its partition only when an edge is added to the leaned binary relation between items.
}

\forlater{
\section{Comparison UniRank}

  \begin{figure*}[!b]%
    \centering%
    \begin{subfigure}[b]{0.49\linewidth}
        \centering
    \includegraphics[width=\linewidth]{Graph/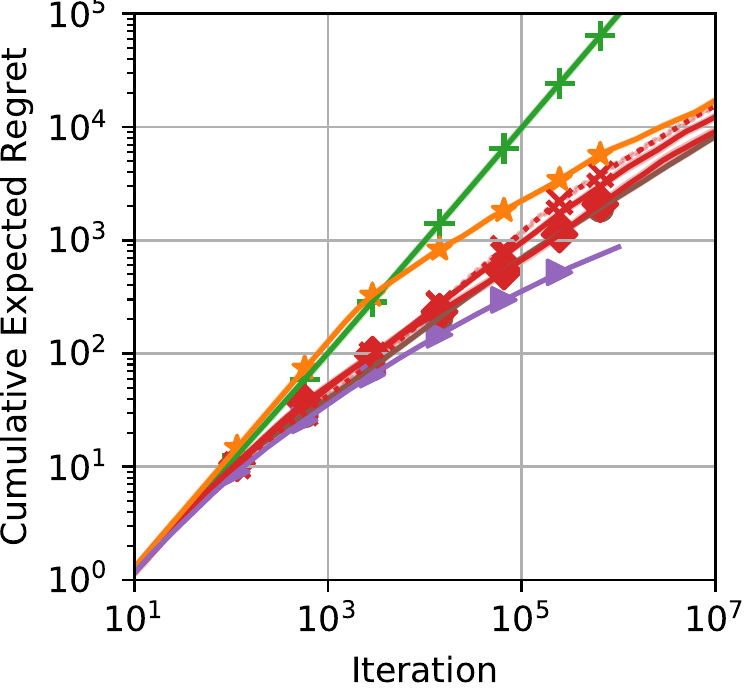}
        \caption{Yandex PBM}
        \label{fig:compareUni_Yandex_PBM}
    \end{subfigure}%
    \hfill%
    \begin{subfigure}[b]{0.49\linewidth}
        \centering
    \includegraphics[width=\linewidth]{Graph/CM_K_5_opponent_compareUni.pdf}
        \caption{Yandex CM}
        \label{fig:compareUni_Yandex_CM}
    \end{subfigure}%
    \hfill%
    
    \begin{subfigure}[b]{0.49\linewidth}
        \centering
    \includegraphics[width=\linewidth]{Graph/KDD_PBM_opponent_compareUni.pdf}
        \caption{KDD }
        \label{fig:compareUni_KDD}
    \end{subfigure}
    \hfill%
    \begin{subfigure}[b]{0.49\linewidth}
        \centering
    \includegraphics[width=\linewidth]{Graph/Neurips_legend_compareUni.pdf}
        \vspace{1cm}
    \end{subfigure}%
    %\hfill%
    
    \caption{Cumulative expected regret at T=$10^7$  on Yandex and KDD. The plotted surfaces correspond to the cumulative expected regret at iteration T=$10^7$ averaged over 20 independent sequences of recommendations per query (in total: 160 sequences for KDD and 200 sequences for each Yandex).Comparison of the different flavour of UniRank. \label{fig:compare_Uni}}
    \end{figure*}    
     
}

%%%%%%%%%%%%%%%%%%%%%%%%%%%%%%%%%%%%%%%%%%%%%%%%%%%%%%%%%%%%
%%%%%%%%%%%%%%%%%%%%%%%%%%%%%%%%%%%%%%%%%%%%%%%%%%%%%%%%%%%%
%%%%%%%%%%%%%%%%%%%%%%%%%%%%%%%%%%%%%%%%%%%%%%%%%%%%%%%%%%%%

\forlater{La preuve triche: je ne tiens pas compte de l'initialisation, ie. de la période où $t_{i,j}(t)=0$ et du coup $\bar{\bar s}_{j(\Pm),i(\Pm)}(t)=-1$ en étant "forcé"}

\forlater{
% version détaillées de UniRank
\begin{algorithm}[H]
\caption{\ouralgo{}: \ouralgolong{}}
\begin{algorithmic}[1]
%\Require 
\REQUIRE number of items $L$, number of positions $K$
\FOR{$t =  1, 2, \dots$}
    \STATE
    \STATE\label{line:leadbeg}\COMMENT{leader-partition elicitation}
    %\STATE $\tilde{\Pm}(t) = \left(\tilde{P}_1(t), \dots, \tilde{P}_{\tilde{d}}(t)\right) \gets topological$ 
    \STATE $\tilde{d} \gets 0$; $R \gets [L]$
    \REPEAT
        \STATE $\displaystyle B \gets \left\{j \in R: \forall i \in R, {\hat s}_{i,j}(t) - \sqrt{\frac{\log \log t}{T_{i,j}(t)}} < 0 \right\}$
        \STATE \textbf{if} {$B \neq \varnothing$} \textbf{then} $\tilde{d} \gets \tilde{d}+1$ ; $\displaystyle \tilde{P}_{\tilde{d}}(t) \gets B$
        \textbf{end if}
    \UNTIL{$R = \varnothing$ \OR{} $B=\varnothing$ \OR{} $\left|\bigcup_{\tilde{c}=1}^{\tilde{d}}\tilde{P}_{\tilde{c}}(t)\right| \geqslant K$}
    \STATE \textbf{if} {$R \neq \varnothing$} \textbf{then} $\tilde{d} \gets \tilde{d}+1$ ; $\displaystyle \tilde{P}_{\tilde{d}}(t) \gets R$
    \textbf{end if}
    \label{line:leadend}
    \STATE
    \STATE\label{line:optbeg}\COMMENT{optimistic partition elicitation}
    \STATE $\tilde{c} \gets 1$;\qquad\qquad $d \gets 0$
    \WHILE{$\tilde{c} \leq \tilde{d}-2$}
        \STATE $d \gets d+1$
        \IF{$\min_{(i,j)\in \tilde{P}_{\tilde{c}}(t)\times \tilde{P}_{\tilde{c}+1}(t)} \bar{\bar s}_{j,i}(t) < 0$}
            \STATE\COMMENT{merge both subsets}
            \STATE  $\displaystyle P_d(t) \gets \tilde{P}_{\tilde{c}}(t) \cup \tilde{P}_{\tilde{c}+1}(t)$;\qquad\qquad$\tilde{c} \gets \tilde{c}+2$
        \ELSE
            \STATE\COMMENT{keep current subset untouched}
            \STATE  $\displaystyle P_d(t) \gets \tilde{P}_{\tilde{c}}(t)$;\qquad\qquad$\tilde{c} \gets \tilde{c}+1$
        \ENDIF
    \ENDWHILE
    \IF{$\tilde{c} = \tilde{d}-1$}
        \STATE $d \gets d+1$;\qquad\qquad $\displaystyle P_d(t) \gets \tilde{P}_{\tilde{d}-1}(t)$
        \IF{$\tilde{P}_{\tilde{d}}(t)\neq \varnothing$ \AND $\displaystyle\min_{(i,j)\in \tilde{P}_{\tilde{d} - 1}(t)\times \tilde{P}_{\tilde{d}}(t)} \bar{\bar s}_{j,i}(t) < 0$}
        \STATE\COMMENT{add best item from remaining ones}
            \STATE $\displaystyle P_d(t) \gets P_d(t) \cup \left\{\argmin_{j\in \tilde{P}_{\tilde{d}}(t)}\min_{i\in \tilde{P}_{\tilde{d} - 1}(t)} \bar{\bar s}_{j,i}(t)\right\}$
        \ENDIF
    \ENDIF
    \STATE \label{line:optend} $d(t) \gets d$
    \STATE
    \STATE\COMMENT{recommendation}
    \STATE \label{line:reco} choose $\av(t)$ uniformly at random in $\Ac\left(\Pm(t)\right)$  
    \STATE observe the clicks vector $\cv(t)$
\ENDFOR
\end{algorithmic}
\end{algorithm}
} 
%\section{You \emph{can} have an appendix here.}

%You can have as much text here as you want. The main body must be at most $8$ pages long.
%For the final version, one more page can be added.
%If you want, you can use an appendix like this one, even using the one-column format.
%%%%%%%%%%%%%%%%%%%%%%%%%%%%%%%%%%%%%%%%%%%%%%%%%%%%%%%%%%%%%%%%%%%%%%%%%%%%%%%
%%%%%%%%%%%%%%%%%%%%%%%%%%%%%%%%%%%%%%%%%%%%%%%%%%%%%%%%%%%%%%%%%%%%%%%%%%%%%%%

\end{document}